\documentclass{article}

\usepackage[preprint]{neurips_2024}
\usepackage{xcolor}         
\usepackage{tikz}
\usepackage{svg-extract}
\usetikzlibrary{positioning,angles,quotes,decorations.pathreplacing,calc,tikzmark}

\newcommand{\KL}{\mathrm{KL}}

\usepackage{subcaption}
\usepackage{graphicx}
\newlength{\panelheight}
\graphicspath{{./}}
\usepackage{float}
\usepackage{environ}
\usepackage{svg}
\usepackage{amsmath}
\usepackage{comment}
\usepackage{amssymb}
\usepackage{natbib} 
\everypar{\looseness=-1}
\usepackage{adjustbox}
\usepackage{booktabs}
\usepackage{tabularx}
\usepackage{mathtools}
\mathtoolsset{showonlyrefs}
\usepackage{bm}
\usepackage{amsthm}
\newtheorem*{remark}{Remark}
\newtheorem{theorem}{Theorem}[section]
\newtheorem{lemma}[theorem]{Lemma}
\newtheorem{definition}[theorem]{Definition}
\newtheorem{assumption}[theorem]{Assumption}
\newtheorem{proposition}[theorem]{Proposition}

\newtheorem*{objective@thm}{Objective}
\makeatletter
\newenvironment{objective}
  {\phantomsection\def\@currentlabel{Objective}\begin{objective@thm}}
  {\end{objective@thm}}
\makeatother

\usepackage{algorithm,algpseudocode}
\algrenewcommand\algorithmiccomment[1]{\hfill \textcolor{blue}{// #1}}

\usepackage{mathdots}
\usepackage{multicol}

\newcommand{\cnt}{\text{cnt}}

\newcommand{\mc}{\mathcal}
\newcommand{\dtv}[2]{\left\|#1,\, #2\right\|_{\text{TV}}}

\usepackage[utf8]{inputenc} 
\usepackage[T1]{fontenc}    
\usepackage{url}            
\usepackage{amsfonts}       
\usepackage{nicefrac}       
\usepackage{microtype}      
\usepackage{hyperref}       
\allowdisplaybreaks[4]

\addtocontents{toc}{\protect\setcounter{tocdepth}{-3}}

\title{Provably Safe Sim-to-Real Transfer}

\author{%
  Tingting Ni \\
  SYCAMORE, EPFL\\
  \texttt{tingting.ni@epfl.ch} \\
   \And
   Maryam Kamgarpour \\
  SYCAMORE, EPFL \\
  maryam.kamgarpour@epfl.ch\\
}

\begin{document}

\maketitle

\begin{abstract}
We address safe sim-to-real transfer, in which an agent leverages an imperfect simulator and limited real-world interaction while ensuring safety throughout data collection in the real system. This problem arises in applications such as robotics and healthcare: simulators provide cheap data, but sim-to-real mismatch makes direct transfer unreliable, and collecting real-world data to correct this mismatch must itself be safe. Moreover, deployment objectives may vary across tasks, making it costly to collect new data for each reward function. We therefore formulate safe sim-to-real transfer as a reward-free safe reinforcement learning (RL) problem, in which data are collected once and reused to plan for arbitrary reward functions. We develop a computationally efficient algorithm that identifies where the simulator and real dynamics differ, uses certified simulator transitions where they are reliable, and estimates mismatched transitions from safely collected data. With high probability, every policy deployed during learning is feasible, and the collected data support the computation of a feasible and near-optimal policy for any reward function. When the simulator is uninformative, our algorithm recovers online reward-free safe RL while improving the best-known sample complexity by a factor of \(\widetilde{\Theta}(H/\xi^2)\), where \(\xi\) is the safety margin of a baseline policy. When the simulator is accurate on most transitions, this improvement grows to \(\widetilde{\Theta}(H^2|\mc S||\mc A|/(\xi^2|\mc B|))\), where \(|\mc B|\) denotes the  size of the sim-to-real mismatch region.
\end{abstract}
\section{Introduction}
Over the last decade, reinforcement learning (RL) has achieved remarkable success in domains ranging from games and robotics to the natural sciences~\citep{mnih2013playing,silver2016mastering,ouyang2022training,lee2020learning,degrave2022magnetic}. Despite this progress, deploying RL in real world remains challenging due to two central obstacles: sample complexity and safety. Learning directly on the target system can require many interactions, each of which may be costly and subject to safety requirements, such as collision avoidance in robotics~\citep{haddadin2009requirements} or compliance with operational constraints in healthcare~\citep{kyrarini2021survey}.

A common way to reduce real-world interaction is to use a simulator, which can provide cheap samples from an approximate model of the target system. In robotics and control, simulators are often constructed from physical models, system knowledge, or prior calibration. However, the real world is difficult to model perfectly, leading to sim-to-real mismatch~\citep{tan2018sim,peng2018sim}. Consequently, policies optimized in simulation may be suboptimal in the real world. In safety-critical applications, this mismatch is even more problematic since a policy that appears safe in simulation may violate safety constraints on the real system. We model such safety requirements using constrained Markov decision processes (CMDPs), where the goal is to maximize reward while satisfying constraints. This raises the question: 
\begin{center}
    \emph{how can we use an imperfect simulator to learn a near-optimal feasible policy in the real world?\looseness-1}
\end{center}

Existing approaches address this question in different ways. A first line of work enlarges the set of environments considered during simulation training. For example, \citet{as2025spidr} study domain randomization for simulated CMDPs and optimize policies under worst-case constraints, while \citet{zhang2024distributionally} study distributionally robust constrained RL. These methods can be viewed as one-shot transfer approaches where the agent learns a policy in simulation and then deploys it on the real system. However, they can be conservative, since the learned policy optimizes worst-case or distributional performance rather than performance on the specific real system~\citep{ye2023power}. To reduce this conservatism, one can collect real-world data to fine tune the policy learned in simulation. This data-collection process must itself ensure constraint satisfaction, a requirement commonly referred to as safe exploration~\citep{koller2018learning}.\looseness-1

This motivates another line of work, known as meta RL and its safe variants, which trains agents over a distribution of simulated CMDPs. When deployed on a real system assumed to be drawn from the same distribution, these agents can fine tune using relatively few real-world interactions while ensuring safe exploration~\citep{ni2026constrained,xu2026efficient}. Among these works, \citet{ni2026constrained} show that learning a near-optimal policy under safe exploration requires only \(\tilde{\mc O}(\varepsilon^{-2}\mc C(\mc D))\) real-world samples, with a matching lower bound, where \(\mc C(\mc D)\) measures the complexity of the environment distribution \(\mc D\). This quantity is small when $\mc D$ concentrates on a low-dimensional family of environments, but can still suffer from the curse of dimensionality when distribution $\mc D$ spreads broadly across the environment space, such as uniform distribution. 

Despite these advances, both one-shot transfer methods and safe meta RL rely on a coverage assumption: the real system must be drawn from, or at least well covered by, the prescribed family or distribution of simulated CMDPs. This assumption can fail when the simulator is biased. For example, a robot may be trained on many simulated terrains, yet encounter a real floor whose wear or dust was not anticipated by the designer. Such discrepancies may be discovered only through real-world interaction and therefore cannot be incorporated into the simulator beforehand. Recent sim-to-real RL methods avoid this coverage assumption by using the simulator to guide real-world data collection, either to correct simulator mismatch~\citep{qu2025hybrid,wu2026unified} or to learn an optimal real-world policy~\citep{wagenmaker2024overcoming}. However, these methods address unconstrained settings and may therefore collect unsafe data when applied to safety-critical systems such as autonomous vehicles, surgical robots, or power grids.\looseness-1

To address safety during real-world data collection, a separate line of work studies online safe RL. Prior model-based~\citep{yu2025improved,bura2022dope,liu2021efficient,wendl2026safe} and model-free~\citep{ni2025safe} methods provide high-probability guarantees for learning a near-optimal policy under safe exploration, but they typically assume a single prespecified reward. In practice, reward functions are often iteratively engineered or vary across deployments~\citep{jin2020reward,menard2021fast}; for example, in autonomous driving, different rewards may encode different target locations. Collecting new real-world data for each reward is therefore highly sample inefficient. Reward-free safe RL~\citep{miryoosefi2022simple,huang2023safe} addresses this issue by exploring without a prespecified reward and subsequently planning for arbitrary reward functions without additional interaction with the real system.However, \citet{miryoosefi2022simple} do not ensure safe exploration during data collection, while \citet{huang2023safe} ensure safe exploration but require solving a constrained nonconvex optimization problem to compute exploration policies. Moreover, these methods are fully online and do not leverage simulator information.\looseness-1

To answer the question raised above, we address a safe sim-to-real transfer framework that leverages information from an imperfect simulator, focuses on safe real-world exploration, and supports subsequent planning for arbitrary reward functions without additional real-world interaction. We summarize our contributions below:

1. We formulate safe sim-to-real transfer as a reward-free safe RL problem, where the agent uses a simulator and limited real-world interaction under safe exploration guarantees to support accurate reward-free planning. We propose a computationally efficient algorithm that identifies reliable simulator regions, uses certified simulator transitions to reduce real-world interaction, and corrects sim-to-real mismatch with collected real-world data.

2. For our algorithm, we provide high-probability guarantees for safe real-world exploration and accurate reward-free planning; see Theorem~\ref{thm:main}. The sample-complexity bound characterizes the benefit of simulator access through the size of the mismatch region and the separation gap. When the sim-to-real mismatch is large, our framework recovers the fully online setting of \citet{huang2023safe} and improves upon its sample-complexity bound. Additional comparisons with prior work are provided in Table~\ref{tab:comparison}.\looseness-1

3.  We validate our theoretical results in a safety-critical gridworld environment. Our experiments show that the benefit of using the simulator is larger when the sim-to-real mismatch is smaller.\looseness-1
\section{Problem setting}\label{sec:problem-setting}
We first review background on constrained Markov decision processes (CMDPs), then formally state the safe sim-to-real transfer problem and its underlying assumptions.

\textbf{Notation.} Let \(\mathbb N\) and \(\mathbb R\) denote the sets of natural and real numbers, respectively. For a set \(\mathcal X\), let \(\Delta(\mathcal X)\) denote the set of probability measures on \(\mathcal X\), and let \(|\mathcal X|\) denote its cardinality. For \(p,q\in\Delta(\mathcal X)\), where \(\mathcal X\) is finite, their total variation distance is defined as
\(
\|p, q\|_{\mathrm{TV}}\coloneqq\frac{1}{2}\sum_{x\in\mathcal X}|p(x)-q(x)|.
\)
For every positive integer \(m\), let \([m]\coloneqq\{1,\ldots,m\}\). For \(z\in\mathbb R\), define its positive part as \([z]_+\coloneqq\max\{z,0\}\). For \(x>0\), define \(\log_+(x)\coloneqq\max\{\log x,0\}\). For real numbers \(x\) and \(y\), \(x\wedge y\) and \(x\vee y\) denote \(\min\{x,y\}\) and \(\max\{x,y\}\), respectively.
\subsection{Constrained Markov Decision Processes}
We consider a CMDP defined by a tuple
\(\mathcal{M} = \left(\mathcal{S}, \mathcal{A}, H, P, s_1,\mathcal{F}, c\right),\)
where $\mathcal{S}$ and $\mathcal{A}$ are finite state and action spaces, and $H$ is the horizon. The transition dynamics are given by $P:=\{P_h\}_{h=1}^{H}$, where $P_h(s'| s,a)$ denotes the probability of transitioning from state $s$ to state $s'$ after taking action $a$ at timestep $h\in[H]$. Without loss of generality, the initial state is denoted by $s_1\in\mathcal{S}$.\footnote{As explained by \citet{fiechter1994efficient}, if the initial state is instead drawn from a distribution $s_1\sim\rho$, one can equivalently introduce an artificial initial state $s_0$ such that $P_0(\cdot\mid s_0,a):=\rho(\cdot)$ for every action $a$. This augments the state space and horizon each by one, and all bounds carry over with only this constant-size modification.} In addition, we consider a class of deterministic utility functions\footnote{Although we consider deterministic utility functions for notational simplicity, our results extend to bounded stochastic utilities. }
\[
\mathcal{F} := \left\{ f=\{f_h\}_{h=1}^{H} \,\middle|\, f_h:\mathcal{S}\times\mathcal{A}\to[0,1],\forall h\in[H] \right\}.
\]
Each $f\in\mathcal{F}$ measures the performance of a policy, which we introduce next. Among these utility functions, a constraint function $c\in\mathcal{F}$ encodes the safety requirement.

A Markov policy $\pi=\{\pi_h\}_{h=1}^H$ is a collection of mappings $\pi_h:\mathcal{S}\to\Delta(\mathcal{A})$, and we let $\Pi$ denote the set of all such policies. Given a utility function $f\in\mathcal{F}$, the agent interacts with $\mathcal{M}$ as follows. Starting from the initial state $s_1$, at each timestep $h$, it selects an action $a_h\sim\pi_h(\cdot\mid s_h)$, receives utility $f_h(s_h,a_h)$, and transitions to the next state $s_{h+1}\sim P_h(\cdot\mid s_h,a_h)$. To measure the cumulative utility collected by $\pi$ under dynamics $P$, we define the state value function as
\[
V_{f,h}^{P,\pi}(s) = \mathbb{E}_{P,\pi} [\sum_{h'=h}^{H} f_{h'}(s_{h'},a_{h'})\mid s_h = s ].
\]
We call a policy $\pi$ \emph{feasible} in $\mathcal{M}$ if its constraint value satisfies $V_{c,1}^{P,\pi}(s_1) \geq \ell$, where $\ell\in[0,H]$ is a prescribed safety threshold. The set of feasible policies is defined as
\[
\Pi^{P}_{\mathrm{feas}} \coloneqq \{\pi\in\Pi \mid V_{c,1}^{P,\pi}(s_1) \geq \ell\}.
\]
And we call $\pi$ \emph{strictly feasible} if $V_{c,1}^{P,\pi}(s_1) > \ell$.
\subsection{Safe sim-to-real transfer}
In safe sim-to-real transfer, the agent has full access to a simulator during learning, but the resulting policies are ultimately deployed in the real world. We consider multiple deployment tasks whose reward functions may vary over a family \(\mathcal F\), such as reaching different target locations. In contrast, the constraint function \(c\) is fixed as all tasks operate in the same physical system. For example, safety requirements such as collision avoidance in autonomous driving or joint limits in robotic manipulation remain unchanged across tasks. The goal is to exploit cheap simulator access and limited real-world interaction to compute feasible and near-optimal policies for any reward function in \(\mathcal F\). However, two challenges arise in this safe sim-to-real transfer setting.

First, the simulator is only an approximation of the real world, and this mismatch may cause a policy that is feasible or near-optimal in simulation to become unsafe or suboptimal in the real world. Simulator access alone is therefore insufficient, and the agent must collect real-world data to identify and correct the mismatch. Second, collecting a separate real-world dataset for every reward function in \(\mathcal F\) would be highly sample inefficient.

These considerations motivate a \emph{hybrid, reward-free safe RL} formulation. In particular, \emph{hybrid} refers to a setting in which the agent leverages both simulator information and real-world interactions~\citep{song2022hybrid,xie2021policy}, whereas in online RL the agent learns solely through real-world interactions. The \emph{reward-free} perspective allows the agent to explore the environment without a prespecified reward, and later use the learned information to plan for arbitrary reward functions without further real-world interaction~\citep{miryoosefi2022simple,huang2023safe}. Compared with learning for a fixed reward, this imposes a stronger
exploration requirement because the collected data must support planning uniformly over all $r \in \mathcal F$. Combining these two perspectives, we formulate the learning problem as follows.\looseness-1

\paragraph{Interection setup.}
During learning, the agent can interact with the real world
\(
\mathcal M^{\mathrm{real}}=(\mathcal S,\mathcal A,H,P^{\mathrm{real}},s_1,\mathcal F,c)
\)
by rolling out policies. A rollout of policy \(\pi\) generates a trajectory
\(
\tau^\pi=\{s_h,a_h,s_{h+1}\}_{h=1}^{H},
\)
where \(a_h\sim\pi_h(\cdot\mid s_h)\) and
\(s_{h+1}\sim P_h^{\mathrm{real}}(\cdot\mid s_h,a_h)\).
The agent also has full knowledge of the simulator
\(
\mathcal M^{\mathrm{sim}}
=
(\mathcal S,\mathcal A,H,P^{\mathrm{sim}},s_1,\mathcal F,c),
\)
which differs from \(\mathcal M^{\mathrm{real}}\) only in its transition dynamics.

Since data collection occurs in the real world, where constraint violations such as collisions in autonomous navigation are unacceptable, every policy \(\pi\) deployed in \(\mc M^{\mathrm{real}}\) must satisfy the safety constraint. We formalize this requirement as safe exploration.

\begin{definition}[Safe exploration]\label{def:safe_exploration}
An algorithm that deploys a sequence of policies \(\{\pi^t\}_{t=1}^T\) ensures safe exploration in \(\mc M^{\mathrm{real}}\) if
\(
V_{c,1}^{P^{\mathrm{real}},\pi^t}(s_1)\geq\ell
\)
for every \(t\in[T]\).
\end{definition}
Having described the simulator and real-world interactions, we now define the objective.\looseness-1
\begin{objective}\label{objective}
Design an algorithm that ensures safe exploration in \(\mathcal M^{\mathrm{real}}\) during data collection and, for any reward function \(r\in\mathcal F\), computes without further real-world interaction a policy \(\pi\) satisfying\looseness-1
\begin{align}
\emph{(Near-optimality)}\quad
V^{P^{\mathrm{real}},\pi^\star}_{r,1}(s_1)-V^{P^{\mathrm{real}},\pi}_{r,1}(s_1)\le \epsilon\quad\text{and}\quad
\emph{(Feasibility)}\quad
\pi \in \Pi^{P^{\mathrm{real}}}_{\mathrm{feas}},
\label{eq:planning_goal}
\end{align}
where
\(
\pi^\star\in \arg\max_{\pi\in\Pi^{P^{\mathrm{real}}}_{\mathrm{feas}}}
V^{P^{\mathrm{real}},\pi}_{r,1}(s_1).
\)
\end{objective}
We say that the algorithm achieves \emph{\(\epsilon\)-planning accuracy} if both conditions above, namely near-optimality and feasibility, hold for every \(r\in\mathcal F\). The algorithm's efficiency is measured by the number of state-action pairs sampled from \(\mathcal M^{\mathrm{real}}\), which we refer to as its sample complexity.

\subsection{Assumptions}\label{sec_assumptions}
To achieve the above objective, we make two assumptions.\footnote{We discuss both assumptions in the experimental setup (see Section~\ref{sec:experiments}).} The first is a standard Slater-type condition in safe RL~\citep{bura2022dope,yu2025improved,ni2025safe,wendl2026safe}: safe exploration requires a baseline policy that is strictly feasible in \(\mc M^{\mathrm{real}}\).
\begin{assumption}[Slater's condition]\label{ass:slater}
There exist a known constant $\xi\in(0,H]$ and a known baseline policy $\pi^0\in\Pi$ such that $V_{c,1}^{P^{\mathrm{real}},\pi^0}(s_1)\ge \ell+\xi$.
\end{assumption}
The strictly positive margin \(\xi\) serves two purposes. Algorithmically, it provides a safety buffer that allows the agent to explore while maintaining feasibility. Statistically, it ensures that the empirical planning problem remains feasible once the estimation error is sufficiently small: the error cannot cause the estimated constraint value of the baseline policy \(\pi^0\) below the feasibility threshold. By contrast, if \(\pi^0\) were exactly feasible (\(\xi=0\)), even a small estimation error could make it appear infeasible, leaving the empirical problem without a certifiably feasible solution.

Next, we adopt a separation condition commonly used in hierarchical RL~\citep{chua2023provable}, meta RL~\citep{chen2022understanding,mutti2024test,brunskill2013sample}, and hybrid RL~\citep{qu2025hybrid}. It requires each transition mismatch between \(\mc M^{\mathrm{real}}\) and \(\mc M^{\mathrm{sim}}\) to be either negligible or well separated.\looseness-1
\begin{assumption}\label{ass:separation}
There exist known constants \(0\leq\epsilon_s\le\sigma_s\leq1\) such that, for every \((h,s,a)\in[H]\times\mathcal S\times\mathcal A\),
\[
\dtv{P_h^{\mathrm{real}}(\cdot\mid s,a)}{P_h^{\mathrm{sim}}(\cdot\mid s,a)}
\in[0,\epsilon_s]\cup[\sigma_s,1].
\]
\end{assumption}
This condition separates transition mismatches, measured in total variation distance, into two categories: negligible mismatches of at most \(\epsilon_s\) and significant mismatches of at least \(\sigma_s\). We refer to \(\sigma_s-\epsilon_s\) as the \emph{separation gap}. This gap allows significant mismatches to be distinguished from negligible ones using finitely many real-world samples. Prior knowledge of \(\epsilon_s\) and \(\sigma_s\) is necessary to exploit the simulator efficiently. Indeed, \citeauthor{cheung2024leveraging} show that even in multi-armed bandits, a special case of MDPs with \(H=1\), no hybrid method using an imperfect simulator is guaranteed to outperform a purely online method without prior information about the sim-to-real mismatch. \looseness-1

Under Assumption~\ref{ass:separation}, we define the mismatch region as the set of triples whose transition mismatch between the real and simulated environments is at least \(\sigma_s\).
\begin{definition}[Mismatch region]\label{def:mismatch_region}
The mismatch region $\mathcal B$ is defined as
\[
\mathcal B\coloneqq\left\{(h,s,a):
\dtv{P_h^{\mathrm{real}}(\cdot\mid s,a)}{P_h^{\mathrm{sim}}(\cdot\mid s,a)}
\ge \sigma_s\right\}.
\]
\end{definition}
When the simulator accurately models the real-world dynamics, the mismatch region \(\mathcal B\) is small, and the simulator is well calibrated on the non-mismatch region, as captured by a small \(\epsilon_s\).

\section{Algorithm Design}
\label{sec:algorithm-design}
The RF-RL algorithm of \citet{huang2023safe} achieves safe exploration and accurate reward-free planning in the fully online setting. However, directly applying RF-RL to our setting has two limitations. First, RF-RL learns solely through interactions with \(\mc M^{\mathrm{real}}\) and therefore cannot exploit simulator information to reduce real-world sample complexity. Second, building on the unconstrained reward-free RL approach of \citet{menard2021fast}, RF-RL computes its exploration policy by solving a constrained nonconvex optimization problem, which is generally intractable. Algorithm~\ref{alg:hybrid} overcomes these limitations and achieves Objective~\ref{objective}.

Algorithm~\ref{alg:hybrid}, summarized below, addresses these challenges through three components. First, Line~3 uses confidence bounds on empirical real-world transitions to progressively shrink the estimated mismatch region. Second, Line~4 constructs a hybrid model that uses empirical real-world transitions within this region and simulator transitions elsewhere. Together, these components avoid relearning dynamics that are already modeled accurately by the simulator. Third, unlike RF-RL, which maximizes a nonlinear truncated certificate over a pessimistic feasible set and therefore requires solving a nonconvex problem, Line~8 maximizes a \emph{linear} uncertainty certificate over empirically feasible policies through a standard CMDP. This produces a candidate policy that targets uncertainty relevant to reward-free planning. If the certificate $\Delta^t$ indicates that the hybrid model is sufficiently accurate, Line~10 terminates the algorithm. Otherwise, Line~13 deploys an adaptive mixture of the candidate and baseline policies to ensure real-world feasibility and reduce the remaining uncertainty. \looseness-1
\begin{algorithm}[ht]
\caption{Safe sim-to-real RL}
\label{alg:hybrid}
\begin{algorithmic}[1]
\Require Baseline policy \(\pi^0\) with margin \(\xi\), separation parameters \(\epsilon_s,\sigma_s\), stopping tolerance \(0<\tau\leq\xi/4\), simulator \(\mathcal M^{\mathrm{sim}}\), real-world access, confidence \(\delta\in(0,1)\).

\State Initialize $n_h^0(s,a)\gets0$ for all $(h,s,a)\in[H]\times\mc S\times\mc A$, and initialize $\hat{\mc B}^0\gets[H]\times\mc S\times\mc A$
\For{$t=1,2,\ldots$}
    \State Construct \(\hat{\mc B}^t\) as in Equations \eqref{eq_deletetion_rule} and \eqref{def:estimate_mismatch} \Comment{Estimating the mismatch region}
    \State Construct \(\hat P^t\) using Equations~\eqref{eq_def_hat_P_real} and~\eqref{eq_def_hybrid}\Comment{Constructing the hybrid model}
    \If{$V_{c,1}^{\hat P^t,\pi^0}(s_1)<\ell+\xi/2$}
        \State $\pi^t\gets\pi^0$
    \Else
        \State $\bar\pi^t\in\arg\max_{\pi\in\Pi_{\text{feas}}^{\hat P^t}}V_{b,1}^{\hat P^t,\pi}(s_1)$.\hfill\tikzmarknode{explore-start}{}
        \State $\Delta^t\gets\min\{H,V_{b,1}^{\hat P^t,\bar\pi^t}(s_1)\}$
        \If{$\Delta^t\leq\tau$}
            \State \textbf{break} the \textbf{for} loop
        \EndIf\hfill\tikzmarknode{explore-end}{}
        \State $\pi^t\gets\alpha^t\bar\pi^t+(1-\alpha^t)\pi^0$, where $\alpha^t$ is defined in Equation \eqref{eq_def_policy}
        \begin{tikzpicture}[remember picture,overlay]
            \draw[blue,thick,decorate,decoration={brace,amplitude=4pt}]
            ([xshift=-9em,yshift=8.8ex]explore-start.east) --
            node[midway,right=6pt,text=blue,align=left,text width=8.5em]{Computing safe exploration policy}
            ([xshift=-9em,yshift=-3.8ex]explore-end.east);
        \end{tikzpicture}
    \EndIf
    \State Roll out $\pi^t$ in $\mc M^{\mathrm{real}}$ and observe $\tau^t\coloneqq\{(s_h^t,a_h^t,s_{h+1}^t)\}_{h=1}^H$
\EndFor
\State \Return $\hat P^t$ and $\pi^{\mathrm{out}}\in\arg\max_{\pi\in\Pi}\{V_{r,1}^{\hat P^t,\pi}(s_1):V_{c,1}^{\hat P^t,\pi}(s_1)\geq\ell+\tau\}$ for any $r\in\mc F$
\end{algorithmic}
\end{algorithm}

\textbf{Estimating the mismatch region.}
We start by initializing the estimated mismatch region as the entire time-state-action space, \(\hat{\mc B}^0=[H]\times\mc S\times\mc A\), as in Line~1 of Algorithm~\ref{alg:hybrid}. In Line~3, the algorithm iteratively shrinks this region by removing triples estimated to be non-mismatch as follows. At the beginning of each iteration \(t\), given the trajectories \(\{\tau^i\}_{i=1}^{t-1}\) collected from \(\mc M^{\mathrm{real}}\) during previous iterations, we define the visitation counts for each \((h,s,a)\in[H]\times\mc S\times\mc A\) as
\[
n_h^{t}(s,a)\coloneqq\sum_{i=1}^{t-1}\mathbf{1}_{\{(s_h^i,a_h^i)=(s,a)\}}, \;\;
n_h^{t}(s,a,s')\coloneqq\sum_{i=1}^{t-1}\mathbf{1}_{\{(s_h^i,a_h^i,s_{h+1}^i)=(s,a,s')\}}.
\]
The empirical transition model estimated from real-world data is then defined as
\begin{align}
\hat P_h^{t,\mathrm{real}}(s'\mid s,a)\coloneqq\mathbf{1}_{\{n_h^{t}(s,a)>0\}}\frac{n_h^{t}(s,a,s')}{n_h^{t}(s,a)}
+
\mathbf{1}_{\{n_h^{t}(s,a)=0\}}\frac{1}{|\mc S|}.\label{eq_def_hat_P_real}
\end{align}
As shown in Lemma \ref{lem:proba_master_event}, its statistical uncertainty is quantified by the confidence bound
\[
\rho_h^t(s,a)=\min\left\{1,\sqrt{\beta(n_h^t(s,a),\delta)/(2n_h^t(s,a)\vee 1)}\right\},
\]
where \(\beta(n,\delta)\coloneqq\log(2|\mc S||\mc A|H/\delta)+|\mc S|\log(8e(n+1))\).

A triple \((h,s,a)\) is removed from the estimated mismatch region \(\hat{\mc B}^{t-1}\) when its empirical real-world transition model \(\hat P_h^{t,\mathrm{real}}(\cdot\mid s,a)\) is sufficiently close to the simulator transition model \(P_h^{\mathrm{sim}}(\cdot\mid s,a)\), after accounting for its confidence bound \(\rho_h^t(s,a)\). Specifically, define
\begin{equation}
\mc G^t\coloneqq\left\{(h,s,a)\in\hat{\mc B}^{t-1}:\dtv{\hat P_h^{t,\mathrm{real}}(\cdot\mid s,a)}{P_h^{\mathrm{sim}}(\cdot\mid s,a)}+\rho_h^t(s,a)\leq\frac{\epsilon_s+\sigma_s}{2}\right\}.\label{eq_deletetion_rule}
\end{equation}
The threshold \((\epsilon_s+\sigma_s)/2\) separates triples in the mismatch region from the remaining triples under Assumption~\ref{ass:separation}, allowing them to be distinguished once the confidence bound is sufficiently small. We then update the estimated mismatch region as
\begin{equation}\label{def:estimate_mismatch}
\hat{\mc B}^t\coloneqq\hat{\mc B}^{t-1}\setminus\mc G^t.
\end{equation}
By construction, the estimated regions are nonincreasing, namely, \(\hat{\mc B}^t\subseteq\hat{\mc B}^{t-1}\) for every \(t\). Lemma~\ref{lem:cover} further shows that, with high probability, \(\mc B\subseteq\hat{\mc B}^t\) at every iteration, so no mismatch triple is removed.

\textbf{Constructing the hybrid model.} Given the estimated mismatch region \(\hat{\mc B}^t\), we construct the hybrid model as follows: 
\begin{equation}
\hat P_h^t(\cdot\mid s,a)\coloneqq\mathbf{1}_{\{(h,s,a)\in\hat{\mc B}^t\}}\hat P_h^{t,\mathrm{real}}(\cdot\mid s,a)+\mathbf{1}_{\{(h,s,a)\notin\hat{\mc B}^t\}}P_h^{\mathrm{sim}}(\cdot\mid s,a).
\label{eq_def_hybrid}
\end{equation}
Thus, the algorithm uses empirical real-world transition estimates within the estimated mismatch region \(\hat{\mc B}^t\) and reuses the simulator transition dynamics elsewhere.

\textbf{Computing a safe exploration policy.} Given the hybrid model \(\hat P^t\), we seek a safe policy that visits the parts of the estimated mismatch region where real-world transition uncertainty affects reward-free planning. For any policy \(\pi\) and utility function \(f\in\mc F\), this uncertainty is measured by the value-estimation error \(e_{f,1}^{t,\pi}(s_1)\coloneqq|V_{f,1}^{P^{\mathrm{real}},\pi}(s_1)-V_{f,1}^{\hat P^t,\pi}(s_1)|\). The ideal exploration objective is\looseness-1 
\begin{equation}\label{eq_problem}
\max_{f\in\mc F,\,\pi\in\Pi_{\mathrm{feas}}^{P^{\mathrm{real}}}}
e_{f,1}^{t,\pi}(s_1).
\end{equation}
Optimizing this problem would identify a safe policy that collects real-world data in the regions most relevant to reducing the value-estimation error and refining the hybrid model. However, both the objective $e_{f,1}^{t,\pi}(s_1)$ and the feasible set $\Pi_{\mathrm{feas}}^{P^{\mathrm{real}}}$ depend on the unknown real-world model, so Problem~\eqref{eq_problem} cannot be solved directly.

We address this challenge in three steps. First, we construct a computable certificate that upper-bounds \(e_{f,1}^{t,\pi}(s_1)\). Second, we maximize such certificate \(e_{f,1}^{t,\pi}(s_1)\) over \(\Pi_{\mathrm{feas}}^{\hat P^t}\), which serves as a surrogate for \(\Pi_{\mathrm{feas}}^{P^{\mathrm{real}}}\). The optimizer defines a candidate exploration policy, while the optimal value provides a stopping criterion. Third, if further exploration is needed, we mix the candidate policy with the strictly feasible baseline policy \(\pi^0\) to guarantee real-world safety.\looseness-1

\emph{Step 1: Upper bound on the estimation error.}
\begin{lemma}\label{lemma:upper_bound_on_estimation_error}
Under Assumption~\ref{ass:separation}, with probability at least \(1-\delta\), simultaneously for every iteration \(t\), policy \(\pi\in\Pi\), and utility function \(f\in\mathcal F\),
\[
e_{f,1}^{t,\pi}(s_1)\leq\min\left\{H,\;V_{b,1}^{\hat P^t,\pi}(s_1)\right\}.
\]
Here, \(V_{b,h}^{\hat P^t,\pi}\) denotes the value function associated with bonus function \(b^t=\{b_h^t\}_{h=1}^H\), where \(b_h^t:\mathcal S\times\mathcal A\to[0,H]\) is defined by \(b_h^t(s,a)\coloneqq H\rho_h^t(s,a)\mathbf 1_{\{(h,s,a)\in\hat{\mathcal B}^t\}}+H\epsilon_s\mathbf 1_{\{(h,s,a)\notin\hat{\mathcal B}^t\}}\).
\end{lemma}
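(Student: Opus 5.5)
The plan is to combine a standard simulation (value-difference) lemma with a high-probability ``master event'' that controls every transition estimate. The event is the one from Lemma~\ref{lem:proba_master_event}. On it, for every \(t\) and \((h,s,a)\),
\[
\dtv{\hat P_h^{t,\mathrm{real}}(\cdot\mid s,a)}{P_h^{\mathrm{real}}(\cdot\mid s,a)}\le \rho_h^t(s,a).
\]
This is the usual \(\ell_1\)/KL concentration uniformly over \(n\), which yields the \(\beta(n,\delta)\) with its \(|\mc S|\log(8e(n+1))\) term; when \(n_h^t(s,a)=0\) the bound \(\rho=1\) is trivial. We then intersect with the conclusion of Lemma~\ref{lem:cover}, namely \(\mc B\subseteq\hat{\mc B}^t\) for all \(t\). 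I expect this inclusion to follow on the same event from Assumption~\ref{ass:separation}. If \((h,s,a)\in\mc B\) were removed, the triangle inequality would give
\[
\dtv{P^{\mathrm{real}}_h}{P^{\mathrm{sim}}_h}\le \dtv{\hat P^{t,\mathrm{real}}_h}{P^{\mathrm{sim}}_h}+\rho_h^t\le (\epsilon_s+\sigma_s)/2<\sigma_s.
\]
This contradicts membership in \(\mc B\) (the degenerate case \(\epsilon_s=\sigma_s\) needs the separation to be strict, or is handled in Lemma~\ref{lem:cover}). Hence the total probability is at least \(1-\delta\).

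On this event, we first bound the one-step model error of the hybrid model. For \((h,s,a)\in\hat{\mc B}^t\), we have \(\hat P^t_h=\hat P^{t,\mathrm{real}}_h\), so the TV error is at most \(\rho_h^t(s,a)\). For \((h,s,a)\notin\hat{\mc B}^t\), we have \(\hat P^t_h=P^{\mathrm{sim}}_h\) and \((h,s,a)\notin\mc B\), so Assumption~\ref{ass:separation} gives TV error at most \(\epsilon_s\). Thus \(\dtv{\hat P_h^t(\cdot\mid s,a)}{P_h^{\mathrm{real}}(\cdot\mid s,a)}\le b_h^t(s,a)/H\) everywhere.

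Next we apply the simulation lemma, expanding along the hybrid model:
\[
V_{f,1}^{P^{\mathrm{real}},\pi}(s_1)-V_{f,1}^{\hat P^t,\pi}(s_1)=\mathbb E_{\hat P^t,\pi}\Big[\sum_{h=1}^H\big(P_h^{\mathrm{real}}-\hat P_h^t\big)V_{f,h+1}^{P^{\mathrm{real}},\pi}(s_h,a_h)\Big].
\]
Since \(V_{f,h+1}^{P^{\mathrm{real}},\pi}\in[0,H]\), each inner term is at most \(H\) times the TV distance in absolute value. (Centering the value at \(H/2\) would give \(H/2\), but \(H\) suffices.) Hence each inner term is bounded by \(b_h^t(s_h,a_h)\). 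Taking absolute values and summing gives \(e_{f,1}^{t,\pi}(s_1)\le \mathbb E_{\hat P^t,\pi}[\sum_h b_h^t(s_h,a_h)]=V_{b,1}^{\hat P^t,\pi}(s_1)\).

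The \(\min\) with \(H\) is immediate, because both values lie in \([0,H]\) as \(f\in\mc F\). Crucially, the expectation is taken under \(\hat P^t\) and the value plugged in is the real one, which is fixed and independent of the data. This is why no union bound over \(f\) or \(\pi\) is needed: the concentration is on the full transition vector in TV, so uniformity over all \(f,\pi\) is free.

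The main obstacle is the probabilistic step. We need \(\rho_h^t\) to be a valid TV confidence radius uniformly over all iterations with random counts. I would prove this via a time-uniform (Laplace/mixture-method) KL deviation bound for the categorical empirical distribution, in the form \(n\,\KL(\hat p,p)\le\beta(n,\delta')\), followed by Pinsker and a union bound over \((h,s,a)\). Alternatively, I would simply invoke Lemma~\ref{lem:proba_master_event}. I must also make sure that Lemma~\ref{lem:cover} holds on the same event, so that no extra \(\delta\) is spent.
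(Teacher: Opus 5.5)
Your proposal is correct and follows the paper's proof essentially step for step: condition on $\mathcal E$, use Pinsker to get TV error at most $\rho_h^t$ on $\hat{\mc B}^t$ and Lemma~\ref{lem:cover} with Assumption~\ref{ass:separation} to get at most $\epsilon_s$ off it, then apply the simulation lemma along $\hat P^t$ with real-model values in $[0,H]$. One constant needs fixing. Since $\dtv{\cdot}{\cdot}$ is half the $\ell_1$ distance, the uncentered H\"older bound gives $2H$ times TV, and it is centering at $H/2$ (equivalently $|\mathbb E_pV-\mathbb E_qV|\le \mathrm{TV}\cdot(\max V-\min V)$) that yields exactly the factor $H$ you need, not $H/2$.

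Your concern about $\epsilon_s=\sigma_s$ is harmless for this lemma. On $\mathcal E$, any triple deleted by iteration $t$ satisfies $\dtv{P_h^{\mathrm{real}}(\cdot\mid s,a)}{P_h^{\mathrm{sim}}(\cdot\mid s,a)}\le(\sigma_s+\epsilon_s)/2$, and together with Assumption~\ref{ass:separation} this forces the distance to be at most $\epsilon_s$ even in the degenerate case, which is all the argument uses outside $\hat{\mc B}^t$.
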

Within \(\hat{\mc B}^t\), the bonus \(b^t\) scales with the uncertainty in \(\hat P_h^{t,\mathrm{real}}\). Outside \(\hat{\mc B}^t\), it scales with \(\epsilon_s\), because the sim-to-real mismatch has been certified to be at most \(\epsilon_s\). This construction follows from the simulation lemma, which bounds the value-estimation error \(e_{f,1}^{t,\pi}(s_1)\) by the expected cumulative one-step bonus along trajectories generated by \(\hat P^t\) and \(\pi\). This cumulative bonus is exactly \(V_{b,1}^{\hat P^t,\pi}(s_1)\). The proof is provided in Appendix~\ref{app:upper_bound_on_estimation_error}.

Moreover, \(V_{b,1}^{\hat P^t,\pi}(s_1)\) is linear in the state-action occupancy measure, making it a tractable surrogate for Problem~\eqref{eq_problem}. This contrasts with the clipped or truncated uncertainty values used in prior work~\citep{menard2021fast,huang2023safe}.\footnote{\citet{huang2023safe} design their error certificate \(U(\pi)\), based on a truncated bonus value, which is concave function under policy mixing. They maximize this certificate subject to the empirical safety constraint \(V_c^{\hat P,\pi}\geq\ell+U(\pi)\). This defines a sublevel set of a concave function and leads to a nonconvex constrained optimization problem.\looseness-1}

\emph{Step 2: Computing a candidate policy.}
Line~5 of Algorithm~\ref{alg:hybrid} first checks whether \(\hat P^t\) provides a sufficient safety margin for \(\pi^0\), ensuring that \(\Pi_{\mathrm{feas}}^{\hat P^t}\) is nonempty. If this check fails, Line~6 deploys \(\pi^0\). Otherwise, using the certificate from Lemma~\ref{lemma:upper_bound_on_estimation_error}, Line~8 solves the following surrogate for Problem~\eqref{eq_problem}:\looseness-1
\begin{equation}\label{eq_compute_CMDP}
\bar\pi^t\in\arg\max_{\pi\in\Pi_{\mathrm{feas}}^{\hat P^t}}
V_{b,1}^{\hat P^t,\pi}(s_1).
\end{equation}
This is a standard CMDP under the known hybrid model \(\hat P^t\), with reward \(b^t\) and constraint \(c\). It can be solved using existing CMDP solvers, such as constrained policy optimization~\citep{achiam2017constrained}, or exactly through linear programming in the occupancy-measure formulation~\citep{altman2021constrained}.

The resulting certificate of Problem~\eqref{eq_compute_CMDP} is
\begin{equation}\label{eq_def_delta}
\Delta^t\coloneqq
\min\left\{H,\;V_{b,1}^{\hat P^t,\bar\pi^t}(s_1)\right\}.
\end{equation}
Because \(\bar\pi^t\) maximizes the bonus over \(\Pi_{\mathrm{feas}}^{\hat P^t}\), \(\Delta^t\) upper-bounds the uncertainty certificate of every empirically feasible policy. Thus, when \(\Delta^t\leq\tau\), the hybrid model is sufficiently accurate for reward-free planning under the tightened constraint in Line~17, as formalized by Lemma~\ref{lemma:Connection between estimation error and CMDPs}. The algorithm then terminates.\looseness-1

\emph{Step 3: Ensuring safe exploration.}
When \(\Delta^t>\tau\), further exploration is required. Although \(\bar\pi^t\) is feasible under \(\hat P^t\), Lemma~\ref{lemma:upper_bound_on_estimation_error} guarantees only that
\begin{equation}\label{eq:constraint_violation_bound}
V_{c,1}^{P^{\mathrm{real}},\bar\pi^t}(s_1)
\geq V_{c,1}^{\hat P^t,\bar\pi^t}(s_1)-\Delta^t
\geq\ell-\Delta^t.
\end{equation}
Thus, \(\bar\pi^t\) may violate the safety constraint in \(\mc M^{\mathrm{real}}\) by up to \(\Delta^t\). To compensate for this possible violation and ensure safe exploration, Line~13 executes a mixture of \(\bar\pi^t\) and the strictly feasible baseline policy \(\pi^0\), defined by
\begin{equation}
\pi^t\coloneqq\alpha^t\bar\pi^t+(1-\alpha^t)\pi^0, \; \alpha^t\coloneqq{\xi}/({\xi+\bigl(\ell+\Delta^t-V_{c,1}^{\hat P^t,\bar\pi^t}(s_1)\bigr)_+}).
\label{eq_def_policy}
\end{equation}
The mixing weight \(\alpha^t\) is chosen so that the strict feasibility margin of \(\pi^0\) offsets the possible real-world constraint violation of \(\bar\pi^t\). To implement this mixture, at the beginning of each iteration, the algorithm selects \(\bar\pi^t\) with probability \(\alpha^t\) and \(\pi^0\) otherwise, and then follows the selected policy throughout the iteration.

\section{Theoretical Guarantees}
\label{sec:theory_guarnatees}

In this section, we formalize the theoretical guarantees of our algorithm, including the real-world sample somplexity required for safe exploration and accurate planning. The main result is stated below.\looseness-1
 
\begin{theorem}\label{thm:main}
Let Assumptions~\ref{ass:slater} and \ref{ass:separation} hold with $\epsilon_s\le {\xi}/{4H^3}$. For \(\delta\in(0,1)\) and \(
\epsilon\in\left(0,\min\left\{1,\;H-{4H^4\epsilon_s}/{\xi}\right\}\right]\), run Algorithm~\ref{alg:hybrid} with \(\tau\coloneqq{\xi\epsilon}/{4H}+H^3\epsilon_s\).  Then, with probability at least \(1-\delta\), Algorithm~\ref{alg:hybrid} terminates after collecting at most
\begin{small}
\begin{equation}\label{eq:main-complexity}
\widetilde{\mc O}\!\biggl(\min\biggl\{\frac{H^7|\mc S|^2|\mc A|}{\xi^2\epsilon^2},\underbrace{\frac{H^6|\mc S||\mc B|}{\xi^2\epsilon^2}}_{(i)}+\underbrace{\frac{H^5|\mc S|\bigl(H|\mc S||\mc A|-|\mc B|\bigr)}{\xi\epsilon}\left(1+\frac{1}{(\sigma_s-\epsilon_s)H^2}\right)}_{(ii)}\biggr\}\biggr).
\end{equation}
\end{small}
samples from \(\mc M^{\mathrm{real}}\) and guarantees the following:
\begin{enumerate}
    \item \textbf{Safe exploration.} Every executed policy is feasible in \(\mc M^{\mathrm{real}}\).
    \item \textbf{Planning accuracy.} For any \(r\in\mc F\), the output policy \(\pi^{\mathrm{out}}\) is feasible and \((\epsilon+4\epsilon_sH^4/\xi)\)-optimal for the true CMDP defined by \(\max_{\pi\in\Pi_{\mathrm{feas}}^{P^{\mathrm{real}}}} V_{r,1}^{P^{\mathrm{real}},\pi}(s_1)\).
\end{enumerate}
\end{theorem}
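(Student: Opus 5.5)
We work throughout on the master good event of Lemma~\ref{lem:proba_master_event}, which has probability at least \(1-\delta\). On this event the concentration bound \(\|\hat P_h^{t,\mathrm{real}}(\cdot\mid s,a),P_h^{\mathrm{real}}(\cdot\mid s,a)\|_{\mathrm{TV}}\le\rho_h^t(s,a)\) holds uniformly, so Lemma~\ref{lem:cover} (\(\mathcal B\subseteq\hat{\mathcal B}^t\)) and Lemma~\ref{lemma:upper_bound_on_estimation_error} hold for all \(t,\pi,f\) at once. The proof then splits into three parts: safety, planning accuracy, and the sample count.

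\textbf{Safety.} If Line~6 fires, the executed policy is \(\pi^0\), which is feasible by Assumption~\ref{ass:slater}. Otherwise the value of a mixture is linear in \(\alpha^t\):
\[
V_{c,1}^{P^{\mathrm{real}},\pi^t}(s_1)=\alpha^tV_{c,1}^{P^{\mathrm{real}},\bar\pi^t}(s_1)+(1-\alpha^t)V_{c,1}^{P^{\mathrm{real}},\pi^0}(s_1).
\]
By \eqref{eq:constraint_violation_bound}, with \(x\coloneqq(\ell+\Delta^t-V_{c,1}^{\hat P^t,\bar\pi^t}(s_1))_+\), we have \(V_{c,1}^{P^{\mathrm{real}},\bar\pi^t}(s_1)\ge\ell-x\). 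Hence the mixture value is at least \(\ell-\alpha^tx+(1-\alpha^t)\xi\). With \(\alpha^t=\xi/(\xi+x)\) this equals exactly \(\ell\), so \(\pi^t\) is feasible.

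\textbf{Planning accuracy.} At termination \(\Delta^t\le\tau\), and by maximality of \(\bar\pi^t\) every \(\pi\in\Pi_{\mathrm{feas}}^{\hat P^t}\) has estimation error at most \(\tau\) for every \(f\in\mathcal F\). The output \(\pi^{\mathrm{out}}\) satisfies \(V_{c}^{\hat P}\ge\ell+\tau\), so its real constraint value is at least \(\ell\), i.e.\ it is feasible.

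For near-optimality we compare with \(\pi^\star\), which may not be \(\hat P^t\)-feasible with margin \(\tau\). We instead use the mixture \(\tilde\pi=(1-\gamma)\pi^\star+\gamma\pi^0\) with \(\gamma=2\tau/\xi\le1/2\). Its real constraint value is at least \(\ell+\gamma\xi=\ell+2\tau\). The question is whether this is \(\ge\ell+2\tau\) under \(\hat P^t\) as well. The error bound of Lemma~\ref{lemma:upper_bound_on_estimation_error} applies to arbitrary policies, but it is only small for empirically feasible ones. So we first use \(V_c^{\hat P,\pi^0}\ge\ell+\xi/2\) (Line~5 passed) to argue that \(\tilde\pi\) is close to empirically feasible, and then apply the certificate to \(\tilde\pi\) after a further small mix with \(\pi^0\). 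This is essentially Lemma~\ref{lemma:Connection between estimation error and CMDPs}, which we invoke.

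The reward loss is then \(\gamma H+2\tau=2\tau H/\xi+2\tau\). With \(\tau=\xi\epsilon/4H+H^3\epsilon_s\) this is \(O(\epsilon+H^4\epsilon_s/\xi)\), matching the stated \(\epsilon+4\epsilon_sH^4/\xi\) after tracking constants.

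\textbf{Sample complexity.} Before termination, every iteration taking the exploration branch has \(\Delta^t>\tau\). The \(\epsilon_s\) part of the bonus contributes at most \(H\cdot H\epsilon_s\), and \(\tau\) absorbs \(H^3\epsilon_s\), so the \(\rho\)-part of the value under \(\hat P^t,\bar\pi^t\) exceeds \(\xi\epsilon/4H\). Moreover \(\alpha^t\ge\xi/(\xi+H)\gtrsim\xi/H\), so the executed policy inherits a constant-\(\xi/H\) fraction of that bonus.

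We switch from \(\hat P^t\) to \(P^{\mathrm{real}}\) via a Bernstein-type simulation argument, and sum using the standard pigeonhole bound \(\sum_t\mathbb E_{\pi^t}\sum_h\rho_h^t\lesssim\sqrt{|\mathcal S|\,|\text{region}|\,T\,H}\) with \(\beta\sim|\mathcal S|\). Only \(\hat{\mathcal B}^t\) carries \(\rho\)-bonus, and this splits the count into two parts.
\begin{itemize}
\item \emph{Triples in \(\mathcal B\).} These give term (i): solving \(T\xi\epsilon/H^2\lesssim H\sqrt{|\mathcal S||\mathcal B|T}\) yields \(T\sim H^6|\mathcal S||\mathcal B|/(\xi^2\epsilon^2)\), up to the \(H\) accounting.
\item \emph{Triples in \(\hat{\mathcal B}^t\setminus\mathcal B\).} Each is deleted once \(\rho\le(\sigma_s-\epsilon_s)/2\), i.e.\ after \(\tilde O(|\mathcal S|/(\sigma_s-\epsilon_s)^2)\) visits. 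A triple stays in the \(\rho\)-part only while its count is below that threshold, and it contributes \(H\rho\) each time it is visited. A lower-order summation (\(\sum\rho\) over a bounded number of visits, traded against the \(\xi\epsilon\) gap, linearly rather than quadratically) gives term (ii).
\end{itemize}
The first branch of the min is the trivial bound obtained by taking \(\hat{\mathcal B}^t\) to be everything.

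Line~6 iterations happen only when \(V_c^{\hat P,\pi^0}\) is underestimated by more than \(\xi/2\). That requires bonus mass \(\gtrsim\xi\) along \(\pi^0\), and the same pigeonhole argument bounds their number by a lower-order term.

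\textbf{Main obstacle.} The hardest step is the summation. The bonus is evaluated under \(\hat P^t\) and \(\bar\pi^t\), whereas data come from \(P^{\mathrm{real}}\) and the mixture \(\pi^t\). Converting between them without losing extra \(H\) factors, while obtaining the separate linear-in-\(1/\epsilon\) rate for falsely flagged triples, is delicate. My first attempt would be to bound \(V_b^{\hat P}\le V_b^{P^{\mathrm{real}}}+\) a second-order term using \(b\le H\), absorb the second-order term via \(\rho^2\)-sums (which give only \(\log\) factors times \(|\mathcal S|^2|\mathcal A|H\)), and then use \(\alpha^t\) to pass from \(\bar\pi^t\) to \(\pi^t\).
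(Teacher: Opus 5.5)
\textbf{Verdict: the sample-complexity step has a genuine gap.} Your safety argument matches the paper's. Reducing planning accuracy to Lemma~\ref{lemma:Connection between estimation error and CMDPs} is also legitimate: the restriction on \(\epsilon\) is exactly what gives \(\tau\le\xi/4\), and \(\tau<H\) lets you drop the \(\min\{H,\cdot\}\) in \(\Delta^t\).

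The gap is where you pass from \(\bar\pi^t\) to the executed mixture \(\pi^t\). You bound the mixing weight crudely, \(\alpha^t\ge\xi/(\xi+H)\), and separately bound the \(\rho\)-part of the certificate below by \(\xi\epsilon/(4H)\). Decoupled this way, the per-iteration progress you can certify is only of order \(\xi^2\epsilon/H^2\). Your display \(T\xi\epsilon/H^2\lesssim H\sqrt{|\mathcal S||\mathcal B|T}\) has two errors:
\begin{itemize}
\item it drops a factor \(\xi\) from the left-hand side;
\item it drops an \(H\) under the square root, since \(\sum_{(h,s,a)}\bar n_h^{T+1}(s,a)\le HT\) makes the pigeonhole bound \(\lesssim H\sqrt{\beta\,|\mathcal B|\,HT}\).
\end{itemize}
Carried through correctly with your bound on \(\alpha^t\), term (i) becomes \(\widetilde{\mathcal O}(H^8|\mathcal S||\mathcal B|/(\xi^4\epsilon^2))\) samples, a factor \((H/\xi)^2\) worse than claimed. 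The online branch degrades the same way, to \(H^9|\mathcal S|^2|\mathcal A|/(\xi^4\epsilon^2)\), which is worse than RF-RL. Term (ii) degrades by \(H/\xi\). Your phrase ``up to the \(H\) accounting'' hides exactly this loss.

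\textbf{The missing idea: \(\alpha^t\) and the certificate are coupled.} Because \(\bar\pi^t\in\Pi_{\mathrm{feas}}^{\hat P^t}\), the violation term obeys \((\ell+\Delta^t-V_{c,1}^{\hat P^t,\bar\pi^t}(s_1))_+\le\Delta^t\le x\), where \(x\coloneqq V_{b,1}^{\hat P^t,\bar\pi^t}(s_1)>\tau\). Hence \(\alpha^t\ge\xi/(\xi+x)\): the weight can be small only when the certificate is large. After removing the simulator error, the bonus value of the executed \(\pi^t\) is at least \(\alpha^t(x-H^3\epsilon_s)\ge\xi(x-H^3\epsilon_s)/(\xi+x)\). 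The map \(x\mapsto\xi(x-H^3\epsilon_s)/(\xi+x)\) is increasing, so this is at least \(\xi(\tau-H^3\epsilon_s)/(\xi+\tau)\ge\xi\epsilon/(5H)\) using \(\tau\le\xi/4\). This per-iteration progress is what produces the stated rates.

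\textbf{On the transport step you leave as ``Bernstein-type''.} The paper proceeds in two stages.
\begin{itemize}
\item It compares \(V_{b,1}^{\hat P^t,\pi}\) with the statistical bonus \(H\rho_h^t(s,a)\mathbf 1_{\{(h,s,a)\in\hat{\mathcal B}^t\}}\) under an intermediate kernel. That kernel equals \(\hat P^{t,\mathrm{real}}\) on \(\hat{\mathcal B}^t\) and \(P^{\mathrm{real}}\) off it. The comparison costs \(H^3\epsilon_s\), which includes the kernel switch outside \(\hat{\mathcal B}^t\) that your \(H^2\epsilon_s\) accounting omits; this is why \(\tau\) contains \(H^3\epsilon_s\).
\item It then uses Donsker--Varadhan with the KL event to get \(\sum_{s'}\hat P^{t,\mathrm{real}}_h(s'\mid s,a)w(s')\le 2H^2\rho_h^t(s,a)^2+(1+1/H)\sum_{s'}P^{\mathrm{real}}_h(s'\mid s,a)w(s')\) for \(w\in[0,H]\). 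This yields a factor-\(e\) first-order term plus a squared-bonus term.
\end{itemize}
With this in place, Line-6 iterations need no separate count, since their progress is at least \(\xi/2-H^3\epsilon_s\ge\xi\epsilon/(5H)\).
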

The full proof of Theorem~\ref{thm:main} is provided in Appendix~\ref{app:proof_main}. Before presenting a proof sketch, we offer a few remarks on the sample-complexity bound and compare it with prior work.

Theorem~\ref{thm:main} gives two complementary sample-complexity bounds for Algorithm~\ref{alg:hybrid}. The first is a fully online bound that does not rely on simulator accuracy. The second captures simulator reuse: Term~(i) accounts for learning real-world dynamics on \(\mc B\), while Term~(ii) accounts for identifying where simulator transitions can be used, with a cost controlled by the separation gap \(\sigma_s-\epsilon_s\) in Assumption~\ref{ass:separation}. When the mismatch region is small, \(|\mc B|\ll H|\mc S||\mc A|\), the separation gap is large, \(\sigma_s-\epsilon_s\geq H^{-2}\), our sample complecity bound is dominated by Term~(i) and improves on the fully online bound by a factor of order \(H|\mc S||\mc A|/|\mc B|\). Since Algorithm~\ref{alg:hybrid} reuses simulator transitions outside mismatch region \(\mc B\), where the sim-to-real mismatch is bounded by \(\epsilon_s\), it incurs an additional planning error of at most \(4\epsilon_sH^4/\xi\).

RF-RL~\citep{huang2023safe} provides a fully online sample-complexity bound of \(\widetilde{\mathcal O}(|\mc A||\mc S|^2H^8/(\epsilon^2\xi^4))\) for safe exploration and \(\epsilon\)-accurate planning.\footnote{\citet{huang2023safe} state their bound in trajectories with cumulative rewards and costs bounded by \(1\). We rescale to cumulative utilities of order \(H\) and multiply by \(H\) to count the number of sampled state-action pairs.} When \(\epsilon_s=0\) and \(\mc B=[H]\times\mc S\times\mc A\), our algorithm recovers the fully online setting as it does not reuse simulator. Even then, our bound improves on RF-RL by a factor of \(H/\xi^2\). Thus, part of the sample-complexity benefit comes from the exploration-policy design rather than simulator use.

\noindent\textbf{Proof sketch.} The proof of Theorem~\ref{thm:main} proceeds in three steps. First, Lemma~\ref{lem:cover} guarantees that no true mismatch triple in \(\mc B\) is removed, while Lemma~\ref{lem:completeness} shows that each non-mismatch triple is removed after \(\widetilde{\mc O}(|\mc S|/(\sigma_s-\epsilon_s)^2)\) visits. Thus, non-mismatch triples incur only a bounded learning cost before their transition estimates are replaced by the simulator, as formalized in Lemma~\ref{lem:identification-budget}.\looseness-1

Second, Lemma~\ref{lemma:upper_bound_on_estimation_error} shows that the certificate \(\Delta^t\), defined in Equation~\eqref{eq_def_delta}, bounds reward and constraint value errors between the hybrid model $\hat P^t$ and real model $P^{\mathrm{real}}$ of the candidate policy $\bar \pi^t$. Together with the Slater margin of the baseline policy \(\pi^0\), this guarantees safety of the deployed mixture policy $\pi^t$ in \(\mc M^{\mathrm{real}}\). However, mixing can slow exploration since the candidate policy is executed only with probability \(\alpha^t\), which may be small.

Third, we show that the deployed mixture policy \(\pi^t\) continues to collect sufficient information despite mixing with the baseline policy. The key observation is \(\alpha^t\geq{\xi}/{(\xi+V_{b,1}^{\hat P^t,\bar\pi^t}(s_1))}\). Thus, a small probability \(\alpha^t\) of executing the candidate policy \(\bar\pi^t\) can occur only when its uncertainty certificate \(V_{b,1}^{\hat P^t,\bar\pi^t}(s_1)\) is large relative to the safety margin \(\xi\). This relationship ensures sufficient exploration progress even when the candidate policy is executed infrequently. We formalize this argument by relating the certificate to real-world visitation counts and separating the residual simulator error, controlled by \(\epsilon_s\), which is absorbed into the stopping tolerance \(\tau\). The resulting bound also covers iterations in which only the baseline policy is deployed. We then bound the total number of iterations by counting how often each triple can contribute to the uncertainty certificate. Mismatch triples may contribute throughout learning, producing a learning cost proportional to \(|\mc B|\). Non-mismatch triples contribute only until they are identified and removed, producing an identification cost proportional to \(H|\mc S||\mc A|-|\mc B|\) and seperation gap $\sigma_s-\epsilon_s$. Counting all triples without using deletion gives the fully online bound. Taking the smaller of these bounds yields the sample complexity in Theorem~\ref{thm:main}. At termination, \(\Delta^t\leq\tau\), and Lemma~\ref{lemma:Connection between estimation error and CMDPs} guarantees real-world feasibility and reward-free planning accuracy.\looseness-1

\section{Computational experiment}\label{sec:experiments}
We evaluate our algorithm on a \(5\times5\) gridworld with horizon \(H=12\), adapted from \citet{sutton1998reinforcement}. We compare against two baselines: (i) \emph{Reward-free safe RL}, which treats all state-action pairs as mismatch triples and does not exploit simulator information, thereby isolating the benefit of simulator access; and (ii) \emph{Unconstrained sim-to-real RL}, which exploits simulator information without enforcing safety, allowing us to quantify the cost of safe exploration. Both baselines are modifications of Algorithm~\ref{alg:hybrid} and are described in Appendix~\ref{app:exp_details}.
\paragraph{Experiment setup}
We consider a CMDP with stationary dynamics, i.e., $P_h = P$ for all $h \in [H]$ (a special case of our setting). The agent starts in the bottom-left corner (star in Figure~\ref{rf:fig:gridworld}) and has four actions: \emph{up}, \emph{right}, \emph{down}, and \emph{left}. With probability $0.85$, the intended action is executed; otherwise, the agent moves uniformly to a neighboring cell. The environment contains a central unsafe wall formed by three cells (hatched cells in Figure~\ref{rf:fig:gridworld}). The constraint function is defined as $c(s,a)=\mathbf{1}_{\{s\ \mathrm{is\ safe}\}}$, so $V_{c,1}^{P^{\mathrm{real}},\pi}(s_1)\ge \ell$ limits the expected number of unsafe visits to at most $H-\ell$. The simulator $P^{\mathrm{sim}}$ corresponds to the nominal gridworld, while the real environment introduces three \emph{windy} cells near the unsafe wall (red cells in Figure~\ref{rf:fig:gridworld}). In these cells the transition kernel is overridden by the wind: with probability $0.8$ the agent is pushed to the cell \emph{opposite} the chosen direction (e.g., \emph{up} sends it downward) instead of to the intended neighbor, while the remaining probability mass is distributed uniformly among neighboring cells. Because these windy cells are located adjacent to the unsafe wall, the resulting model mismatch is safety-critical: a policy optimized in the simulator expects to move away from the wall, but is instead pushed toward it.
\begin{figure}[t]
    \centering
    \setlength{\panelheight}{0.36\textwidth}
    \begin{subfigure}[t]{0.375\textwidth}
        \centering
        \includegraphics[height=\panelheight]{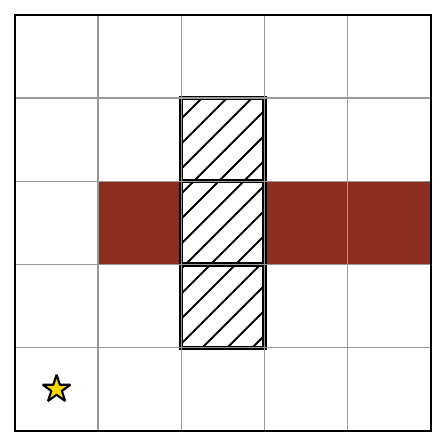}
        \caption{Gridworld instance: red cells are the mismatch region \(\mc B\), hatched cells are unsafe, and the star marks the initial state \(s_1\).}
        \label{rf:fig:gridworld}
    \end{subfigure}
    \hfill
    \begin{subfigure}[t]{0.6\textwidth}
        \centering
        \includegraphics[height=\panelheight]{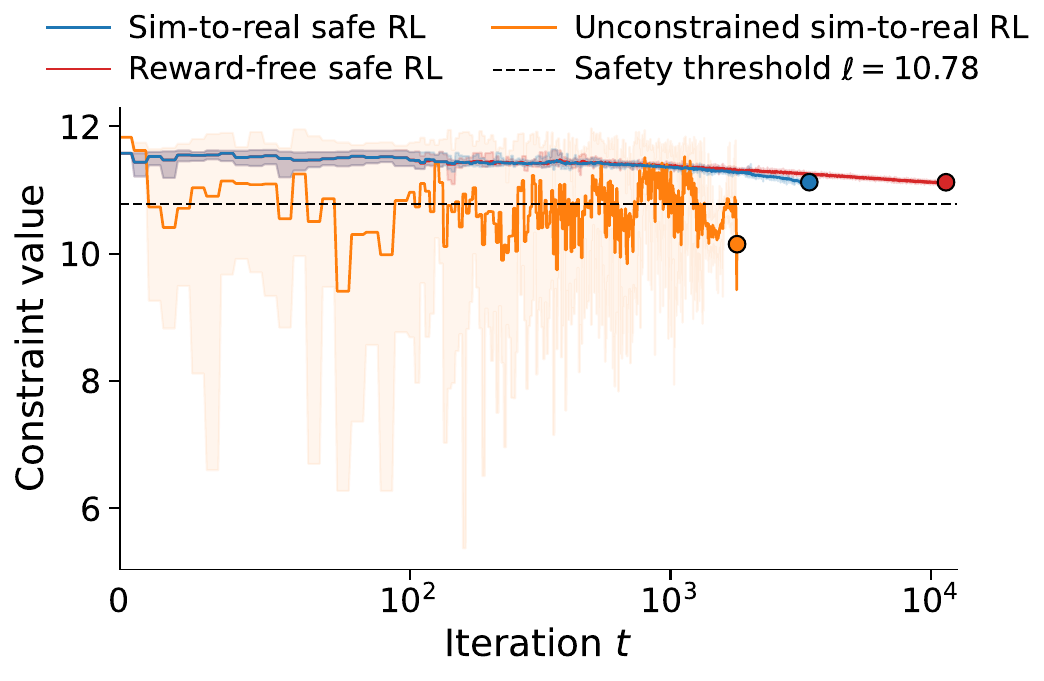}
        \caption{Constraint values of deployed policies for three algorithms over iterations. Markers denote termination points. Shaded regions indicate the min–max range across 10 seeds.}
        \label{rf:fig:safety}
    \end{subfigure}
    \label{rf:fig:exp1}
\end{figure}

\textbf{Safety versus sample complexity.}  Table~\ref{rf:tab:exp1} and Figure~\ref{rf:fig:safety} summarize the performance of all methods. All methods achieve accurate planning, but differ substantially in safety and sample complexity. The unconstrained sim-to-real baseline is the most sample-efficient, but violates the safety constraint in \(50.2\%\) of learning iterations. At the other extreme, the reward-free safe RL baseline guarantees safety throughout learning but requires \(6.4\) times as many samples from \(\mc M^{\mathrm{real}}\) compared with unconstrained sim-to-real baseline. Our method strikes a favorable balance: it maintains safe exploration while using only \(1.9\) times more samples than the unconstrained baseline. These results show that simulator information can be exploited safely to substantially reduce real-world interaction without sacrificing planning accuracy. Figure~\ref{rf:fig:heatmaps} provides further insight into how these gains arise by visualizing the state visitation distribution of each algorithm. Sim-to-real safe RL concentrates its exploration on the mismatch region, while the reward-free safe RL baseline spreads visits almost uniformly across the grid, re-learning dynamics that the simulator already models correctly. Meanwhile, sim-to-real safe RL spends less time visiting the unsafe region compared with the unconstrained baseline.

\begin{table}[t]
    \centering
\caption{Performance comparison over \(10\) seeds. Planning accuracy is defined in \eqref{eq:planning_goal} and is reported as mean \(\pm\) standard deviation over \(20\) random reward functions per seed. Unsafe iterations denote the fraction of learning iterations in which the constraint is violated.}
    \label{rf:tab:exp1}
   \resizebox{\linewidth}{!}{\begin{tabular}{lccc}
\toprule
Algorithm & Samples from $\mathcal{M}^{\mathrm{real}}$ & Suboptimality gap & Unsafe iterations \\
\midrule
Sim-to-real safe RL & $39,168 \pm 672$ & $0.003 \pm 0.008$ & $0.0\%$ \\
Reward-free safe RL & $135,456 \pm 545$ & $0.003 \pm 0.007$ & $0.0\%$ \\
Unconstrained sim-to-real RL & $21,114 \pm 276$ & $0.003 \pm 0.009$ & $50.2\%$ \\
\bottomrule
\end{tabular}
}
\end{table}
\begin{figure}[htbp]
    \centering
    \includegraphics[width=\linewidth]{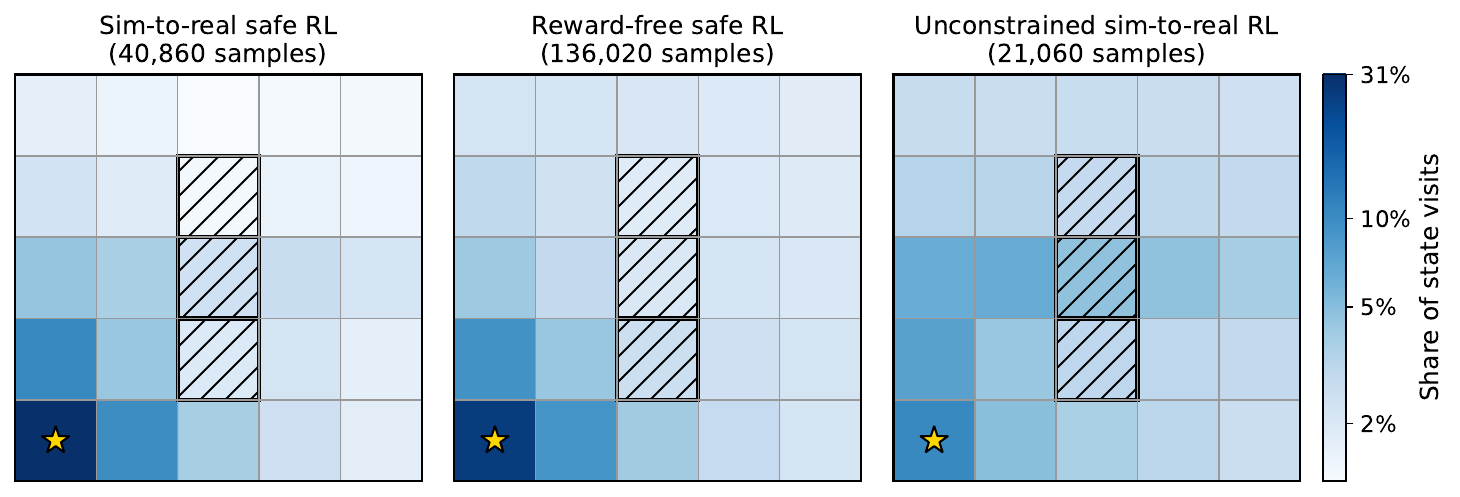}
    \caption{State visitation of the three algorithms. Each panel is normalized by the algorithm’s total number of samples, reported in the panel title, so the plots show the \emph{share} of visits falling in each cell and compare where real-world interaction is spent. The color scale is shared across panels and logarithmic. We use a logarithmic scale because \(s_1\) accounts for \(10\)--\(30\%\) of visits simply by being visited once per iteration by construction, which would otherwise dominate the rest of the grid on a linear scale.}
    \label{rf:fig:heatmaps}
\end{figure}

\textbf{Assumption verification.} For Assumption~\ref{ass:slater}, the baseline policy \(\pi^0\) can be obtained in practice through behavior cloning from expert demonstrations~\citep{torabi2018behavioral} or from a known safe controller. Its margin \(\xi\) can then be estimated by rolling out \(\pi^0\) in \(\mc M^{\mathrm{real}}\) and subtracting the safety threshold \(\ell\) from a lower confidence bound on its constraint value. Here, we manually construct \(\pi^0\) using the gridworld structure. For Assumption~\ref{ass:separation}, \(\epsilon_s\) and \(\sigma_s\) can be specified using domain knowledge or estimated from independent data~\citep{brunskill2013sample}. Here, direct computation gives \(\epsilon_s=0\) and \(\sigma_s=0.68\). When direct computation is unavailable, we provide Algorithm~\ref{alg:estimate_sigma_s} to estimate \(\sigma_s\) for a specified tolerance \(\epsilon_s\) on small modeling errors. Proposition~\ref{prop:calibration} shows that the estimate is conservative, i.e., \(\hat\sigma_s\leq\sigma_s\), when all mismatch triples are covered; otherwise, it provides a lower bound on the separation over the detected mismatch region. Appendix~\ref{app_sample_sig_s} analyzes the sample complexity of this procedure and shows that smaller values of \(\sigma_s\) require more samples.

Appendix~\ref{app_exp_sigma_s_3} studies the effect of misspecifying \(\sigma_s+\epsilon_s\), which determines the threshold in Line~3 of Algorithm~\ref{alg:hybrid}. Underestimation preserves the planning accuracy and safe exploration guarantees of Theorem~\ref{thm:main} but requires more samples. Overestimation, however, may incorrectly remove true mismatch triples from \(\hat{\mc B}^t\), reducing planning accuracy, as observed in our experiments. Such removals can also invalidate the safe exploration guarantee, although no unsafe exploration was observed experimentally. A conservative estimate of \(\sigma_s+\epsilon_s\) is therefore preferable.

Implementation details and additional ablation results on $|\mc B|$ and $\sigma_s$ are provided in Appendix~\ref{app:exp_details}.\looseness-1
\section{Conclusion and future work}
We proposed a computationally efficient algorithm for safe sim-to-real transfer that uses simulator information to guide safe real-world data collection and correct mismatched transitions. We established high-probability guarantees for safe exploration and reward-free planning. The sample-complexity bound quantifies the benefit of simulator access, particularly when the mismatch region is small and the separation gap is large. Gridworld experiments demonstrate these benefits.

\textbf{Limitations and future work.}
This work is limited to tabular CMDPs and gridworld experiments. Future directions include extending the approach to continuous spaces via function approximation, evaluating it on more realistic benchmarks, and establishing lower bounds for safe sim-to-real transfer.

\bibliographystyle{plainnat}
\bibliography{ref}
\appendix
\addtocontents{toc}{\protect\setcounter{tocdepth}{3}}
\renewcommand{\contentsname}{Organization of the Appendix}
\clearpage
\tableofcontents
\clearpage
\section{Comparison with related work}
\label{app_com}
In this section, we compare our setting with the best-known prior work along four directions: (i) \emph{sim-to-real}, whether the algorithm exploits simulator information despite mismatch with the real world or learns entirely from real-world interaction, corresponding to the fully online setting; (ii) \emph{reward-free learning}, whether the returned model supports planning for arbitrary reward functions or only the single reward used during learning; (iii) \emph{constraints}, whether constraints are part of the problem formulation; and (iv) \emph{safe exploration}, whether the algorithm guarantees that constraints are satisfied during the interaction with the real world.  The results are summarized in Table~\ref{tab:comparison}. Our method is the only one in the comparison that simultaneously handles simulator mismatch, constraints, safe real-world exploration, and reward-free planning for arbitrary rewards.To the best of our knowledge, it is the first provable algorithm with all four properties.
\begin{table}[H]
    \centering
    \caption{Comparison with related work.}
    \label{tab:comparison}
    \begingroup
    \small
    \setlength{\tabcolsep}{3pt}
    \renewcommand{\arraystretch}{1.18}
    \begin{tabularx}{\textwidth}{@{}>{\raggedright\arraybackslash}p{0.22\textwidth}@{\hspace{2pt}}>{\centering\arraybackslash}p{0.115\textwidth}>{\centering\arraybackslash}p{0.065\textwidth}>{\centering\arraybackslash}p{0.09\textwidth}>{\centering\arraybackslash}p{0.135\textwidth}@{\hspace{2pt}}>{\centering\arraybackslash}X@{}}
        \toprule
        {\scriptsize\bfseries Algorithm} & {\scriptsize\bfseries Sim-to-real} & {\scriptsize\bfseries Reward} & {\scriptsize\bfseries Constraints} & {\scriptsize\bfseries Safe exploration} & {\scriptsize\bfseries Sample complexity} \\
        \midrule
        \citet{menard2021fast} & $\times$ & any & $\times$ & $\times$ & \(\widetilde{\mc O}\!\left(\frac{H^4|\mc S|^2|\mc A|}{\epsilon^2}\right)\) \\
        \citet{miryoosefi2022simple} & $\times$ & any & $\checkmark^\dagger$ & $\times$ & \adjustbox{max width=\linewidth}{\(\displaystyle \widetilde{\mc O}\!\biggl(\frac{\min\{d,|\mc S|\}H^5|\mc S||\mc A|}{\epsilon^2}+\frac{H^4|\mc S|^2|\mc A|}{\epsilon}\biggr)\)} \\
        \citet{huang2023safe} & $\times$ & any & \checkmark & \checkmark & \(\widetilde{\mc O}\!\left(\frac{|\mc A||\mc S|^2H^8}{\epsilon^2\xi^4}\right)\) \\
        \citet{yu2025improved} & $\times$ & fixed & \checkmark & \checkmark & \(\widetilde{\mc O}\!\left(\frac{|\mc A||\mc S|^2H^6}{\epsilon^2\xi^2}\right)\) \\
        \citet{ni2025learning} & $\times$ & fixed & \checkmark & \checkmark & \(\widetilde{\mc O}\!\left(\frac{1}{\min\{\epsilon^6,\xi^6\}}\right)\) \\
        \citet{wagenmaker2024overcoming} & \checkmark & fixed & $\times$ & $\times$ & \(\widetilde{\mc O}\!\left(\frac{H^{16}}{\epsilon^8}\right)\) \\
        \citet{qu2025hybrid} & \checkmark & fixed & $\times$ & $\times$ & \adjustbox{max width=\linewidth}{\shortstack{{\(\displaystyle \widetilde{\mc O}\!\left(\min\!\left\{\frac{H^4|\mc A||\mc S|}{\epsilon^2},\frac{H^3|\mc B|}{\epsilon^2}+\frac{H^3|\mc S|^2|\mc A|}{\sigma_s^2\sigma_r^2}\right\}\right)\)}}} \\
        \citet{wu2026unified} & \checkmark & fixed & $\times$ & $\times$ & \(\widetilde{\mc O}\!\left(\frac{H^4|\mc A||\mc S|}{\epsilon^2}\right)\) \\
        \midrule
        \textbf{Our algorithm} & \checkmark & any & \checkmark & \checkmark & \adjustbox{max width=\linewidth}{\shortstack[l]{{\(\displaystyle \widetilde{\mc O}\!\biggl(\min\biggl\{\frac{H^7|\mc S|^2|\mc A|}{\xi^2\epsilon^2},{\frac{H^6|\mc S||\mc B|}{\xi^2\epsilon^2}}+{}\)}\\{\(\displaystyle {\frac{H^5|\mc S|\bigl(H|\mc S||\mc A|-|\mc B|\bigr)(1+(\sigma_s-\epsilon_s)H^2)}{\xi\epsilon(\sigma_s-\epsilon_s)H^2}}\biggr\}\biggr)\)}}} \\
        \bottomrule
    \end{tabularx}
    \par\smallskip
    \endgroup
\end{table}
\begin{remark}
\begin{itemize}
\item \citet[Theorem~4.1]{miryoosefi2022simple} require
\[
\widetilde{\mc O}\!\left(\frac{\min\{d,|\mc S|\}H^4|\mc S||\mc A|}{\epsilon^2}+\frac{H^3|\mc S|^2|\mc A|}{\epsilon}\right)
\]
iterations, where \(d\) is the dimension of the vector-valued return. To convert this result to the sample-complexity bound used in this work, we multiply the bound by \(H\) to account for the \(H\) state-action samples collected in each iteration.

However, their analysis guarantees \(\epsilon\)-optimality while allowing the returned constraint value vector to lie within distance \(\epsilon\) of the constraint set, as specified in \citet[Definition~2.4]{miryoosefi2022simple}. For the scalar constraint considered in this work, the returned policy may therefore violate the constraint by up to \(\epsilon\):
\(
V_{c,1}^{P,\pi}(s_1)\geq\ell-\epsilon.
\)
In contrast, Objective~\ref{objective} requires exact feasibility:
\(
V_{c,1}^{P,\pi}(s_1)\geq\ell.
\)

\item Both \citet{qu2025hybrid,wu2026unified} consider finite-horizon MDPs with stationary transition dynamics and state their sample-complexity bounds in terms of the number of online iterations. For a consistent comparison, Table~\ref{tab:comparison} expresses all bounds in terms of the total number of state-action samples collected from the real environment under non-stationary dynamics. We therefore convert the prior bounds in two steps: we multiply the number of online iterations by \(H\) to obtain the number of state-action samples, and we replace \(SA\) with \(HSA\) to account for a separate transition kernel at each timestep.

Moreover, \citet{qu2025hybrid} assume \(\sigma_r\)-reachability: there exists a \textbf{known}
constant \(\sigma_r\in(0,1]\) such that, for every \((h,s,a)\in[H]\times\mathcal S\times\mathcal A\),
\[
\max_{\pi\in\Pi} d_h^{P^{\mathrm{real}},\pi}(s,a)\ge \sigma_r,
\]
where \(d_h^{P^{\mathrm{real}},\pi}(s,a)\) is the probability of visiting \((s,a)\) at timestep \(h\)
under policy \(\pi\). As shown in Table~\ref{tab:comparison}, their sample complexity depends
polynomially on \(1/\sigma_r\). In contrast, our analysis does not require this reachability
assumption and therefore has no dependence on \(\sigma_r\).

\citet{qu2025hybrid} also impose a strict separation condition: there exists a \textbf{known} constant \(\sigma_s\in(0,1]\) such that, for every \((h,s,a)\in[H]\times\mc S\times\mc A\),
\[
\dtv{P_h^{\mathrm{real}}(\cdot\mid s,a)}
    {P_h^{\mathrm{sim}}(\cdot\mid s,a)}
\in \{0\}\cup[\sigma_s,1].
\]
Thus, the simulator transition must either match the real transition exactly or differ from it by at least \(\sigma_s\). Our Assumption~\ref{ass:separation} relaxes this condition by allowing the discrepancy to lie in
\[
[0,\epsilon_s]\cup[\sigma_s,1].
\]
This relaxation allows for a small, nonzero modeling errors and therefore captures a broader class of sim-to-real mismatches.
\end{itemize}
\end{remark}

\section{Experimental details}\label{app:exp_details}
\paragraph{Computational resources.}
All experiments were conducted on a single desktop machine equipped with an AMD Ryzen~9 9950X3D CPU (16 cores) and 128 GB RAM. No GPU was used. All planning subproblems were formulated as linear programs over the occupancy-measure polytope and solved using the HiGHS solver through SciPy. In our largest instance, each linear program contained $H|\mathcal S||\mathcal A|=1200$ decision variables.
\paragraph{Stationary adaptation.}
Here, we describe how we adapt Algorithm~\ref{alg:hybrid} to the stationary setting, where the transition dynamics are time-independent. In this case, visitation counts are aggregated across timesteps:
\[
n^t(s,a)=\sum_{i=1}^{t-1}\sum_{h=1}^H\mathbf{1}_{\{(s_h^i,a_h^i)=(s,a)\}},\;n^t(s,a,s')=\sum_{i=1}^{t-1}\sum_{h=1}^H\mathbf{1}_{\{(s_h^i,a_h^i,s_{h+1}^i)=(s,a,s')\}}.
\]
The empirical model is
\[
\hat P^t(s'\mid s,a)=\mathbf{1}_{\{n^{t}(s,a)>0\}}\frac{n^{t}(s,a,s')}{n^{t}(s,a)}
+
\mathbf{1}_{\{n^{t}(s,a)=0\}}\frac{1}{|\mc S|},
\]
The mismatch region and the exploration bonuses are indexed by \((s,a)\) instead of \((h,s,a)\), and each trajectory contributes \(H\) samples to the aggregated counts. 
\subsection{Baseline algorithms}
In this section, we describe two baseline algorithms: a reward-free safe RL algorithm and an unconstrained sim-to-real RL algorithm. Both are modifications of Algorithm~\ref{alg:hybrid} and are designed to isolate the benefits of simulator access and safe exploration, respectively.
\paragraph{Reward-free safe RL setting.} 
To adapt Algorithm~\ref{alg:hybrid} to the reward-free safe RL setting, we treat all state-action pairs as mismatch triples, i.e., \(\mc B=[H]\times\mc S\times\mc A\). In this case, the algorithm does not exploit simulator information and reduces to an online reward-free safe  RL algorithm. Correspondingly, we define the empirical model \(\hat P^{t}\) and the bonus \(b^t\) as
\begin{equation}\label{eq:new}
\begin{aligned}
\hat P_h^{t}(s'\mid s,a)\coloneqq\hat P_h^{t,\mathrm{real}}(s'\mid s,a)\ \text{as in Equation~\eqref{eq_def_hat_P_real}},\quad
b_h^t(s,a)\coloneqq H\rho_h^t(s,a).
\end{aligned}
\end{equation}
We present reward-free safe RL algorithm below.
\begin{algorithm}[H]
\caption{Reward-free safe RL}
\label{alg:reward-free safe RL}
\begin{algorithmic}[1]
\Require A strictly feasible baseline policy $\pi^0$ with margin $\xi$, safety threshold $\ell$, accuracy $\tau\le \xi/4$,  and access to $\mc M^{\mathrm{real}}$, confidence level $\delta\in(0,1)$
\State Initialize $n_h^0(s,a)\gets0$ for all $(h,s,a)\in[H]\times\mc S\times\mc A$.
\For{$t=1,2,\ldots$}
    \State Construct $\hat P^{t}$ and $b^t$ from trajectories $\{\tau^i\}_{i=1}^{t-1}$, as defined in Equation \eqref{eq:new}. 
    \If{$V_{c,1}^{\hat P^t,\pi^0}(s_1)<\ell+\xi/2$}
        \State $\pi^t\gets\pi^0$
    \Else \Comment{Computing safe exploration policy}
        \State $\bar\pi^t\in\arg\max_{\pi\in\Pi}\left\{V_{b,1}^{\hat P^t,\pi}(s_1):V_{c,1}^{\hat P^t,\pi}(s_1)\geq\ell\right\}$
        \State $\Delta^t\gets\min\left\{H,\,V_{b,1}^{\hat P^t,\bar\pi^t}(s_1)\right\}$
        \If{$\Delta^t\leq\tau/2$}
            \State \textbf{break} the \textbf{for} loop
        \EndIf
        \State $\pi^t\gets\alpha^t\bar\pi^t+(1-\alpha^t)\pi^0$, where $\alpha^t$ is defined in Equation \eqref{eq_def_policy}
    \EndIf
    \State Roll out $\pi^t$ in $\mc M^{\mathrm{real}}$ and observe $\tau^t\coloneqq\{(s_h^t,a_h^t,s_{h+1}^t)\}_{h=1}^H$
\EndFor
\State \Return $\hat P^t$ and $\pi^{\mathrm{out}}\in\arg\max_{\pi\in\Pi}\left\{V_{r,1}^{\hat P^t,\pi}(s_1):V_{c,1}^{\hat P^t,\pi}(s_1)\geq\ell+\tau/2\right\}$ for any $r\in\mc F$
\end{algorithmic}
\end{algorithm}  
\paragraph{Unconstrained sim-to-real RL setting.} To adapt Algorithm~\ref{alg:hybrid} to the unconstrained sim-to-real RL setting, we ignore the safety constraint and compute the exploration policy directly by maximizing the bonus value, without enforcing safe exploration. Correspondingly, we choose
\[
\pi^t\in\arg\max_{\pi\in\Pi} V_{b,1}^{\hat P^t,\pi}(s_1)
\]
as the exploration policy. We present the unconstrained sim-to-real RL algorithm below.
\begin{algorithm}[H]
\caption{Unconstrained sim-to-real RL}
\label{alg:unconstrained sim-to-real RL}
\begin{algorithmic}[1]
\Require Accuracy $\varepsilon$, simulator information $\mc M^{\mathrm{sim}}$, and access to $\mc M^{\mathrm{real}}$, separation parameters $\sigma_s$ and $\epsilon_s$, confidence level $\delta\in(0,1)$, and $(c,\ell)$ if constrained planning is requested
\State Initialize $n_h^0(s,a)\gets0$ for every triple and $\hat{\mathcal B}^0\gets[H]\times\mathcal S\times\mathcal A$.
\For{$t=1,2,\ldots$}
    \State Construct \(\hat{\mc B}^t\) as in Equations \eqref{eq_deletetion_rule} and \eqref{def:estimate_mismatch} \Comment{ Estimating the mismatch region}
    \State Construct \(\hat P^t\) as in Equations \eqref{eq_def_hat_P_real} and \eqref{eq_def_hybrid}  \Comment{Constructing the hybrid model}
        \State $\pi^t\in\arg\max_{\pi\in\Pi}V_{b,1}^{\hat P^t,\pi}(s_1)$ \Comment{Computing exploration policy}
        \State $\Delta^t\gets\min\left\{H,\, V_{b,1}^{\hat P^t,{\pi^t}}(s_1)\right\}$ 
        \If{$\Delta^t\leq\tau/2$}
            \State \textbf{break} the \textbf{for} loop
        \EndIf
    \State Roll out $\pi^t$ in $\mc M^{\mathrm{real}}$ and observe $\tau^t\coloneqq\{(s_h^t,a_h^t,s_{h+1}^t)\}_{h=1}^H$
\EndFor
\State \Return $\hat P^t$ and $\pi^{\mathrm{out}}\in\arg\max_{\pi\in\Pi}\left\{V_{r,1}^{\hat P^t,\pi}(s_1):V_{c,1}^{\hat P^t,\pi}(s_1)\geq\ell+\tau/2\right\}$ for any $r\in\mc F$
\end{algorithmic}
\end{algorithm}

\subsection{Sample complexity versus the size of the mismatch region \texorpdfstring{$|\mc B|$}{|B|}} \label{app_abl_B}
In this section, we study how the size of the mismatch region, \(|\mc B|\), affects the sample complexity of Algorithm~\ref{alg:hybrid} in achieving safe exploration and accurate planning.

The size of the mismatch region is controlled by the number of windy cells in the gridworld. Since each windy cell affects all four actions, \(n\) windy cells correspond to \(|\mc B|=4n\) mismatch state-action pairs. We vary the number of windy cells so that \(|\mc B|\in\{4,8,12,24,40,64,84,100\}\), out of \(|\mc S||\mc A|=100\). The default configuration has three windy cells adjacent to the unsafe wall, yielding \(|\mc B|=12\). Windy cells are selected according to a fixed priority order rather than by uniform sampling. We first rank all regular cells, excluding the three unsafe cells and the initial state, by increasing Manhattan distance to the center of the unsafe wall. Cells at the same distance are ordered randomly using the instance seed. The unsafe cells are appended next, followed by the initial state \(s_1\). The first \(n\) cells in this ordering are then designated as windy cells. This construction has two important properties. First, mismatch is introduced closest to the unsafe wall first, so the sparse-mismatch regime is safety-critical rather than benign. Second, the construction is incremental: the windy set for a smaller \(|\mc B|\) is always contained in the windy set for any larger \(|\mc B|\). Thus, the experiment adds mismatch to a fixed layout instead of redrawing unrelated instances, making the growth in Figure~\ref{fig:abl_B} attributable to the size of \(\mc B\).

As shown in Figure~\ref{fig:abl_B}, the number of iterations required by Algorithm~\ref{alg:hybrid} to achieve safe exploration and accurate planning grows approximately linearly with \(|\mc B|\). When the mismatch region covers the entire state-action space, Algorithm~\ref{alg:hybrid} no longer exploits simulator information, and the total number of iterations coincides with that of the reward-free safe RL baseline.
\begin{figure}[htbp]
    \centering
    \includegraphics[width=0.78\linewidth]{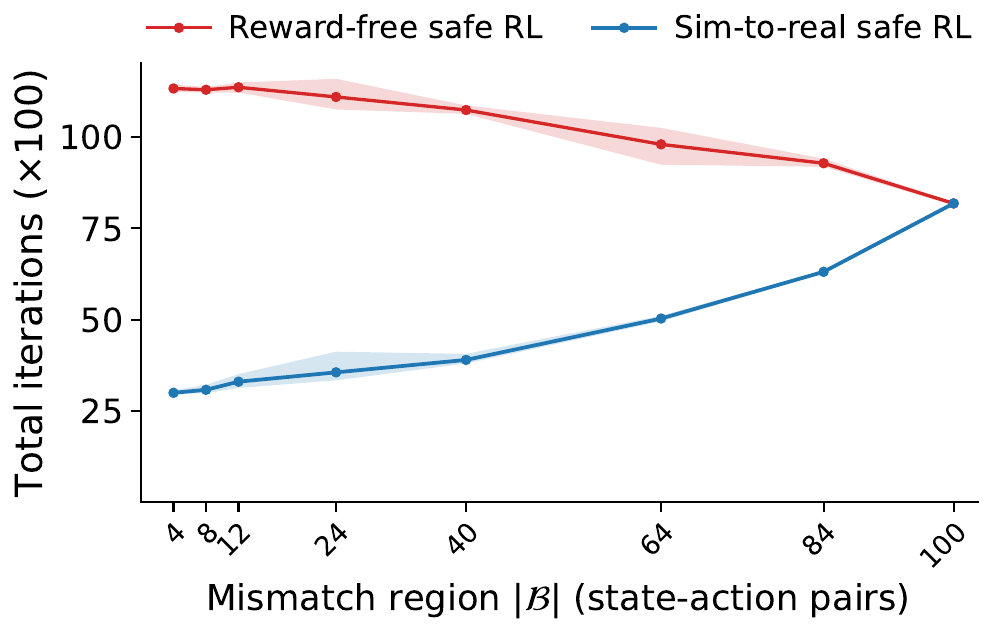}
    \caption{Total iterations versus the size of the mismatch region, \(|\mc B|\), at fixed \(\sigma_s=0.68\) and \(\epsilon_s=0\), where all runs ensure safe exploration and accurate planning. Shaded regions indicate the min--max range across \(10\) seeds.}
    \label{fig:abl_B}
\end{figure}
\subsection{Sample complexity versus the value of the separation parameter \texorpdfstring{$\sigma_s$}{sigma s}} \label{app_exp_sigma_s_1}
In this section, we study how different values of the separation parameter \(\sigma_s\) affect the sample complexity of Algorithm~\ref{alg:hybrid} in achieving safe exploration and accurate planning.

To vary value of \(\sigma_s\), we vary the wind strength \(p_{\mathrm{wind}}\), defined as the probability that the wind reverses the intended move. On a windy cell, the real transition is
\[
P^{\mathrm{real}}(\cdot\mid s,a)
=
(1-p_{\mathrm{wind}})P^{\mathrm{sim}}(\cdot\mid s,a)
+
p_{\mathrm{wind}}P^{\mathrm{sim}}(\cdot\mid s,\bar a),
\]
where \(\bar a\) denotes the action opposite to \(a\). Hence, the total-variation gap from the simulator is
\[
p_{\mathrm{wind}}
\left\|
P^{\mathrm{sim}}(\cdot\mid s,\bar a)
,
P^{\mathrm{sim}}(\cdot\mid s,a)
\right\|_{\mathrm{TV}},
\]
which scales linearly with \(p_{\mathrm{wind}}\). We vary
\[
p_{\mathrm{wind}}\in\{0.15,0.2,0.3,0.4,0.55,0.7,0.85\},
\]
which gives
\[
\sigma_s\in\{0.1275,\,0.17,\,0.255,\,0.34,\,0.4675,\,0.595,\,0.7225\}.
\]

As shown in Figure~\ref{fig:abl_sigma}, Algorithm~\ref{alg:hybrid} requires more iterations when \(\sigma_s\) is smaller, because non-mismatch state-action pairs become harder to distinguish from mismatch state-action pairs. When the separation is sufficiently small, the total number of iterations approaches that of the reward-free safe RL baseline.
\begin{figure}[htbp]
    \centering
    \includegraphics[width=0.78\linewidth]{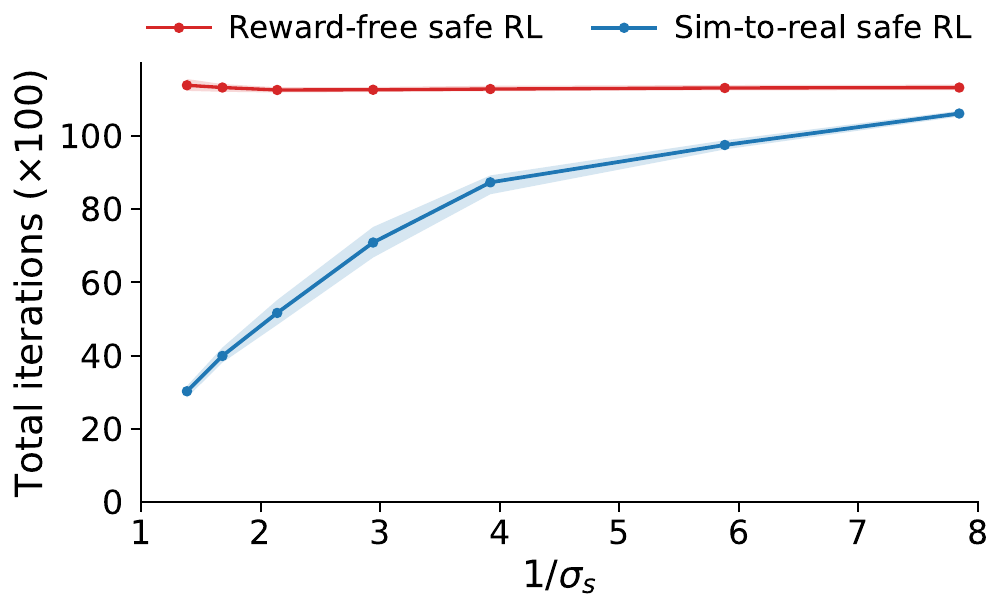}
    \caption{Total iterations versus $1/\sigma_s$ at fixed $|\mc B|=12$ and $\epsilon_s=0$, where all runs ensure safe exploration and accurate planning. Shaded regions indicate the min--max range across $10$ seeds.\looseness-1}
    \label{fig:abl_sigma}
\end{figure}
\subsection{Estimating the separation parameter \texorpdfstring{$\sigma_s$}{sigma-s} from real-world data}\label{app_exp_sigma_s_2}
In our experiments, the simulator matches the real dynamics exactly outside the mismatch region, so \(\epsilon_s=0\) and the separation gap reduces to \(\sigma_s\). We therefore focus on \(\sigma_s\), which determines how easily mismatch triples can be identified. Algorithm~\ref{alg:hybrid} uses \(\sigma_s\) only in the deletion rule \(\mc G^t\) in Line~3, where it sets the threshold \(\sigma_s/2\) for estimating \(\hat{\mc B}^t\). Because \(\sigma_s\) is generally unknown, Appendix~\ref{app_est_sigma_s_alg} presents the estimation procedure in Algorithm~\ref{alg:estimate_sigma_s} and its theoretical guarantees in Proposition~\ref{prop:calibration}. Appendix~\ref{app_sample_sig_s} studies the number of real-world samples required for \(\hat\sigma_s\) to approach \(\sigma_s\). 
\subsubsection{Algorithm for estimating \texorpdfstring{$\sigma_s$}{sigma-s} and its guarantees}\label{app_est_sigma_s_alg}
To estimate \(\sigma_s\), we roll out the strictly feasible baseline policy \(\pi^0\) in \(\mc M^{\mathrm{real}}\) for \(N\) episodes, which ensures safe exploration by Assumption~\ref{ass:slater}. We then use the collected trajectories to estimate the separation parameter \(\sigma_s\) as follows.

Given the trajectories \(\{\tau^i\}_{i=1}^{N}\) collected from \(\mc M^{\mathrm{real}}\) by the policy \(\pi^0\), we define the visitation counts for each \((h,s,a)\in[H]\times\mc S\times\mc A\) as
\[
n_h(s,a)\coloneqq\sum_{i=1}^{N}\mathbf{1}_{\{(s_h^i,a_h^i)=(s,a)\}}, \;\;
n_h(s,a,s')\coloneqq\sum_{i=1}^{N}\mathbf{1}_{\{(s_h^i,a_h^i,s_{h+1}^i)=(s,a,s')\}}.
\]
The empirical transition model estimated from real-world data is defined as
\begin{align}
    \hat P_h^{\mathrm{real}}(s'\mid s,a)\coloneqq\mathbf{1}_{\{n_h(s,a)>0\}}\frac{n_h(s,a,s')}{n_h(s,a)}+\mathbf{1}_{\{n_h(s,a)=0\}}\frac{1}{|\mc S|},\label{eq_empiracl}
\end{align}
and its statistical uncertainty is quantified by the confidence bound
\[{\rho_h(s,a)=\min\left\{1,\sqrt{\beta(n_h(s,a),\delta)/(2n_h(s,a)\vee1)}\right\},}
\]
where \(\beta(n,\delta)\coloneqq\log(2|\mc S||\mc A|H/\delta)+|\mc S|\log(8e(n+1))\).

The quantity we estimate is the minimum \emph{positive} total variation gap over the mismatch region. For every \((h,s,a)\in[H]\times\mc S\times\mc A\), let
\begin{align}
g_h(s,a)\coloneqq\dtv{P_h^{\mathrm{real}}(\cdot\mid s,a)}{P_h^{\mathrm{sim}}(\cdot\mid s,a)},\label{eq_def_gap}
\end{align}
so that mismatch region \(\mc B=\{(h,s,a):g_h(s,a)>0\}\) by Definition~\ref{def:mismatch_region} and \(\sigma_s=\min_{(h,s,a)\in\mc B}g_h(s,a)\) is the constant for which Assumption~\ref{ass:separation} holds. Neither \(\mc B\) nor the gaps \(g_h\) are observable, since \(P^{\mathrm{real}}\) is unknown.

Then, for each \((h,s,a)\in[H]\times\mc S\times\mc A\), we compute a lower confidence bound on \(g_h(s,a)\),
\begin{align}
L_h(s,a)\coloneqq\left(\dtv{\hat P_h^{\mathrm{real}}(\cdot\mid s,a)}{P_h^{\mathrm{sim}}(\cdot\mid s,a)}-\rho_h(s,a)\right)_{\!+}.
\label{eq:gap_lcb}
\end{align}
Consider
\begin{align}
    \mc C\coloneqq\bigl\{(h,s,a):n_h(s,a)\geq n_{\min}\;\text{and}\;\;L_h(s,a)>0\bigr\}, \label{eq_def_set_c}
\end{align}
the set of triples that are visited at least \(n_{\min}\) times and whose gap is provably positive; as we show below, every such triple is certified to be a mismatch triple. Unvisited triples have \(\rho_h(s,a)=1\) and hence \(L_h(s,a)=0\), so they never enter \(\mc C\). Finally, we compute the estimated separation parameter as
\begin{align}
\hat\sigma_s\coloneqq\min\Bigl\{1,\;\min_{(h,s,a)\in\mc C}L_h(s,a)\Bigr\},
\qquad\text{and}\qquad \hat\sigma_s\coloneqq0 \;\text{ if }\; \mc C=\emptyset.
\label{eq:sigma_est}
\end{align}
The procedure of estimation of $\sigma_s$ is summarized in Algorithm~\ref{alg:estimate_sigma_s}. 

\begin{algorithm}[H]
\caption{Safe estimation of the separation parameter $\sigma_s$}
\label{alg:estimate_sigma_s}
\begin{algorithmic}[1]
\Require Strictly feasible baseline policy $\pi^0$, simulator information $\mc M^{\mathrm{sim}}$, number of iterations $N$ from $\mc M^{\mathrm{real}}$, confidence level $\delta$, coverage requirement $n_{\min}\ge1$
\For{$i=1,\ldots,N$}
    \State Roll out $\pi^0$ in $\mc M^{\mathrm{real}}$ and observe $\tau^i\coloneqq\{(s_h^i,a_h^i,s_{h+1}^i)\}_{h=1}^H$ 
\EndFor
\State Compute $\hat P^{\mathrm{real}}$ as in Equation \eqref{eq_empiracl}, $L_h$ as in Equation \eqref{eq:gap_lcb} and $\mc C$ as in Equation \eqref{eq_def_set_c}.
\If{$\mc C\neq\emptyset$}
\State \Return $\hat\sigma_s\gets\min\{1,\min_{(h,s,a)\in\mc C}L_h(s,a)\}$
\Else
\State \Return $\hat\sigma_s\gets0$
\EndIf
\end{algorithmic}
\end{algorithm}
In the following, we characterize the statistical guarantee for \(\hat\sigma_s\) returned by Algorithm~\ref{alg:estimate_sigma_s}.

\begin{proposition}\label{prop:calibration}
Let Assumption~\ref{ass:separation} hold with \(\epsilon_s=0\), and let \(\sigma_s\) denote the minimum positive gap over the mismatch region \(\mc B\). Then, with probability at least \(1-\delta\), Algorithm~\ref{alg:estimate_sigma_s} guarantees the following:
\begin{enumerate}
\item[(i)] \(\mc C\subseteq\mc B\).
\item[(ii)]
\(
\hat\sigma_s\leq\min_{(h,s,a)\in\mc C}\dtv{P_h^{\mathrm{real}}(\cdot\mid s,a)}{P_h^{\mathrm{sim}}(\cdot\mid s,a)}=\min_{(h,s,a)\in\mc C}g_h(s,a),
\)
where the minimum over an empty set is interpreted as \(+\infty\).
\item[(iii)] If \(\emptyset\neq\mc B\subseteq\mc C\), then
\[
\sigma_s-2\max_{(h,s,a)\in\mc B}\rho_h(s,a)\leq\hat\sigma_s\leq\sigma_s.
\]
\end{enumerate}
\end{proposition}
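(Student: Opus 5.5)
We work on a single concentration event and derive all three claims from it deterministically. Let \(\mc E\) be the event that, for every \((h,s,a)\in[H]\times\mc S\times\mc A\),
\[
\dtv{\hat P_h^{\mathrm{real}}(\cdot\mid s,a)}{P_h^{\mathrm{real}}(\cdot\mid s,a)}\leq\rho_h(s,a).
\]
This is the same confidence statement as Lemma~\ref{lem:proba_master_event}, with the same \(\beta(n,\delta)\) and \(\rho_h\). It holds with probability at least \(1-\delta\): the data here come from \(N\) rollouts of the fixed policy \(\pi^0\), which is a special case of the adaptive data collection covered by that lemma. For unvisited triples the bound is trivial, since \(\rho_h=1\) and total variation is at most \(1\). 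From here on we assume \(\mc E\) holds.

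The key consequence of \(\mc E\) is a two-sided bound on the gap. By the triangle inequality for total variation, for every triple,
\[
\Bigl|\dtv{\hat P_h^{\mathrm{real}}(\cdot\mid s,a)}{P_h^{\mathrm{sim}}(\cdot\mid s,a)}-g_h(s,a)\Bigr|\leq\rho_h(s,a).
\]
Two facts follow. First, \(L_h(s,a)\leq g_h(s,a)\), since \(\hat d-\rho\leq g\), \(g\geq0\), and taking the positive part preserves the inequality. Second, \(L_h(s,a)\geq g_h(s,a)-2\rho_h(s,a)\), since \(\hat d\geq g-\rho\).

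For (i), take \((h,s,a)\in\mc C\). Then \(0<L_h(s,a)\leq g_h(s,a)\), so the gap is positive. Because \(\epsilon_s=0\), Assumption~\ref{ass:separation} forces \(g_h(s,a)\geq\sigma_s\), so \((h,s,a)\in\mc B\).

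For (ii), if \(\mc C=\emptyset\) the right-hand side is \(+\infty\) and the claim is trivial. Otherwise,
\[
\hat\sigma_s\leq\min_{\mc C}L_h\leq\min_{\mc C}g_h.
\]

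For (iii), assume \(\emptyset\neq\mc B\subseteq\mc C\). Combined with (i), this gives \(\mc C=\mc B\). Hence
\[
\hat\sigma_s\leq\min_{\mc B}g_h=\sigma_s .
\]
For the lower bound, choose \((h,s,a)\in\mc B\) attaining \(\min_{\mc B}L_h\). Then
\[
L_h(s,a)\geq g_h(s,a)-2\rho_h(s,a)\geq\sigma_s-2\max_{\mc B}\rho_h .
\]
It remains to account for the outer cap at \(1\) in the definition of \(\hat\sigma_s\). Since \(\sigma_s\leq1\), we have
\[
\min\Bigl\{1,\min_{\mc B}L_h\Bigr\}\geq\min\Bigl\{1,\sigma_s-2\max_{\mc B}\rho_h\Bigr\}=\sigma_s-2\max_{\mc B}\rho_h .
\]

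The only step needing care is establishing \(\mc E\) uniformly over all triples with random counts \(n_h(s,a)\). The plan is to invoke Lemma~\ref{lem:proba_master_event} directly, or equivalently to repeat its proof: a time-uniform \(L_1\) deviation bound for the categorical distribution, with a union bound over the \(H|\mc S||\mc A|\) triples, gives exactly the stated \(\beta\). Everything after that is deterministic.
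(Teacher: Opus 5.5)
Your proposal is correct and follows essentially the same route as the paper. Both condition on the concentration event of Lemma~\ref{lem:proba_master_event}, derive $g_h(s,a)-2\rho_h(s,a)\le L_h(s,a)\le g_h(s,a)$ from the triangle inequality, and read off (i)--(iii), including the same treatment of the cap at $1$. One small correction: that event is a KL bound (Lemma~\ref{lemma:max_ineq_categorical}) turned into your total-variation event by Pinsker's inequality, not a direct $L_1$ deviation bound, and that is where the stated $\beta$ comes from.
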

Proposition~\ref{prop:calibration} shows that Algorithm~\ref{alg:estimate_sigma_s} provides a conservative estimate of both the mismatch region and the separation parameter: with high probability, every triple in \(\mc C\) is a true mismatch triple, and \(\hat\sigma_s\) lower bounds the smallest simulator-real gap over this set. Consequently, if the collected data are sufficiently rich to identify all mismatch triples, i.e., \(\mc B\subseteq\mc C\), then \(\hat\sigma_s\le\sigma_s\).

As shown in Appendix~\ref{app_sample_sig_s}, when sufficient data are sampled from \(\mc M^{\mathrm{real}}\), the estimated separation parameter \(\hat\sigma_s\) approaches the true value \(\sigma_s\) with high probability. In this case, \(\mc C=\mc B\), and \(\hat\sigma_s\) is a conservative estimate of \(\sigma_s\). Furthermore, as shown in Appendix~\ref{app_exp_sigma_s_3} and implied by Theorem~\ref{thm:main}, such a conservative estimate is sufficient for Algorithm~\ref{alg:hybrid} to achieve safe exploration and accurate planning, at the cost of increased sample complexity.

Next, we present proof of Proposition~\ref{prop:calibration}. 
\begin{proof}[Proof of Proposition~\ref{prop:calibration}]
By Lemma~\ref{lemma:max_ineq_categorical}, applied to each \((h,s,a)\) at level \(\delta/(2|\mathcal S||\mathcal A|H)\), and by a union bound over the \(H|\mathcal S||\mathcal A|\) triples, exactly as in Lemma~\ref{lem:proba_master_event}, with probability at least
\(1-\delta\),
\begin{align}
    \KL\!\left(\hat P_h^{\mathrm{real}}(\cdot\mid s,a),P_h^{\mathrm{real}}(\cdot\mid s,a)\right)\leq\frac{\beta(n_h(s,a),\delta)}{n_h(s,a)}, \, \forall (h,s,a)\in[H]\times\mathcal S\times\mathcal A.
\end{align}
Condition on this event. For every triple with \(n_h(s,a)\geq1\), Pinsker's inequality gives
\[
\dtv{\hat P_h^{\mathrm{real}}(\cdot\mid s,a)}{P_h^{\mathrm{real}}(\cdot\mid s,a)}\leq\rho_h(s,a).
\]
By the triangle inequality,
\[
\left|\dtv{\hat P_h^{\mathrm{real}}(\cdot\mid s,a)}{P_h^{\mathrm{sim}}(\cdot\mid s,a)}-g_h(s,a)\right|\leq\rho_h(s,a).
\]
Consequently,
\[
L_h(s,a)=\left(\dtv{\hat P_h^{\mathrm{real}}(\cdot\mid s,a)}{P_h^{\mathrm{sim}}(\cdot\mid s,a)}-\rho_h(s,a)\right)_{+}\leq g_h(s,a).
\]
Thus, \(L_h(s,a)>0\) implies \(g_h(s,a)>0\), so every triple in \(\mathcal C\) belongs to \(\mathcal B\). This proves (i).

For (ii), if \(\mathcal C=\emptyset\), Algorithm~\ref{alg:estimate_sigma_s} returns \(\hat\sigma_s=0\). If \(\mathcal C\neq\emptyset\), then
\[
\hat\sigma_s=\min\left\{1,\min_{(h,s,a)\in\mathcal C}L_h(s,a)\right\}\leq\min_{(h,s,a)\in\mathcal C}L_h(s,a)\leq\min_{(h,s,a)\in\mathcal C}g_h(s,a).
\]

For (iii), we assume that \(\mc B\neq\emptyset\). Indeed, if \(\mc B=\emptyset\), then the simulator is exact, \(\sigma_s\) is unconstrained by Assumption~\ref{ass:separation}, and part~(i) forces \(\mc C=\emptyset\) and \(\hat\sigma_s=0\), so the claim is vacuous. Now suppose that \(\mc B\subseteq\mc C\) and \(\mc B\neq\emptyset\). Since \(\mathcal B\neq\emptyset\), this implies \(\mathcal C\neq\emptyset\). Together with (i), it gives \(\mathcal C=\mathcal B\). Hence,
  \[
  \hat\sigma_s\leq\min_{(h,s,a)\in\mathcal C}g_h(s,a)=\min_{(h,s,a)\in\mathcal B}g_h(s,a)=\sigma_s.
  \]
  Furthermore,
  \[
  \dtv{\hat P_h^{\mathrm{real}}(\cdot\mid s,a)}{P_h^{\mathrm{sim}}(\cdot\mid s,a)}\geq g_h(s,a)-\rho_h(s,a),
  \]
  so
  \[
  L_h(s,a)\geq\left(g_h(s,a)-2\rho_h(s,a)\right)_{+}\geq g_h(s,a)-2\rho_h(s,a).
  \]
  Using \(\mathcal C=\mathcal B\),
  \[
  \begin{aligned}
  \min_{(h,s,a)\in\mathcal C}L_h(s,a)&\geq\min_{(h,s,a)\in\mathcal B}\left(g_h(s,a)-2\rho_h(s,a)\right)\\
  &\geq\min_{(h,s,a)\in\mathcal B}g_h(s,a)-2\max_{(h,s,a)\in\mathcal B}\rho_h(s,a)\\
  &=\sigma_s-2\max_{(h,s,a)\in\mathcal B}\rho_h(s,a).
  \end{aligned}
  \]
  Since \(\sigma_s-2\max_{(h,s,a)\in\mathcal B}\rho_h(s,a)\leq\sigma_s\leq1\), clipping at \(1\) preserves this lower bound. Therefore,
  \[
  \sigma_s-2\max_{(h,s,a)\in\mathcal B}\rho_h(s,a)\leq\hat\sigma_s\leq\sigma_s.
  \]
  \end{proof}
\subsubsection{Sample complexity of estimating \texorpdfstring{$\sigma_s$}{sigma-s}}
\label{app_sample_sig_s}
In this section, we investigate the sample complexity required to obtain a sufficiently accurate estimate \(\hat\sigma_s\). Here, sample complexity refers to the number of samples collected by rolling out the safe baseline policy \(\pi^0\) in \(\mc M^{\mathrm{real}}\).

Using the grid layout in Figure~\ref{rf:fig:gridworld}, we vary the \emph{profile} of the wind strength across the mismatch region. We consider three instances: (i) a homogeneous instance with \(p_{\mathrm{wind}}=0.8\); (ii) a homogeneous instance with \(p_{\mathrm{wind}}=0.35\), which has a smaller \(\sigma_s\) and therefore requires more samples to estimate accurately; and (iii) a heterogeneous instance in which \(p_{\mathrm{wind}}\) takes equispaced values from \(0.2\) to \(0.8\). In the heterogeneous instance, windy cells are ordered by increasing distance from the unsafe wall, with wind strength increasing along this order. Consequently, the weakest and hardest-to-certify mismatch lies closest to the unsafe wall, making this instance more challenging. We summarize the three instances in the table below.
\begin{center}
\begin{tabular}{lcc}
\toprule
Instance & $p_{\mathrm{wind}}$  & $\sigma_s$ \\
\midrule
I (homogeneous) & $0.8$ everywhere & $0.68$ \\
II (homogeneous) & $0.35$ everywhere & $0.2975$ \\
III (heterogeneous) & $0.2$ to $0.8$, equispaced & $0.17$ \\
\bottomrule
\end{tabular}
\captionof{table}{Three gridworld instances}\label{tabel_instance}
\end{center}
As shown in Figure~\ref{fig:abl_sigma_s_estimation}, \(\hat\sigma_s\) approaches \(\sigma_s\) from below as the number of samples increases, providing a conservative estimate. The estimated mismatch set \(\mc C\) coincides with the true mismatch region \(\mc B\) after \(2.4\times10^5\), \(1.2\times10^6\), and \(6\times10^5\) samples for Instances I, II, and III, respectively. Instance I is the easiest to estimate because it has the largest \(\sigma_s\), whereas Instances II and III require more samples because their separation parameters are smaller.
\begin{figure}[htbp]
    \centering
    \includegraphics[width=\linewidth]{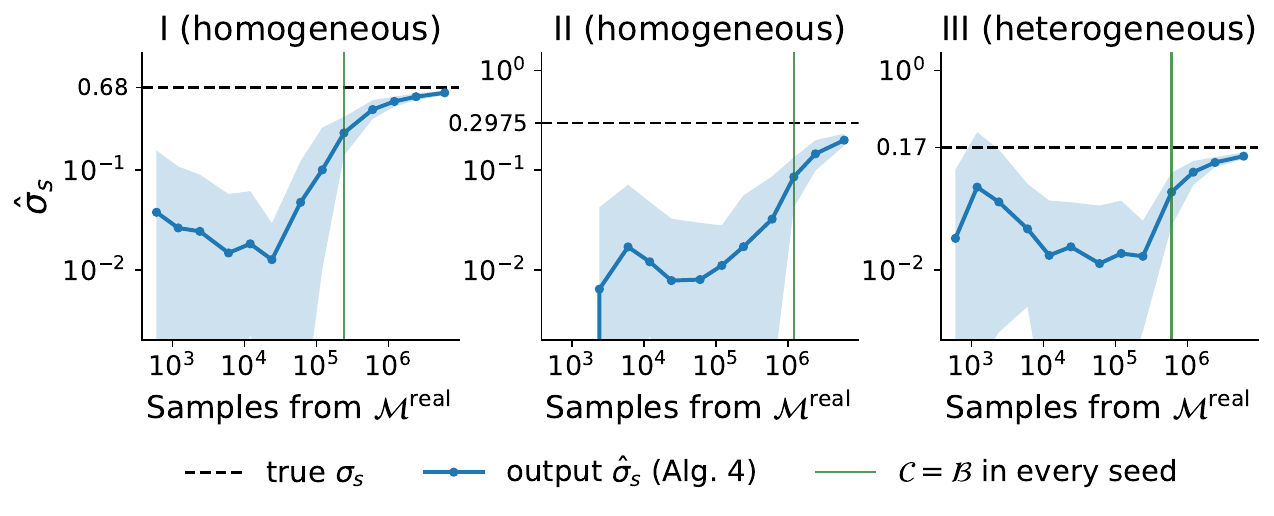}
    \caption{Sample complexity of estimating $\sigma_s$ in the three instances. The curves show the mean over \(20\) seeds, with shaded regions indicating the min--max range.}
    \label{fig:abl_sigma_s_estimation}
\end{figure}
\subsection{Effect of misspecifying \texorpdfstring{$\sigma_s+\epsilon_s$}{sigma-s}}
\label{app_exp_sigma_s_3}
In this section, we investigate the effect of replacing \(\sigma_s+\epsilon_s\) with its estimate \(\hat\sigma_s+\hat\epsilon_s\) in Algorithm~\ref{alg:hybrid}. This quantity determines the threshold used in Line~3 to estimate the mismatch region \({\mc B}\).

We consider the same three instances as in Table~\ref{tabel_instance}. Since \(\epsilon_s=0\) in each instance, we set \(\hat\epsilon_s=0\) and vary \(\hat\sigma_s\) from \(0\) to \(1\), covering both underestimation and overestimation. For each instance, we test \(20\) values and evaluate sample complexity, safe exploration, and planning accuracy. Planning accuracy requires the returned policy to be both near-optimal and feasible in \(\mc M^{\mathrm{real}}\).

Table~\ref{fig:abl_sigma_s_estimation_3} reports the results. No deployed policy violated the safety constraint for any tested value of \(\hat\sigma_s+\hat\epsilon_s\). Underestimating \(\sigma_s+\epsilon_s\) preserves planning accuracy but makes the algorithm more conservative, thereby increasing the required real-world samples. In the extreme case \(\hat\sigma_s+\hat\epsilon_s=0\), the algorithm reduces to online reward-free safe RL. Overestimation has the opposite effect: it can reduce sample complexity but may cause true mismatch triples to be removed from \(\hat{\mc B}^t\), making the resulting model inaccurate for subsequent constrained reward-free planning. False deletions first occur at \(\hat\sigma_s+\hat\epsilon_s\approx2.5(\sigma_s+\epsilon_s)\) for Instance~II and at \(\hat\sigma_s+\hat\epsilon_s\approx3.2(\sigma_s+\epsilon_s)\) for Instance~III.

Overall, conservative underestimation preserves the planning-accuracy and safe-exploration guarantees of Theorem~\ref{thm:main}, but at the cost of higher sample complexity. By contrast, overestimation may remove true mismatch triples from \(\hat{\mc B}^t\), compromising planning accuracy as observed experimentally, and potentially safe exploration. Although no unsafe exploration was observed in our experiments, the theoretical safety guarantee need not hold after such false removals. Therefore, a conservative estimate of \(\sigma_s+\epsilon_s\) is preferable.

\begin{table}[htbp]
\centering
\caption{Performance of Algorithm~\ref{alg:hybrid} with different values of \(\hat\sigma_s+\hat \epsilon_s\). Samples from \(\mc M^{\mathrm{real}}\): total number of samples required from \(\mc M^{\mathrm{real}}\) before termination, reported as mean \(\pm\) standard deviation over 20 seeds. Unsafe iterations: share of seeds for which there exists at least one iteration \(t\) such that the deployed policy \(\pi^t\) is infeasible during the execution of Algorithm~\ref{alg:hybrid}. Pairs of \(\mc B\) lost: mean number of mismatch pairs from \(\mc B\) incorrectly removed from \(\hat{\mc B}^t\), averaged over 20 seeds. Runs with false deletion: share of seeds in which at least one pair from \(\mc B\) is lost. Output optimality: \(V_r^{\pi^*}-V_r^{\pi}\), averaged over \(10\) random reward functions per seed, where \(\pi\) is the output optimal policy for reward \(r\), and \(\pi^*\) is the true optimal policy for reward \(r\). Output feasibility: share of output policies that are feasible in \(\mc M^{\mathrm{real}}\). Rows in \textbf{bold} indicate settings in which the output policy is infeasible in \(\mc M^{\mathrm{real}}\) for at least one reward and seed.}
\label{fig:abl_sigma_s_estimation_3}
\scriptsize
\setlength{\tabcolsep}{4.5pt}
\renewcommand{\arraystretch}{0.92}
\providecommand{\bcell}[1]{\textbf{\boldmath #1}}
\begin{tabular}{lcccccc}
\toprule
$\hat\sigma_s+\hat \epsilon_s$ & Samples & Unsafe iterations & Pairs of $\mc B$ lost & Runs with false & Optimality & Output \\
 & & (share of seeds) & (out of $64$) & deletion & $V^{*}-V^{\pi}$ & feasible \\
\midrule
\multicolumn{7}{l}{\small\textbf{I (homogeneous)}, $\sigma_s=0.6800$ and $\epsilon_s=0$} \\[2pt]
0.000 & $115{,}878\pm3{,}135$ & $0.0\%$ & $0.0$ & $0\%$ & $0.0041$ & $100.0\%$ \\
0.050 & $115{,}878\pm3{,}135$ & $0.0\%$ & $0.0$ & $0\%$ & $0.0041$ & $100.0\%$ \\
0.100 & $112{,}266\pm3{,}090$ & $0.0\%$ & $0.0$ & $0\%$ & $0.0043$ & $100.0\%$ \\
0.150 & $105{,}744\pm1{,}884$ & $0.0\%$ & $0.0$ & $0\%$ & $0.0046$ & $100.0\%$ \\
0.200 & $104{,}238\pm1{,}232$ & $0.0\%$ & $0.0$ & $0\%$ & $0.0044$ & $100.0\%$ \\
0.250 & $98{,}163\pm770$ & $0.0\%$ & $0.0$ & $0\%$ & $0.0044$ & $100.0\%$ \\
0.300 & $91{,}086\pm1{,}227$ & $0.0\%$ & $0.0$ & $0\%$ & $0.0042$ & $100.0\%$ \\
0.340 & $85{,}677\pm1{,}336$ & $0.0\%$ & $0.0$ & $0\%$ & $0.0041$ & $100.0\%$ \\
0.400 & $77{,}085\pm1{,}101$ & $0.0\%$ & $0.0$ & $0\%$ & $0.0046$ & $100.0\%$ \\
0.450 & $71{,}586\pm852$ & $0.0\%$ & $0.0$ & $0\%$ & $0.0044$ & $100.0\%$ \\
0.500 & $66{,}771\pm730$ & $0.0\%$ & $0.0$ & $0\%$ & $0.0044$ & $100.0\%$ \\
0.550 & $63{,}795\pm536$ & $0.0\%$ & $0.0$ & $0\%$ & $0.0043$ & $100.0\%$ \\
0.595 & $62{,}361\pm465$ & $0.0\%$ & $0.0$ & $0\%$ & $0.0043$ & $100.0\%$ \\
0.620 & $61{,}356\pm501$ & $0.0\%$ & $0.0$ & $0\%$ & $0.0045$ & $100.0\%$ \\
0.650 & $60{,}636\pm592$ & $0.0\%$ & $0.0$ & $0\%$ & $0.0050$ & $100.0\%$ \\
0.700 & $59{,}571\pm463$ & $0.0\%$ & $0.0$ & $0\%$ & $0.0045$ & $100.0\%$ \\
0.750 & $58{,}701\pm419$ & $0.0\%$ & $0.0$ & $0\%$ & $0.0044$ & $100.0\%$ \\
0.800 & $58{,}074\pm333$ & $0.0\%$ & $0.0$ & $0\%$ & $0.0043$ & $100.0\%$ \\
0.850 & $57{,}411\pm355$ & $0.0\%$ & $0.0$ & $0\%$ & $0.0043$ & $100.0\%$ \\
1.000 & $56{,}229\pm304$ & $0.0\%$ & $0.0$ & $0\%$ & $0.0047$ & $100.0\%$ \\
\midrule
\multicolumn{7}{l}{\small\textbf{II (homogeneous)}, $\sigma_s=0.2975$ and $\epsilon_s=0$} \\[2pt]
0.000 & $118{,}506\pm3{,}659$ & $0.0\%$ & $0.0$ & $0\%$ & $0.0059$ & $100.0\%$ \\
0.050 & $118{,}506\pm3{,}659$ & $0.0\%$ & $0.0$ & $0\%$ & $0.0059$ & $100.0\%$ \\
0.100 & $112{,}647\pm5{,}057$ & $0.0\%$ & $0.0$ & $0\%$ & $0.0059$ & $100.0\%$ \\
0.150 & $105{,}957\pm1{,}332$ & $0.0\%$ & $0.0$ & $0\%$ & $0.0058$ & $100.0\%$ \\
0.200 & $104{,}793\pm1{,}157$ & $0.0\%$ & $0.0$ & $0\%$ & $0.0061$ & $100.0\%$ \\
0.250 & $96{,}492\pm1{,}454$ & $0.0\%$ & $0.0$ & $0\%$ & $0.0062$ & $100.0\%$ \\
0.300 & $88{,}455\pm1{,}358$ & $0.0\%$ & $0.0$ & $0\%$ & $0.0063$ & $100.0\%$ \\
0.340 & $84{,}690\pm1{,}602$ & $0.0\%$ & $0.0$ & $0\%$ & $0.0062$ & $100.0\%$ \\
0.400 & $76{,}836\pm2{,}022$ & $0.0\%$ & $0.0$ & $0\%$ & $0.0060$ & $100.0\%$ \\
0.450 & $71{,}817\pm1{,}605$ & $0.0\%$ & $0.0$ & $0\%$ & $0.0062$ & $100.0\%$ \\
0.500 & $68{,}349\pm1{,}532$ & $0.0\%$ & $0.0$ & $0\%$ & $0.0060$ & $100.0\%$ \\
0.550 & $65{,}568\pm1{,}421$ & $0.0\%$ & $0.0$ & $0\%$ & $0.0061$ & $100.0\%$ \\
0.595 & $63{,}153\pm1{,}014$ & $0.0\%$ & $0.0$ & $0\%$ & $0.0062$ & $100.0\%$ \\
0.620 & $62{,}028\pm1{,}085$ & $0.0\%$ & $0.0$ & $0\%$ & $0.0064$ & $100.0\%$ \\
0.650 & $60{,}555\pm973$ & $0.0\%$ & $0.0$ & $0\%$ & $0.0066$ & $100.0\%$ \\
0.700 & $58{,}773\pm856$ & $0.0\%$ & $0.0$ & $0\%$ & $0.0063$ & $100.0\%$ \\
0.750 & $58{,}338\pm1{,}170$ & $0.0\%$ & $0.9$ & $65\%$ & $0.0145$ & $100.0\%$ \\
0.800 & $57{,}732\pm746$ & $0.0\%$ & $3.8$ & $100\%$ & $0.0451$ & $100.0\%$ \\
\bcell{0.850} & \bcell{$54{,}774\pm1{,}428$} & \bcell{$0.0\%$} & \bcell{$11.6$} & \bcell{$100\%$} & \bcell{$0.0855$} & \bcell{$99.0\%$} \\
\bcell{1.000} & \bcell{$42{,}066\pm1{,}404$} & \bcell{$0.0\%$} & \bcell{$31.7$} & \bcell{$100\%$} & \bcell{$0.1079$} & \bcell{$90.0\%$} \\
\midrule
\multicolumn{7}{l}{\small\textbf{III (heterogeneous)}, $\sigma_s=0.1700$ and $\epsilon_s=0$} \\[2pt]
0.000 & $115{,}719\pm4{,}377$ & $0.0\%$ & $0.0$ & $0\%$ & $0.0068$ & $100.0\%$ \\
0.050 & $115{,}719\pm4{,}377$ & $0.0\%$ & $0.0$ & $0\%$ & $0.0068$ & $100.0\%$ \\
0.100 & $111{,}855\pm4{,}936$ & $0.0\%$ & $0.0$ & $0\%$ & $0.0068$ & $100.0\%$ \\
0.150 & $103{,}686\pm2{,}597$ & $0.0\%$ & $0.0$ & $0\%$ & $0.0068$ & $100.0\%$ \\
0.200 & $102{,}645\pm2{,}264$ & $0.0\%$ & $0.0$ & $0\%$ & $0.0066$ & $100.0\%$ \\
0.250 & $94{,}635\pm2{,}234$ & $0.0\%$ & $0.0$ & $0\%$ & $0.0072$ & $100.0\%$ \\
0.300 & $87{,}321\pm1{,}534$ & $0.0\%$ & $0.0$ & $0\%$ & $0.0070$ & $100.0\%$ \\
0.340 & $83{,}319\pm1{,}626$ & $0.0\%$ & $0.0$ & $0\%$ & $0.0069$ & $100.0\%$ \\
0.400 & $76{,}380\pm1{,}854$ & $0.0\%$ & $0.0$ & $0\%$ & $0.0075$ & $100.0\%$ \\
0.450 & $71{,}778\pm1{,}763$ & $0.0\%$ & $0.0$ & $0\%$ & $0.0067$ & $100.0\%$ \\
0.500 & $67{,}782\pm1{,}751$ & $0.0\%$ & $0.0$ & $0\%$ & $0.0071$ & $100.0\%$ \\
0.550 & $64{,}563\pm1{,}336$ & $0.0\%$ & $0.2$ & $20\%$ & $0.0074$ & $100.0\%$ \\
0.595 & $61{,}755\pm1{,}168$ & $0.0\%$ & $0.6$ & $45\%$ & $0.0069$ & $100.0\%$ \\
0.620 & $60{,}369\pm1{,}155$ & $0.0\%$ & $1.4$ & $75\%$ & $0.0086$ & $100.0\%$ \\
0.650 & $58{,}719\pm1{,}294$ & $0.0\%$ & $2.3$ & $100\%$ & $0.0080$ & $100.0\%$ \\
0.700 & $56{,}523\pm1{,}291$ & $0.0\%$ & $4.4$ & $100\%$ & $0.0097$ & $100.0\%$ \\
0.750 & $53{,}739\pm1{,}303$ & $0.0\%$ & $7.2$ & $100\%$ & $0.0129$ & $100.0\%$ \\
0.800 & $51{,}099\pm1{,}431$ & $0.0\%$ & $9.4$ & $100\%$ & $0.0160$ & $100.0\%$ \\
\bcell{0.850} & \bcell{$48{,}651\pm1{,}549$} & \bcell{$0.0\%$} & \bcell{$11.9$} & \bcell{$100\%$} & \bcell{$0.0238$} & \bcell{$99.0\%$} \\
\bcell{1.000} & \bcell{$43{,}410\pm1{,}333$} & \bcell{$0.0\%$} & \bcell{$17.5$} & \bcell{$100\%$} & \bcell{$0.0372$} & \bcell{$95.5\%$} \\
\bottomrule
\end{tabular}

\end{table}

\section{Proofs}
\subsection{Concentration events}
\label{app:concentration}

Let $d_h^{P^{\mathrm{real}},\pi}(s,a)$ denote the probability of visiting state-action pair $(s,a)$ at timestep $h$ under policy $\pi$ and transition kernel $P^{\mathrm{real}}$. Let $\mathcal F_{t-1}$ denote the $\sigma$-field generated by all observations before iteration $t$, excluding the random draw of the mixture component used in iteration $t$. For every $(h,s,a)\in[H]\times\mc S\times\mc A$ and $t\in\mathbb N$, define the cumulative visitation probability
\[
    \bar{n}_h^t(s,a):= \sum_{i=1}^{t-1} d_h^{P^{\mathrm{real}},\pi^i}(s,a)
\]
This quantity represents the cumulative expected number of visits to $(s,a)$ at timestep $h$ over the first $t-1$ episodes of Algorithm~\ref{alg:hybrid}. When $t=1$, $\bar n_h^1(s,a)=0$. Since $\pi^t$ is $\mathcal F_{t-1}$-measurable, $d_h^{P^{\mathrm{real}},\pi^t}(s,a)$ is the conditional probability, given $\mathcal F_{t-1}$, that $(s,a)$ is visited at time step $h$ in iteration $t$.

We introduce two favorable events. The event $\mathcal E$ ensures that the empirical transition model is close to the true transition model, while $\mathcal E^{\mathrm{cnt}}$ guarantees that the observed visitation counts have a uniform lower bound in terms of their cumulative conditional visitation probabilities:
\begin{small}
\begin{align*}
    &\mathcal{E} := \left\{\forall t \in \mathbb{N},\ \forall h \in [H],\
        \forall (s,a) \in \mc{S}\times\mc{A}:\ \KL\!\left(\hat{P}^{t,\mathrm{real}}_h(\cdot
        \mid s,a),\, P^{\mathrm{real}}_h(\cdot \mid s,a)\right) \leq
        \frac{\beta(n_h^t(s,a),\delta)}{n_h^t(s,a)}\right\},\\
    &\mathcal{E}^{\cnt} := \left\{\forall t \in \mathbb{N},\ \forall h \in
        [H],\ \forall (s,a) \in \mc{S}\times\mc{A}:\ n_h^t(s,a) \geq
        \tfrac{1}{2}\bar{n}_h^t(s,a) - \beta^{\cnt}(\delta)\right\}.
\end{align*}
\end{small}

We next show that, for appropriate choices of the confidence functions $\beta$ and $\beta^{\mathrm{cnt}}$, the events $\mathcal E$ and $\mathcal E^{\mathrm{cnt}}$ hold simultaneously with high probability.

\begin{lemma}
\label{lem:proba_master_event}
By setting
\begin{align*}
    \beta(n,\delta) := \log(2|\mc S||\mc A|H/\delta) + {|\mc S|}\log\!\left(8e(n+1)\right),\;\beta^{\cnt}(\delta) := \log\!\left(2|\mc S||\mc A|H/\delta\right),
\end{align*}
we have $\Pr(\mc{E} \cap \mc{E}^{\cnt}) \geq 1-\delta$.
\end{lemma}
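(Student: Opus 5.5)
We will show that each of the two events fails with probability at most \(\delta/2\) and then apply a union bound. This gives \(\Pr(\mc E\cap\mc E^{\cnt})\geq1-\delta\). Both parts follow the standard reward-free exploration argument of \citet{menard2021fast}, so the main work is to check that the adaptive, mixture-based data collection of Algorithm~\ref{alg:hybrid} still fits the required martingale framework.

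\textbf{Event \(\mc E\).} Fix \((h,s,a)\). The transitions observed from \((s,a)\) at step \(h\) are i.i.d.\ draws from \(P_h^{\mathrm{real}}(\cdot\mid s,a)\) by the strong Markov property (a "stack of samples" construction). This holds even though the times at which \((s,a)\) is visited depend on the adaptively chosen policies \(\pi^t\) and on the random mixture draws. We will invoke the time-uniform KL concentration inequality for categorical distributions (Lemma~\ref{lemma:max_ineq_categorical}, as used in the proof of Proposition~\ref{prop:calibration}). It states that with probability at least \(1-\delta'\), simultaneously for all \(n\ge1\),
\[
n\,\KL(\hat p_n,p)\le\log(1/\delta')+|\mc S|\log\bigl(8e(n+1)\bigr).
\]
Setting \(\delta'=\delta/(2|\mc S||\mc A|H)\) makes the right-hand side exactly \(\beta(n,\delta)\). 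Uniformity over \(n\) transfers to uniformity over \(t\), because \(\hat P_h^{t,\mathrm{real}}\) equals the empirical distribution of the first \(n_h^t(s,a)\) samples. When \(n_h^t(s,a)=0\), the bound is vacuous; we read the right-hand side as \(+\infty\). A union bound over the \(H|\mc S||\mc A|\) triples then gives \(\Pr(\mc E^c)\le\delta/2\).

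\textbf{Event \(\mc E^{\cnt}\).} Fix \((h,s,a)\) and define \(X_i=\mathbf 1\{(s_h^i,a_h^i)=(s,a)\}\). Given \(\mc F_{i-1}\), the variable \(X_i\) is Bernoulli with mean \(d_h^{P^{\mathrm{real}},\pi^i}(s,a)\). The reason is that \(\pi^i\), including the mixture weight \(\alpha^i\), is \(\mc F_{i-1}\)-measurable. The mixture draw is internal to iteration \(i\), so the marginal visitation probability of the mixed policy is exactly \(d_h^{P^{\mathrm{real}},\pi^i}\).

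We then apply the standard time-uniform lower-tail bound for sums of Bernoullis with adapted means \citep[Lemma~F.4 of][]{dann2017unifying}. It states that with probability at least \(1-\delta'\), simultaneously for all \(t\),
\[
\sum_{i<t}X_i\ge\tfrac12\sum_{i<t}\mathbb E[X_i\mid\mc F_{i-1}]-\log(1/\delta').
\]
To prove this bound, we would show that \(M_t=\exp\bigl(\sum_{i<t}(\tfrac12\mathbb E[X_i\mid\mc F_{i-1}]-X_i)\bigr)\) is a nonnegative supermartingale. This uses \(\mathbb E[e^{-X}]=1-p(1-e^{-1})\le e^{-p/2}\), since \(1-e^{-1}\ge1/2\). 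Ville's inequality then gives the time-uniform statement. Taking \(\delta'=\delta/(2|\mc S||\mc A|H)\), so that \(\log(1/\delta')=\beta^{\cnt}(\delta)\), and applying a union bound over triples yields \(\Pr((\mc E^{\cnt})^c)\le\delta/2\).

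The main obstacle is the measurability bookkeeping rather than any inequality. We must verify that the random component choice of the mixture does not break either argument. For \(\mc E^{\cnt}\), this is handled by defining \(\mc F_{i-1}\) to exclude that draw, so the conditional mean equals the mixed occupancy. For \(\mc E\), the next-state sample remains a fresh draw from \(P_h^{\mathrm{real}}(\cdot\mid s,a)\) regardless of how \((s,a)\) was reached, which justifies the stack-of-samples coupling.
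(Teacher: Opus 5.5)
Your proof is correct and follows the paper's route: you bound the failure probabilities of $\mc E$ and $\mc E^{\cnt}$ by $\delta/2$ each, via Lemma~\ref{lemma:max_ineq_categorical} and Lemma~\ref{lemma:bernoulli-deviation} applied per triple at level $\delta/(2|\mc S||\mc A|H)$, and then take a union bound; your stack-of-samples and filtration remarks fill in bookkeeping the paper leaves implicit. One step is stated loosely: Lemma~\ref{lemma:max_ineq_categorical} does not give $|\mc S|\log(8e(n+1))$ directly but $(|\mc S|-1)\log\bigl(e(1+n/(|\mc S|-1))\bigr)$, which is at most $|\mc S|\log(8e(n+1))$, so your conclusion is unaffected.
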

\begin{proof}[Proof of Lemma~\ref{lem:proba_master_event}]
By Lemma~\ref{lemma:max_ineq_categorical}, we have $\Pr(\mc{E}) \geq 1 - \delta/2$.
Similarly, by Lemma~\ref{lemma:bernoulli-deviation}, $\Pr(\mc{E}^{\cnt}) \geq 1 - \delta/2$.
Applying the union bound yields $\Pr(\mc{E} \cap \mc{E}^{\cnt}) \geq 1 - \delta$.
\end{proof}
\begin{lemma}\label{lem:radius-pseudocount}
On the event \(\mathcal E^{\mathrm{cnt}}\), for every \((h,s,a)\in[H]\times\mc S\times\mc A\) and every \(t\in\mathbb N\),
\[
\rho_h^t(s,a)\leq\min\left\{1,\sqrt{\frac{4\beta(\bar n_h^t(s,a),\delta)}{\bar n_h^t(s,a)\vee1}}\right\}.
\]
\end{lemma}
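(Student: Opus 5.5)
The plan is a case analysis on the size of \(\bar n_h^t(s,a)\) relative to \(\beta^{\cnt}(\delta)\). The lower-bound event \(\mathcal E^{\cnt}\) converts pseudo-counts into actual counts once the pseudo-count is large. The upper bound of \(1\) handles the remaining regime. Throughout, fix \((h,s,a)\) and \(t\), and abbreviate \(n=n_h^t(s,a)\) and \(\bar n=\bar n_h^t(s,a)\).

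\emph{Small pseudo-count.} If \(\bar n\le 4\beta(\bar n,\delta)\), then \(4\beta(\bar n,\delta)/(\bar n\vee 1)\ge 1\) whenever \(\bar n\ge1\). It is also \(\ge1\) when \(\bar n<1\), since \(\beta(\cdot,\delta)\ge\log(2|\mc S||\mc A|H/\delta)+|\mc S|\log(8e)\ge1\). So the right-hand side equals \(1\), and the claim follows from \(\rho_h^t(s,a)\le1\), which holds by definition.

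\emph{Large pseudo-count.} Suppose \(\bar n>4\beta(\bar n,\delta)\). On \(\mathcal E^{\cnt}\), \(n\ge \bar n/2-\beta^{\cnt}(\delta)\). Since \(\beta^{\cnt}(\delta)\le\beta(\bar n,\delta)<\bar n/4\), this gives \(n\ge \bar n/4>0\), so in particular \(n\ge1\) and \(2n\vee1=2n\). Then
\[
\rho_h^t(s,a)^2\le\frac{\beta(n,\delta)}{2n}\le\frac{\beta(\bar n,\delta)}{2n},
\]
where we would like to use that \(\beta(\cdot,\delta)\) is nondecreasing together with \(n\le\bar n\).

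\emph{Main obstacle.} The inequality \(n\le \bar n\) does not hold deterministically, since the realized count can exceed its conditional expectation. There are two ways around this.

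The first is to exploit the fact that \(x\mapsto \beta(x,\delta)/x\) is nonincreasing for \(x\ge1\). Its derivative has the sign of \(|\mc S|x/(x+1)-\beta(x,\delta)\le |\mc S|-\beta(x,\delta)<0\). This lets us work with the ratio rather than \(\beta\) alone, and obtain
\[
\frac{\beta(n,\delta)}{2n}\le\frac{\beta(\bar n/4,\delta)}{2\cdot \bar n/4}=\frac{2\beta(\bar n/4,\delta)}{\bar n}\le\frac{4\beta(\bar n,\delta)}{\bar n}.
\]
The first inequality uses \(n\ge\bar n/4\ge1\) and the monotonicity of the ratio. The last uses monotonicity of \(\beta\) in its first argument. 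This route bypasses any need for \(n\le\bar n\), and it holds with the room to spare indicated by the constant \(4\).

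The second option is needed only if \(\bar n/4<1\); that situation is already covered by the small pseudo-count case, because \(\beta\ge1\).

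Finally, we take square roots and the minimum with \(1\), which gives the stated bound.
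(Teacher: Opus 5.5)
Your proposal is correct and follows the paper's proof essentially step for step. It uses the same case split at \(\bar n_h^t(s,a)=4\beta(\bar n_h^t(s,a),\delta)\), the same use of \(\mathcal E^{\cnt}\) together with \(\beta^{\cnt}(\delta)\le\beta(\bar n_h^t(s,a),\delta)\) to obtain \(n_h^t(s,a)>\bar n_h^t(s,a)/4\), and the same chain through the nonincreasing ratio \(\beta(z,\delta)/z\) and the nondecreasing \(\beta(\cdot,\delta)\). Your detour via \(n\le\bar n\) and the ``second option'' remark are unnecessary, since the ratio argument alone suffices, exactly as in the paper.
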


\begin{proof}[Proof of Lemma~\ref{lem:radius-pseudocount}]
If
\(
\bar n_h^t(s,a)\leq4\beta\bigl(\bar n_h^t(s,a),\delta\bigr),
\)
then, using \(\beta\bigl(\bar n_h^t(s,a),\delta\bigr)\geq1\) by (i) from Lemma~\ref{lem:beta-monotone}(i), we have
\[
\frac{4\beta\bigl(\bar n_h^t(s,a),\delta\bigr)}{\bar n_h^t(s,a)\vee1}\geq1.
\]
Thus, the right-hand side of the claimed inequality equals \(1\), and the result follows from \(\rho_h^t(s,a)\leq1\).

Now suppose that
\(
\bar n_h^t(s,a)>4\beta\bigl(\bar n_h^t(s,a),\delta\bigr).
\)
Since \(\beta\bigl(\bar n_h^t(s,a),\delta\bigr)\geq1\), we have \(\bar n_h^t(s,a)>4\), and therefore
\[
\left(\min\left\{1,\sqrt{\frac{4\beta(\bar n_h^t(s,a),\delta)}{\bar n_h^t(s,a)\vee1}}\right\}\right)^2=\frac{4\beta(\bar n_h^t(s,a),\delta)}{\bar n_h^t(s,a)}.
\]
Moreover, since \(\beta\bigl(\bar n_h^t(s,a),\delta\bigr)\geq\beta^{\cnt}(\delta)\), we have
\[
\frac{\bar n_h^t(s,a)}{4}>\beta^{\cnt}(\delta).
\]
On the event \(\mathcal E^{\mathrm{cnt}}\), it follows that
\[
n_h^t(s,a)\geq\frac{\bar n_h^t(s,a)}{2}-\beta^{\cnt}(\delta)>\frac{\bar n_h^t(s,a)}{2}-\frac{\bar n_h^t(s,a)}{4}=\frac{\bar n_h^t(s,a)}{4}.
\]
Since \(z\mapsto\beta(z,\delta)/z\) is nonincreasing by (ii) from Lemma~\ref{lem:beta-monotone} and \(n_h^t(s,a)\geq\bar n_h^t(s,a)/4\),
\[
\bigl(\rho_h^t(s,a)\bigr)^2\leq\frac{\beta(n_h^t(s,a),\delta)}{2n_h^t(s,a)}\leq\frac{2\beta(\bar n_h^t(s,a)/4,\delta)}{\bar n_h^t(s,a)}\leq\frac{2\beta(\bar n_h^t(s,a),\delta)}{\bar n_h^t(s,a)}\leq\frac{4\beta(\bar n_h^t(s,a),\delta)}{\bar n_h^t(s,a)},
\]
where the penultimate inequality uses the fact that \(\beta(\cdot,\delta)\) is nondecreasing. Taking square roots of the above inequality completes the proof.
\end{proof}

Next, we establish the guarantees of Algorithms~\ref{alg:hybrid}, under the events $\mathcal{E}$ and $\mathcal{E}^{\cnt}$, which jointly hold with high probability by
Lemma~\ref{lem:proba_master_event}.
\subsection{Analysis of the estimated mismatch region}
\label{app:cover}
This section proves the theoretical guarantees for Algorithm~\ref{alg:hybrid} on the estimated mismatch region.  Lemma~\ref{lem:cover} shows that, with high probability, no mismatch triple is removed from \(\hat{\mc B}^t\) during learning, while Lemma~\ref{lem:completeness} shows that every non-mismatch triple is removed after being sampled sufficiently often. Finally, Lemma~\ref{lem:identification-budget} bounds the cumulative expected visitation to each non-mismatch triple before it is removed from \(\hat{\mc B}^t\), thereby controlling the total cost of identifying non-mismatch triples.
We prove that the midpoint test retains all triples in \(\mathcal B\) and
eventually deletes each sufficiently sampled calibrated triple.
\begin{lemma}\label{lem:cover}
On the event $\mc E$, for every $t \in \mathbb{N}$, the following hold: (i) $\hat{\mc B}^t \subseteq \hat{\mc B}^{t-1}$; and (ii) $\mc B \subseteq \hat{\mc B}^t$.
\end{lemma}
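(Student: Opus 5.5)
Part (i) follows directly from the construction: by Equation~\eqref{def:estimate_mismatch}, \(\hat{\mc B}^t=\hat{\mc B}^{t-1}\setminus\mc G^t\subseteq\hat{\mc B}^{t-1}\) for every \(t\), with no probabilistic input needed. For part (ii) I will argue by induction on \(t\). The base case is \(\hat{\mc B}^0=[H]\times\mc S\times\mc A\supseteq\mc B\). For the inductive step, suppose \(\mc B\subseteq\hat{\mc B}^{t-1}\). It then suffices to show that \(\mc G^t\cap\mc B=\emptyset\), i.e., that no triple \((h,s,a)\in\mc B\) passes the deletion test in Equation~\eqref{eq_deletetion_rule}.

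Fix \((h,s,a)\in\mc B\) and suppose, for contradiction, that it belongs to \(\mc G^t\). The key step is a confidence bound in total variation: on \(\mc E\), \(\dtv{\hat P_h^{t,\mathrm{real}}(\cdot\mid s,a)}{P_h^{\mathrm{real}}(\cdot\mid s,a)}\leq\rho_h^t(s,a)\). I split into two cases.
\begin{itemize}
\item If \(n_h^t(s,a)=0\), then \(\rho_h^t(s,a)=1\), and the bound holds trivially since total variation is at most \(1\).
\item If \(n_h^t(s,a)\geq1\), Pinsker's inequality together with the KL bound defining \(\mc E\) gives a total variation bound of at most \(\sqrt{\beta(n_h^t(s,a),\delta)/(2n_h^t(s,a))}\). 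Combined with the trivial bound \(1\), this is at most \(\rho_h^t(s,a)\).
\end{itemize}

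With this bound, the triangle inequality yields
\[
\sigma_s\leq\dtv{P_h^{\mathrm{real}}(\cdot\mid s,a)}{P_h^{\mathrm{sim}}(\cdot\mid s,a)}\leq\dtv{\hat P_h^{t,\mathrm{real}}(\cdot\mid s,a)}{P_h^{\mathrm{sim}}(\cdot\mid s,a)}+\rho_h^t(s,a)\leq\frac{\epsilon_s+\sigma_s}{2},
\]
where the first inequality is Definition~\ref{def:mismatch_region} and the last is membership in \(\mc G^t\). This forces \(\sigma_s\leq\epsilon_s\), hence \(\sigma_s=\epsilon_s\) under Assumption~\ref{ass:separation}. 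In that case all inequalities are equalities, so the contradiction is not strict.

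This degenerate boundary case is the main (minor) obstacle. I would handle it by noting that when \(\sigma_s=\epsilon_s\), removing a triple whose true gap equals \(\epsilon_s\) is harmless for the later analysis. However, the cleaner route is to interpret the claim under the implicit strict separation \(\epsilon_s<\sigma_s\) used throughout, which the sample complexity bound already requires through its \(1/(\sigma_s-\epsilon_s)\) dependence. Under that assumption the displayed chain gives \(\sigma_s\leq(\epsilon_s+\sigma_s)/2<\sigma_s\), a contradiction. Therefore \(\mc G^t\cap\mc B=\emptyset\), which gives \(\mc B\subseteq\hat{\mc B}^{t-1}\setminus\mc G^t=\hat{\mc B}^t\) and completes the induction.
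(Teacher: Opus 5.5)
Your proof matches the paper's: (i) is immediate from $\hat{\mathcal B}^t=\hat{\mathcal B}^{t-1}\setminus\mathcal G^t$, and (ii) combines the Pinsker bound $\dtv{\hat P_h^{t,\mathrm{real}}(\cdot\mid s,a)}{P_h^{\mathrm{real}}(\cdot\mid s,a)}\le\rho_h^t(s,a)$ on $\mathcal E$ with the triangle inequality to contradict $\dtv{P_h^{\mathrm{real}}(\cdot\mid s,a)}{P_h^{\mathrm{sim}}(\cdot\mid s,a)}\ge\sigma_s$, starting from $\mathcal B\subseteq\hat{\mathcal B}^0$. The boundary case $\epsilon_s=\sigma_s$ you flag is also skipped by the paper's ``which is a contradiction'' step, and you can close it instead of assuming it away: if $\rho_h^t(s,a)<1$ (so $n_h^t(s,a)\ge1$), Pinsker's inequality is strict for distinct distributions, so $\dtv{\hat P_h^{t,\mathrm{real}}(\cdot\mid s,a)}{P_h^{\mathrm{real}}(\cdot\mid s,a)}<\rho_h^t(s,a)$ and your chain becomes strict; if $\rho_h^t(s,a)=1$, the deletion test forces $\sigma_s=1$ and $\hat P_h^{t,\mathrm{real}}(\cdot\mid s,a)=P_h^{\mathrm{sim}}(\cdot\mid s,a)$, which is impossible for a triple in $\mathcal B$, because the empirical model (uniform if unvisited, otherwise supported inside the support of $P_h^{\mathrm{real}}(\cdot\mid s,a)$ on $\mathcal E$) overlaps $P_h^{\mathrm{real}}(\cdot\mid s,a)$ and so gives total variation strictly below $1$.
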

\begin{proof}[Proof of Lemma~\ref{lem:cover}]
    By definition, $\hat{\mc B}^t = \hat{\mc B}^{t-1} \setminus \mc G^t$, which immediately implies $\hat{\mc B}^t \subseteq \hat{\mc B}^{t-1}$.

To establish (ii), fix $(h,s,a) \in \mc B$ and suppose, towards a contradiction, that $(h,s,a) \in \mc G^t$ for some iteration $t$. By the definition of $\mc G^t$, we have
\[
\dtv{\hat P_h^{t,\mathrm{real}}(\cdot \mid s,a)}{P_h^{\mathrm{sim}}(\cdot \mid s,a)} + \rho_h^t(s,a) \leq \frac{\sigma_s+\epsilon_s}{2}.
\]
Moreover, on the event $\mc E$, Pinsker's inequality yields
\[
\begin{gathered}
\begin{aligned}
\dtv{\hat P_h^{t,\mathrm{real}}(\cdot\mid s,a)}{P_h^{\mathrm{real}}(\cdot\mid s,a)}&\le \min\!\left\{1,\sqrt{\tfrac12\KL\!\left(\hat P_h^{t,\mathrm{real}}(\cdot\mid s,a),P_h^{\mathrm{real}}(\cdot\mid s,a)\right)}\right\}\\
&\le\rho_h^t(s,a).
\end{aligned}
\end{gathered}
\]
Hence, by the triangle inequality,
\begin{align*}
&\dtv{P_h^{\mathrm{real}}(\cdot \mid s,a)}{P_h^{\mathrm{sim}}(\cdot \mid s,a)} \\
\leq& \dtv{P_h^{\mathrm{real}}(\cdot \mid s,a)}{\hat P_h^{t,\mathrm{real}}(\cdot \mid s,a)} + \dtv{\hat P_h^{t,\mathrm{real}}(\cdot \mid s,a)}{P_h^{\mathrm{sim}}(\cdot \mid s,a)} \\
\leq& \rho_h^t(s,a) + \dtv{\hat P_h^{t,\mathrm{real}}(\cdot \mid s,a)}{P_h^{\mathrm{sim}}(\cdot \mid s,a)} \\
\leq& \frac{\sigma_s+\epsilon_s}{2}.
\end{align*}
However, since $(h,s,a) \in \mc B$, Assumption~\ref{ass:separation} implies $\dtv{P_h^{\mathrm{real}}(\cdot \mid s,a)}{P_h^{\mathrm{sim}}(\cdot \mid s,a)} \geq \sigma_s$, which is a contradiction. Therefore, $(h,s,a) \notin \mc G^t$. Since $\mc B \subseteq \hat{\mc B}^0$ by initialization, no element of $\mc B$ is ever removed from $\hat{\mc B}^t$, and thus $\mc B \subseteq \hat{\mc B}^t$ for all $t\in\mathbb{N}$.
\end{proof}
\begin{lemma}\label{lem:completeness}
On the event $\mc E$, for every $(h,s,a)\notin \mc B$ and every $t\in\mathbb N$, if $n_h^t(s,a)\ge n$, then $(h,s,a)\notin \hat{\mc B}^{t}$, where
\begin{align}
\!\!\!\!\!\!n := 1+\frac{16\left(\log(2|\mc S||\mc A|H/\delta)+|\mc S|\log(8e)\right)}{(\sigma_s-\epsilon_s)^2}+\frac{16|\mc S|}{(\sigma_s-\epsilon_s)^2}\log\!\left(\max\left\{\frac{16|\mc S|}{(\sigma_s-\epsilon_s)^2},e\right\}\right) .\label{eq_n_dagger}
\end{align}
\end{lemma}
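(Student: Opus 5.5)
The plan is to show that once $n_h^t(s,a)\ge n$, the deletion test in \eqref{eq_deletetion_rule} is passed at iteration $t$ (or the triple was already removed earlier). Since $\hat{\mc B}^t\subseteq\hat{\mc B}^{t-1}$ by Lemma~\ref{lem:cover}(i), it suffices to treat the case $(h,s,a)\in\hat{\mc B}^{t-1}$ and verify membership in $\mc G^t$.

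Fix $(h,s,a)\notin\mc B$. By Assumption~\ref{ass:separation}, the true gap satisfies $\dtv{P_h^{\mathrm{real}}(\cdot\mid s,a)}{P_h^{\mathrm{sim}}(\cdot\mid s,a)}\le\epsilon_s$. On $\mc E$, exactly as in the proof of Lemma~\ref{lem:cover}, Pinsker's inequality gives $\dtv{\hat P_h^{t,\mathrm{real}}}{P_h^{\mathrm{real}}}\le\rho_h^t(s,a)$. The triangle inequality then yields
\[
\dtv{\hat P_h^{t,\mathrm{real}}(\cdot\mid s,a)}{P_h^{\mathrm{sim}}(\cdot\mid s,a)}+\rho_h^t(s,a)\le\epsilon_s+2\rho_h^t(s,a).
\]
It therefore suffices to show $2\rho_h^t(s,a)\le(\sigma_s-\epsilon_s)/2$, that is, $\rho_h^t(s,a)\le(\sigma_s-\epsilon_s)/4$. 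Because $n_h^t\ge1$, this is equivalent to $\beta(n_h^t,\delta)/(2n_h^t)\le(\sigma_s-\epsilon_s)^2/16$, for which it is enough to prove
\[
\frac{\beta(m,\delta)}{m}\le\frac{(\sigma_s-\epsilon_s)^2}{8}.
\]
By the monotonicity of $z\mapsto\beta(z,\delta)/z$ from Lemma~\ref{lem:beta-monotone}(ii), it is enough to check this inequality at $m=n$.

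The main obstacle is this inversion of the implicit inequality $m\ge 8\beta(m,\delta)/g^2$, where $g=\sigma_s-\epsilon_s$. Write $A=\log(2|\mc S||\mc A|H/\delta)+|\mc S|\log(8e)$ and $B=|\mc S|$. Since $\log(m+1)\le\log m+\log 2$ for $m\ge1$, we have $\beta(m,\delta)\le A+B\log 2+B\log m$. The condition then reduces to
\[
m\ge \frac{8(A+B\log 2)}{g^2}+\frac{8B}{g^2}\log m.
\]
I will invoke the standard lemma: if $m\ge 2c_1+2c_2\log(\max\{2c_2,e\})$, then $m\ge c_1+c_2\log m$. The choice of $n$ in \eqref{eq_n_dagger}, with its factors of $16$, is tailored to this lemma. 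The slack coming from the leading ``$1+$'' term and from the doubling absorbs the $B\log 2$ term. If the constants do not match exactly, I will instead prove the lemma directly by splitting into the cases $m\le c_2^2$ and $m>c_2^2$, using $\log m\le\sqrt m$.

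Having established $\rho_h^t\le g/4$, the triple satisfies the condition defining $\mc G^t$ whenever it still belongs to $\hat{\mc B}^{t-1}$. Hence $(h,s,a)\notin\hat{\mc B}^t$, which concludes the proof.
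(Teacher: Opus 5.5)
\textbf{Verdict: the structure is right, but the step that certifies the specific constant $n$ does not go through as you argue it.}

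Your overall structure matches the paper's proof. You use Pinsker plus the triangle inequality to bound the test statistic by $\epsilon_s+2\rho_h^t(s,a)$. You then reduce to $\rho_h^t(s,a)\le(\sigma_s-\epsilon_s)/4$, equivalently $n_h^t\ge 8\beta(n_h^t,\delta)/(\sigma_s-\epsilon_s)^2$, and finish with the monotonicity of $\hat{\mc B}^t$. Your reduction to $m=n$ via Lemma~\ref{lem:beta-monotone}(ii) is also fine.

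The gap is in the one step that certifies the specific constant $n$. Write $g=\sigma_s-\epsilon_s$ and, with your $A$, set $c_1=8(A+|\mc S|\log 2)/g^2$ and $c_2=8|\mc S|/g^2$.
\begin{itemize}
\item The standard lemma as you state it requires $m\ge 2c_1+2c_2\log\max\{2c_2,e\}$.
\item This threshold equals $n-1+16|\mc S|\log 2/g^2$. It exceeds $n$, because $16|\mc S|\log 2/g^2\ge 16\log 2>1$.
\item So the hypothesis fails at $m=n$. Your claim that the leading $1$ in $n$ absorbs the $|\mc S|\log 2$ loss from $\log(m+1)\le\log m+\log 2$ is false.
\end{itemize}
The fallback, as sketched, does not close either. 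Splitting at $m=c_2^2$ and using $\log m\le\sqrt m$ gives $c_2\log m\le c_2\sqrt m$. Just above $c_2^2$ this is about $m$, which leaves no room for $c_1$.

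The missing ingredient is the $-1$ in the tangent-line bound $\log z\le z/t+\log t-1$. With $t=\max\{2c_2,e\}$, this gives $c_2\log m\le m/2+c_2\log t-c_2$. Hence it suffices that
\[
m\ge 2c_1+2c_2\log t-2c_2=n-1-16|\mc S|(1-\log 2)/g^2,
\]
which is below $n$.

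Cleaner still, and what the paper does: do not split $\log(m+1)$ at all, but apply the tangent-line bound at $z=m+1$. This is Lemma~\ref{lem:log_threshold} with $\kappa=g^2/8$, $b=|\mc S|$ and $c=\log(2|\mc S||\mc A|H/\delta)+|\mc S|\log(8e)$. Its threshold $m^\dagger$ is exactly $n$. The leading $1$ in $n$ pays only for the shift from $m$ to $m+1$, not for a $|\mc S|\log 2$ term.
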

\begin{proof}[Proof of Lemma~\ref{lem:completeness}]
Fix \((h,s,a)\notin\mc B\) and \(t\in\mathbb N\). Since
\((h,s,a)\notin\mc B\),
\[
\dtv{P_h^{\mathrm{real}}(\cdot\mid s,a)}
     {P_h^{\mathrm{sim}}(\cdot\mid s,a)}
\leq \epsilon_s.
\]
On the event \(\mc E\), Pinsker's inequality and the triangle inequality for total variation distance give
\begin{align}
&\dtv{\hat P_h^{t,\mathrm{real}}(\cdot\mid s,a)}{P_h^{\mathrm{sim}}(\cdot\mid s,a)}\\
\leq&\dtv{\hat P_h^{t,\mathrm{real}}(\cdot\mid s,a)}{P_h^{\mathrm{real}}(\cdot\mid s,a)}+\dtv{P_h^{\mathrm{real}}(\cdot\mid s,a)}{P_h^{\mathrm{sim}}(\cdot\mid s,a)} \\
\leq& \rho_h^t(s,a)+\epsilon_s.  
\end{align}
By Algorithm~\ref{alg:hybrid}, the triple belongs to \(\mc G^t\) whenever
\[
\dtv{\hat P_h^{t,\mathrm{real}}(\cdot\mid s,a)}
     {P_h^{\mathrm{sim}}(\cdot\mid s,a)}
+\rho_h^t(s,a)
\leq \frac{\sigma_s+\epsilon_s}{2}.
\]
Therefore, it suffices to ensure 
\[
\epsilon_s+2\rho_h^t(s,a) \leq \frac{\sigma_s+\epsilon_s}{2} \Longleftrightarrow \rho_h^t(s,a)\leq\frac{\sigma_s-\epsilon_s}{4}.
\]
Using the definiron of \(\rho_h^t(s,a)\), the above condition is equivalent to
\[
n_h^t(s,a)
\geq
\frac{8\beta(n_h^t(s,a),\delta)}
     {(\sigma_s-\epsilon_s)^2}.
\]
Applying Lemma~\ref{lem:log_threshold} with
\[
\kappa=\frac{(\sigma_s-\epsilon_s)^2}{8},
\qquad
b=|\mc S|,
\qquad
c=\log\left(\frac{2|\mc S||\mc A|H}{\delta}\right)
  +|\mc S|\log(8e),
\]
shows that \(n_h^t(s,a)\geq n\) implies
\[
n_h^t(s,a)
\geq
\frac{8\beta(n_h^t(s,a),\delta)}
     {(\sigma_s-\epsilon_s)^2}.
\]
Hence,
\[
\rho_h^t(s,a)\leq\frac{\sigma_s-\epsilon_s}{4},
\]
and the removal criterion in~\eqref{eq_deletetion_rule} is satisfied.

If \((h,s,a)\in\hat{\mc B}^{t-1}\), then
\((h,s,a)\in\mc G^t\), and therefore
\[
(h,s,a)\notin
\hat{\mc B}^t
=
\hat{\mc B}^{t-1}\setminus\mc G^t.
\]
If \((h,s,a)\notin\hat{\mc B}^{t-1}\), the monotonicity
\(\hat{\mc B}^t\subseteq\hat{\mc B}^{t-1}\) from (i) of
Lemma~\ref{lem:cover} gives the same conclusion.
\end{proof}

\begin{lemma}\label{lem:identification-budget}
On the event \(\mathcal E\cap\mathcal E^{\mathrm{cnt}}\), for every \((h,s,a)\notin\mc B\) and every \(T\in\mathbb N\),
\[
\sum_{t=1}^T\mathbf 1_{\{(h,s,a)\in\hat{\mc B}^t\}}d_h^{P^{\mathrm{real}},\pi^t}(s,a)\leq5n.
\]
\end{lemma}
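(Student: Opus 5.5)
The plan is to combine the monotonicity of \(\hat{\mc B}^t\) with Lemma~\ref{lem:completeness}, and then convert the deterministic stopping condition on the observed count \(n_h^t(s,a)\) into a bound on the pseudo-count \(\bar n_h^t(s,a)\) via \(\mathcal E^{\mathrm{cnt}}\). Fix \((h,s,a)\notin\mc B\). By (i) of Lemma~\ref{lem:cover}, the indicator \(\mathbf 1_{\{(h,s,a)\in\hat{\mc B}^t\}}\) is nonincreasing in \(t\). So there is a last index \(t_0\le T\) (if any) with \((h,s,a)\in\hat{\mc B}^{t_0}\), and the sum in the statement equals \(\sum_{t=1}^{t_0}d_h^{P^{\mathrm{real}},\pi^t}(s,a)=\bar n_h^{t_0+1}(s,a)\). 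If no such index exists, the sum is zero and there is nothing to prove.

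Since \(d_h^{P^{\mathrm{real}},\pi^{t_0}}(s,a)\le1\), we have \(\bar n_h^{t_0+1}(s,a)\le \bar n_h^{t_0}(s,a)+1\), and it remains to bound \(\bar n_h^{t_0}(s,a)\). Because \((h,s,a)\in\hat{\mc B}^{t_0}\), Lemma~\ref{lem:completeness} (on \(\mathcal E\)) forces \(n_h^{t_0}(s,a)<n\). On \(\mathcal E^{\mathrm{cnt}}\),
\[
\tfrac12\bar n_h^{t_0}(s,a)-\beta^{\cnt}(\delta)\le n_h^{t_0}(s,a)<n,
\]
so \(\bar n_h^{t_0}(s,a)<2n+2\beta^{\cnt}(\delta)\). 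Hence the sum is at most \(2n+2\beta^{\cnt}(\delta)+1\).

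The last step is to absorb \(2\beta^{\cnt}(\delta)+1\) into \(3n\). From \eqref{eq_n_dagger}, since \(\sigma_s-\epsilon_s\le1\), the second term of \(n\) is at least \(16\log(2|\mc S||\mc A|H/\delta)=16\beta^{\cnt}(\delta)\), and the leading \(1\) gives \(n\ge1\). Thus \(2\beta^{\cnt}(\delta)+1\le n/8+n\le 3n\), and the total is at most \(5n\).

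I expect the main obstacle to be the bookkeeping at the boundary index. One must check that the indicator at time \(t\) is \(\mathcal F_{t-1}\)-measurable, and that the count used by Lemma~\ref{lem:completeness} at iteration \(t_0\) is exactly \(n_h^{t_0}\), which is built from the first \(t_0-1\) episodes and matches the index of \(\bar n_h^{t_0}\) in \(\mathcal E^{\mathrm{cnt}}\). The final \(+1\) comes from the episode \(t_0\) itself. The constant \(5\) leaves slack, so exact constants are not delicate.
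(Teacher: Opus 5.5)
Your proposal is correct and follows the paper's proof essentially step for step. Monotonicity of $\hat{\mc B}^t$ reduces the sum to a prefix ending at a last index $t^\star$, Lemma~\ref{lem:completeness} gives $n_h^{t^\star}(s,a)<n$, the event $\mathcal E^{\mathrm{cnt}}$ turns this into $\bar n_h^{t^\star}(s,a)<2(n+\beta^{\cnt}(\delta))$, and you add $1$ for episode $t^\star$ itself. One small improvement over the paper: you explicitly verify $n\ge16\beta^{\cnt}(\delta)$ from \eqref{eq_n_dagger} using $\sigma_s-\epsilon_s\le1$, which justifies the bound $\beta^{\cnt}(\delta)\le n$ that the paper uses without comment.
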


\begin{proof}[Proof of Lemma~\ref{lem:identification-budget}]
Fix \((h,s,a)\notin\mc B\) and \(T\in\mathbb N\), and let
\(
\mathcal T:=\{t\in[T]:(h,s,a)\in\hat{\mc B}^t\}.
\)
If \(\mathcal T=\emptyset\), the claim holds trivially. Otherwise, Lemma~\ref{lem:cover} shows that the sets \(\hat{\mc B}^t\) are nonincreasing in \(t\). Thus, \(\mathcal T\) is a prefix of \([T]\). Let \(t^\star:=\max\mathcal T\). Since \((h,s,a)\in\hat{\mc B}^{t^\star}\) and \((h,s,a)\notin\mc B\), we have $n_h^{t^\star}(s,a)<n$ by Lemma~\ref{lem:completeness}. On the event \(\mathcal E^{\mathrm{cnt}}\), it follows that
\[
\bar n_h^{t^\star}(s,a)\leq2\left(n_h^{t^\star}(s,a)+\beta^{\cnt}(\delta)\right)<2\bigl(n+\beta^{\cnt}(\delta)\bigr).
\]
Finally, by the definition of the pseudo-count $n_{h}^{t}(s,a)$ and the bound \(d_h^{P^{\mathrm{real}},\pi^{t^\star}}(s,a)\leq1\),
\[
\sum_{t\in\mathcal T}d_h^{P^{\mathrm{real}},\pi^t}(s,a)\leq\bar n_h^{t^\star}(s,a)+d_h^{P^{\mathrm{real}},\pi^{t^\star}}(s,a)<2\bigl(n+\beta^{\cnt}(\delta)\bigr)+1\le 5n,
\]
where the last inequality uses \(\beta^{\cnt}(\delta)\leq n\) and \(n\geq1\). This proves the result.
\end{proof}

\subsection{Proof of Lemma \ref{lemma:upper_bound_on_estimation_error}}
\label{app:upper_bound_on_estimation_error}
\begin{proof}[Proof of Lemma~\ref{lemma:upper_bound_on_estimation_error}]
We condition on the event \(\mc E\), which holds with high probability by Lemma~\ref{lem:proba_master_event}. For every \((h,s,a)\in\hat{\mc B}^t\), Pinsker's inequality on the event \(\mc E\) gives
\[
\dtv{\hat P_h^{t}(\cdot\mid s,a)}{P_h^{\mathrm{real}}(\cdot\mid s,a)}=\dtv{\hat P_h^{t,\mathrm{real}}(\cdot\mid s,a)}{P_h^{\mathrm{real}}(\cdot\mid s,a)}\leq\rho_h^t(s,a).
\]
Moreover, Lemma~\ref{lem:cover} ensures that \(\mc B\subseteq\hat{\mc B}^t\). Together with Assumption \ref{ass:separation}, we have\(\dtv{\hat P_h^{t}(\cdot\mid s,a)}{P_h^{\mathrm{real}}(\cdot\mid s,a)}\leq\epsilon_s\) outside \(\hat{\mc B}^t\). Therefore, for every \((h,s,a)\),
\[
\dtv{\hat P_h^t(\cdot\mid s,a)}{P_h^{\mathrm{real}}(\cdot\mid s,a)}
\leq\rho_h^t(s,a)\mathbf1_{\{(h,s,a)\in\hat{\mathcal B}^t\}}
+\epsilon_s\mathbf1_{\{(h,s,a)\notin\hat{\mathcal B}^t\}}.
\]
Applying the simulation lemma~\citep[Lemma~4]{kalagarla2021sample} and
\(0\leq V_{f,h+1}^{P^{\mathrm{real}},\pi}\leq H-h\), we obtain
\begin{align}
e_{f,1}^{t,\pi}(s_1)&=\left|\sum_{h=1}^{H}\sum_{s,a}d_h^{\hat P^t,\pi}(s,a)\sum_{s'\in\mc S}\left(\hat P_h^t(s'\mid s,a)-P_h^{\mathrm{real}}(s'\mid s,a)\right)V_{f,h+1}^{P^{\mathrm{real}},\pi}(s')\right|\\
&\leq \sum_{h=1}^{H}\sum_{s,a}d_h^{\hat P^t,\pi}(s,a)b_h^t(s,a)=V_{b,1}^{\hat P^t,\pi}(s_1).
\end{align}
Furthermore, both value functions lie in \([0,H]\), so \(e_{f,1}^{t,\pi}(s_1)\leq H\). Combining the two bounds yields
\[
e_{f,1}^{t,\pi}(s_1)\leq\min\left\{V_{b,1}^{\hat P^t,\pi}(s_1),H\right\}.
\]
\end{proof}

\subsection{Proof of Theorem \ref{thm:main}}
\label{app:proof_main}
Before proceeding with the proof of Theorem~\ref{thm:main}, we state the following lemma, which shows that once the estimation error is sufficiently small, solving the empirical CMDP under a slightly tightened safety constraint yields a policy that is feasible and near-optimal for the true CMDP. We then use this lemma to prove Theorem~\ref{thm:main}.

\begin{lemma}[Connection between estimation error and CMDPs]
\label{lemma:Connection between estimation error and CMDPs}

Suppose Assumption~\ref{ass:slater} holds. At iteration $t$, assume that there exists a constant $\tau\in(0,\xi/4]$ such that
\begin{align}
V_{c,1}^{\hat P^t,\pi^0}(s_1)\ge \ell+ \frac{\xi}{2}\;\text{  and  }\max_{\pi\in\Pi_{\mathrm{feas}}^{\hat P^t}}V_{b,1}^{\hat P^t,\pi}(s_1)\le \tau.
\label{eq:max_error_condition}
\end{align}
Then, on the high-probability event of Lemma~\ref{lemma:upper_bound_on_estimation_error}, for any reward function $r\in\mathcal F$, every optimal solution
\[
\pi^{\mathrm{out}}
\in
\arg\max_{\pi\in\Pi}
\Bigl\{
V_{r,1}^{\hat P^t,\pi}(s_1)
:
V_{c,1}^{\hat P^t,\pi}(s_1)\ge \ell+\tau
\Bigr\}
\]
is feasible for the true CMDP and satisfies
\begin{align*}
V_{r,1}^{P^{\mathrm{real}},\pi^\star}(s_1)
-
V_{r,1}^{P^{\mathrm{real}},\pi^{\mathrm{out}}}(s_1)
\le
\frac{2H\tau}{\xi}
+
2\tau,
\end{align*}
where \(\pi^\star\in\arg\max_{\pi\in\Pi_{\mathrm{feas}}^{P^{\mathrm{real}}}}V_{r,1}^{P^{\mathrm{real}},\pi}(s_1)\).
\end{lemma}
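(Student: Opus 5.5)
We work on the event of Lemma~\ref{lemma:upper_bound_on_estimation_error}. For every \(\pi\in\Pi_{\mathrm{feas}}^{\hat P^t}\) and \(f\in\mc F\) it gives \(|V_{f,1}^{P^{\mathrm{real}},\pi}(s_1)-V_{f,1}^{\hat P^t,\pi}(s_1)|\le V_{b,1}^{\hat P^t,\pi}(s_1)\le\tau\), by the second part of \eqref{eq:max_error_condition}. The proof then has three parts: feasibility of \(\pi^{\mathrm{out}}\), construction of a comparison policy, and a chain of value inequalities.

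\emph{Feasibility.} Since \(V_{c,1}^{\hat P^t,\pi^{\mathrm{out}}}(s_1)\ge\ell+\tau\ge\ell\), we have \(\pi^{\mathrm{out}}\in\Pi_{\mathrm{feas}}^{\hat P^t}\). The error bound with \(f=c\) then gives \(V_{c,1}^{P^{\mathrm{real}},\pi^{\mathrm{out}}}(s_1)\ge\ell+\tau-\tau=\ell\). We must also check that the program defining \(\pi^{\mathrm{out}}\) is nonempty. This holds because \(\pi^0\) has \(V_{c,1}^{\hat P^t,\pi^0}(s_1)\ge\ell+\xi/2\ge\ell+\tau\).

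\emph{Comparison policy.} The candidate \(\pi^\star\) need not satisfy the tightened empirical constraint, so we build a mixture \(\tilde\pi=\lambda\pi^\star+(1-\lambda)\pi^0\), understood in the occupancy-measure sense. A per-episode mixture suffices, since values are linear in the mixing weight under any fixed model. The difficulty is that \(\pi^\star\) may not lie in \(\Pi_{\mathrm{feas}}^{\hat P^t}\), so its estimation error is not controlled by \(\tau\) directly. We argue in the real model instead. Take \(\lambda=\xi/(\xi+2\tau)\). The real constraint value is then
\[
V_{c,1}^{P^{\mathrm{real}},\tilde\pi}(s_1)\ge\lambda\ell+(1-\lambda)(\ell+\xi)=\ell+\frac{2\tau\xi}{\xi+2\tau}.
\]

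To transfer this to \(\hat P^t\), we need the error bound for \(\tilde\pi\), which requires \(\tilde\pi\in\Pi_{\mathrm{feas}}^{\hat P^t}\). We get this from a continuity argument on \(\lambda\). Define \(g(\lambda)\) as the empirical constraint value of the mixture; it is affine in \(\lambda\) and satisfies \(g(0)\ge\ell+\xi/2\). Let \(\lambda_0\) be the largest weight in \([0,\lambda]\) with \(g(\lambda_0)\ge\ell\). If \(\lambda_0<\lambda\), then \(g(\lambda_0)=\ell\), so the mixture at \(\lambda_0\) is empirically feasible. Its real value is at least \(\ell+2\tau\xi/(\xi+2\tau)>\ell+\tau\) when \(\tau\le\xi/4\), while its empirical value is at least the real value minus \(\tau\), which exceeds \(\ell\). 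This contradicts \(g(\lambda_0)=\ell\). Hence \(\tilde\pi\) is empirically feasible, and
\[
V_{c,1}^{\hat P^t,\tilde\pi}(s_1)\ge\ell+\frac{2\tau\xi}{\xi+2\tau}-\tau\ge\ell+\tau,
\]
using \(\tau\le\xi/2\). So \(\tilde\pi\) is feasible for the tightened program. This bookkeeping step is where we expect the main obstacle; if the constants turn out too tight, we will adjust \(\lambda\) to \(\xi/(\xi+c\tau)\).

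\emph{Value chain.} Using \(\tilde\pi\) and the error bound for both \(\tilde\pi\) and \(\pi^{\mathrm{out}}\),
\[
V_{r,1}^{P^{\mathrm{real}},\pi^{\mathrm{out}}}(s_1)\ge V_{r,1}^{\hat P^t,\pi^{\mathrm{out}}}(s_1)-\tau\ge V_{r,1}^{\hat P^t,\tilde\pi}(s_1)-\tau\ge V_{r,1}^{P^{\mathrm{real}},\tilde\pi}(s_1)-2\tau.
\]
By linearity,
\[
V_{r,1}^{P^{\mathrm{real}},\tilde\pi}(s_1)\ge\lambda V_{r,1}^{P^{\mathrm{real}},\pi^\star}(s_1)\ge V_{r,1}^{P^{\mathrm{real}},\pi^\star}(s_1)-(1-\lambda)H,
\]
and \((1-\lambda)H=2\tau H/(\xi+2\tau)\le2H\tau/\xi\). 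Combining these gives the claimed gap of \(2H\tau/\xi+2\tau\).
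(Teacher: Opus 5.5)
Your architecture matches the paper's: the error bound on empirically feasible policies, feasibility of $\pi^{\mathrm{out}}$, a per-episode mixture of $\pi^\star$ and $\pi^0$ as comparator, and the value chain. However, the key step fails as written. With $\lambda=\xi/(\xi+2\tau)$ you claim
\[
V_{c,1}^{\hat P^t,\tilde\pi}(s_1)\ge\ell+\frac{2\tau\xi}{\xi+2\tau}-\tau\ge\ell+\tau.
\]
But $\frac{2\tau\xi}{\xi+2\tau}<2\tau$ for every $\tau>0$. The middle expression equals $\ell+\tau\,\frac{\xi-2\tau}{\xi+2\tau}<\ell+\tau$, and no condition like $\tau\le\xi/2$ changes this. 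So $\tilde\pi$ is not shown to satisfy the tightened constraint, and the step $V_{r,1}^{\hat P^t,\pi^{\mathrm{out}}}(s_1)\ge V_{r,1}^{\hat P^t,\tilde\pi}(s_1)$ in your value chain is unjustified.

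Your fallback $\lambda=\xi/(\xi+c\tau)$ with a fixed constant $c$ does not recover the stated bound either. Write $\mu=1-\lambda$ for the weight on $\pi^0$. Your route needs $\mu\xi\ge2\tau$ for tightened feasibility and $\mu H\le2H\tau/\xi$ for the final constant, which together force $\mu=2\tau/\xi$ exactly. For example, $c=4$ restores feasibility but gives a loss $4H\tau/(\xi+4\tau)$, strictly larger than $2H\tau/\xi$ whenever $\tau<\xi/4$.

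The repair is $\lambda=1-2\tau/\xi$, i.e.\ weight $2\tau/\xi\le1/2$ on $\pi^0$. Then:
\begin{itemize}
\item the real constraint value is at least $\ell+2\tau$;
\item your contradiction argument still works: for $\lambda_0<\lambda$ the weight on $\pi^0$ exceeds $2\tau/\xi$, so the real value exceeds $\ell+2\tau$ and the empirical value exceeds $\ell+\tau>\ell$;
\item the empirical value at $\lambda$ is at least $\ell+\tau$;
\item the loss is exactly $2H\tau/\xi$.
\end{itemize}

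The paper avoids guessing the weight. It takes $\lambda_\star$ as the smallest weight on $\pi^0$ for which the mixture meets the tightened empirical constraint. If $\lambda_\star>0$, that constraint is tight at $\ell+\tau$, so the mixture is empirically feasible. The error bound then caps its real constraint value at $\ell+2\tau$, and comparing with $\ell+\lambda_\star\xi$ gives $\lambda_\star\le2\tau/\xi$. This is one continuity step instead of your two.

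Finally, make explicit that the per-episode mixture is not in $\Pi$. Its estimation error is bounded by its bonus value because both models' values are affine in the weight. The hypothesis $\max_{\pi\in\Pi_{\mathrm{feas}}^{\hat P^t}}V_{b,1}^{\hat P^t,\pi}(s_1)\le\tau$ and the optimality of $\pi^{\mathrm{out}}$ over $\Pi$ must be invoked through the Markov policy induced by the mixed $\hat P^t$-occupancy measure. That Markov policy cannot replace the mixture on the real side, since its $P^{\mathrm{real}}$-values are not affine in the weight.
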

\begin{proof}[Proof of Lemma~\ref{lemma:Connection between estimation error and CMDPs}]
We work on the high-probability event of Lemma~\ref{lemma:upper_bound_on_estimation_error}. On this event, for every \(\pi\in\Pi\) and \(f\in\mc F\),
\[
\left|V_{f,1}^{\hat P^t,\pi}(s_1)-V_{f,1}^{P^{\mathrm{real}},\pi}(s_1)\right|\leq V_{b,1}^{\hat P^t,\pi}(s_1).
\]
Since \(\max_{\pi\in\Pi_{\mathrm{feas}}^{\hat P^t}}V_{b,1}^{\hat P^t,\pi}(s_1)\leq\tau\), it follows that
\begin{equation}
\max_{f\in\mc F,\pi\in\Pi_{\mathrm{feas}}^{\hat P^t}}\left|V_{f,1}^{\hat P^t,\pi}(s_1)-V_{f,1}^{P^{\mathrm{real}},\pi}(s_1)\right|\leq\tau.
\label{eq:emp-feasible-error-bound}
\end{equation}

By the assumptions of the lemma, the baseline policy satisfies
\begin{equation}
V_{c,1}^{\hat P^t,\pi^0}(s_1)\geq\ell+\frac{\xi}{2},\qquad V_{c,1}^{P^{\mathrm{real}},\pi^0}(s_1)\geq\ell+\xi.
\label{eq:true-slater-pi0}
\end{equation}
Let
\(
\pi^\star\in\arg\max_{\pi\in\Pi_{\mathrm{feas}}^{P^{\mathrm{real}}}}V_{r,1}^{P^{\mathrm{real}},\pi}(s_1).
\)
For \(\lambda\in[0,1]\), we define \(\pi^\lambda = \lambda \pi^0 + (1-\lambda) \pi^\star\). For every \(f\in\mc F\) and \(P\in\{\hat P^t,P^{\mathrm{real}}\}\), we have
\begin{equation}
V_{f,1}^{P,\pi^\lambda}(s_1)=(1-\lambda)V_{f,1}^{P,\pi^\star}(s_1)+\lambda V_{f,1}^{P,\pi^0}(s_1).
\label{eq:mixture-linearity}
\end{equation}
Consequently,
\[
\max_{f\in\mc F}\left|V_{f,1}^{\hat P^t,\pi^\lambda}(s_1)-V_{f,1}^{P^{\mathrm{real}},\pi^\lambda}(s_1)\right|\leq(1-\lambda)V_{b,1}^{\hat P^t,\pi^\star}(s_1)+\lambda V_{b,1}^{\hat P^t,\pi^0}(s_1)=V_{b,1}^{\hat P^t,\pi^\lambda}(s_1).
\]

To relate this mixture to a Markov policy, we define
\[
d_h^\lambda(s,a):=(1-\lambda)d_h^{\hat P^t,\pi^\star}(s,a)+\lambda d_h^{\hat P^t,\pi^0}(s,a)
\]
and construct \(\tilde\pi\in\Pi\) by
\[
\tilde\pi_h(a\mid s):=\frac{d_h^\lambda(s,a)}{\sum_{a'\in\mc A}d_h^\lambda(s,a')}
\]
whenever the denominator is positive, choosing \(\tilde\pi_h(\cdot\mid s)\) arbitrarily otherwise. Since the occupancy-measure constraints are linear, \(d^\lambda\) is valid under \(\hat P^t\) and is induced by \(\tilde\pi\). Thus, for every \(u\in\mc F\cup\{b\}\),
\[
V_{u,1}^{\hat P^t,\tilde\pi}(s_1)=V_{u,1}^{\hat P^t,\pi^\lambda}(s_1).
\]
Therefore, \(\pi^\lambda \in \Pi_{\mathrm{feas}}^{\hat P^t}\) impies
\[
V_{b,1}^{\hat P^t,\pi^\lambda}(s_1)=V_{b,1}^{\hat P^t,\tilde\pi}(s_1)\leq\tau.
\]

Since \(\tau\leq\xi/2\), Equation~\eqref{eq:true-slater-pi0} implies \(V_{c,1}^{\hat P^t,\pi^0}(s_1)\geq\ell+\tau\). Hence,
\[
\Lambda:=\left\{\lambda\in[0,1]:V_{c,1}^{\hat P^t,\pi^\lambda}(s_1)\geq\ell+\tau\right\}
\]
is nonempty. Define \(\lambda_\star:=\inf\Lambda\). By continuity of \(\lambda\mapsto V_{c,1}^{\hat P^t,\pi^\lambda}(s_1)\), the set \(\Lambda\) is closed, so \(\lambda_\star\in\Lambda\). Therefore,
\begin{equation}
V_{c,1}^{\hat P^t,\pi^{\lambda_\star}}(s_1)\geq\ell+\tau.
\label{eq:lambda-star-emp-feasible}
\end{equation}

If \(\lambda_\star=0\), then \(\lambda_\star\leq2\tau/\xi\) holds trivially. Otherwise, minimality and continuity imply
\[
V_{c,1}^{\hat P^t,\pi^{\lambda_\star}}(s_1)=\ell+\tau.
\]
Since \(\pi^{\lambda_\star}\in\Pi_{\mathrm{feas}}^{\hat P^t}\), we have
\[
V_{c,1}^{P^{\mathrm{real}},\pi^{\lambda_\star}}(s_1)\leq V_{c,1}^{\hat P^t,\pi^{\lambda_\star}}(s_1)+\tau=\ell+2\tau.
\]
On the other hand, \(\pi^\star\in\Pi_{\mathrm{feas}}^{P^{\mathrm{real}}}\) and Equation~\eqref{eq:true-slater-pi0} give
\[
V_{c,1}^{P^{\mathrm{real}},\pi^{\lambda_\star}}(s_1)=(1-\lambda_\star)V_{c,1}^{P^{\mathrm{real}},\pi^\star}(s_1)+\lambda_\star V_{c,1}^{P^{\mathrm{real}},\pi^0}(s_1)\ge (1-\lambda_\star)\ell+\lambda_\star(\ell+\xi)=\ell+\lambda_\star\xi.
\]
Combining these inequalities yields
\begin{equation}
\lambda_\star\leq\frac{2\tau}{\xi}.
\label{eq:lambda-star-small}
\end{equation}

We next prove feasibility of \(\pi^{\mathrm{out}}\). Since it is feasible for the tightened empirical CMDP, we have 
\(
V_{c,1}^{\hat P^t,\pi^{\mathrm{out}}}(s_1)\geq\ell+\tau.
\)
Aplying Equation~\eqref{eq:emp-feasible-error-bound} gives
\[
V_{c,1}^{P^{\mathrm{real}},\pi^{\mathrm{out}}}(s_1)\geq V_{c,1}^{\hat P^t,\pi^{\mathrm{out}}}(s_1)-\tau\geq\ell.
\]
Thus, \(\pi^{\mathrm{out}}\) is feasible for the true CMDP.

Finally, Equation~\eqref{eq:lambda-star-emp-feasible} and the occupancy-measure construction imply that \(\tilde\pi\) is feasible for the tightened empirical CMDP. By optimality of \(\pi^{\mathrm{out}}\),
\[
V_{r,1}^{\hat P^t,\pi^{\mathrm{out}}}(s_1)\geq V_{r,1}^{\hat P^t,\tilde\pi}(s_1)=V_{r,1}^{\hat P^t,\pi^{\lambda_\star}}(s_1).
\]
Applying Equation~\eqref{eq:emp-feasible-error-bound} to \(\pi^{\mathrm{out}}\), we obtain
\begin{equation}
V_{r,1}^{P^{\mathrm{real}},\pi^{\mathrm{out}}}(s_1)\geq V_{r,1}^{P^{\mathrm{real}},\pi^{\lambda_\star}}(s_1)-2\tau.
\label{eq:reward-transfer-bound}
\end{equation}
Since rewards are nonnegative and bounded by \(1\),
\[
V_{r,1}^{P^{\mathrm{real}},\pi^{\lambda_\star}}(s_1)\geq(1-\lambda_\star)V_{r,1}^{P^{\mathrm{real}},\pi^\star}(s_1)
\]
and \(V_{r,1}^{P^{\mathrm{real}},\pi^\star}(s_1)\leq H\). Therefore, Equation~\eqref{eq:lambda-star-small} gives
\[
V_{r,1}^{P^{\mathrm{real}},\pi^\star}(s_1)-V_{r,1}^{P^{\mathrm{real}},\pi^{\lambda_\star}}(s_1)\leq\lambda_\star H\leq\frac{2H\tau}{\xi}.
\]
Combining this inequality with Equation~\eqref{eq:reward-transfer-bound} yields
\[
V_{r,1}^{P^{\mathrm{real}},\pi^\star}(s_1)-V_{r,1}^{P^{\mathrm{real}},\pi^{\mathrm{out}}}(s_1)\leq\frac{2H\tau}{\xi}+2\tau.
\]
This completes the proof.
\end{proof}

Now, we are ready to prove Theorem~\ref{thm:main}.

\begin{proof}[Proof of Theorem~\ref{thm:main}]
Below, we condition on \(\mathcal E\cap\mathcal E^{\mathrm{cnt}}\), which holds with probability at least \(1-\delta\). We first prove the sample-complexity bound and then establish safe exploration and planning accuracy.

For this proof only, define the statistical bonus, which is supported on \(\hat{\mc B}^t\), by
\[
b_h^{t,\mathrm{stat}}(s,a)\coloneqq H\rho_h^t(s,a)\mathbf 1_{\{(h,s,a)\in\hat{\mc B}^t\}},
\]
and define the transition kernel
\[
P_h^{\dagger,t}(\cdot\mid s,a)\coloneqq\hat P_h^{t,\mathrm{real}}(\cdot\mid s,a)\mathbf 1_{\{(h,s,a)\in\hat{\mc B}^t\}}+P_h^{\mathrm{real}}(\cdot\mid s,a)\mathbf 1_{\{(h,s,a)\notin\hat{\mc B}^t\}}.
\]
By the triangle inequality and the simulation lemma~\citep[Lemma~4]{kalagarla2021sample}, for every \(\pi\in\Pi\),
\begin{align}
&\left|V_{b,1}^{\hat P^t,\pi}(s_1)-V_{b^{t,\mathrm{stat}},1}^{P^{\dagger,t},\pi}(s_1)\right|
\leq\left|V_{b,1}^{\hat P^t,\pi}(s_1)-V_{b^{t,\mathrm{stat}},1}^{\hat P^t,\pi}(s_1)\right|+\left|V_{b^{t,\mathrm{stat}},1}^{\hat P^t,\pi}(s_1)-V_{b^{t,\mathrm{stat}},1}^{P^{\dagger,t},\pi}(s_1)\right|\\
&\leq H^2\epsilon_s+\left|\sum_{h=1}^H\sum_{s,a}d_h^{\hat P^t,\pi}(s,a)\sum_{s'\in\mc S}\left(\hat P_h^t(s'\mid s,a)-P_h^{\dagger,t}(s'\mid s,a)\right)V_{b^{t,\mathrm{stat}},h+1}^{P^{\dagger,t},\pi}(s')\right|\\
&\leq H^2\epsilon_s+H\epsilon_s\sum_{h=1}^H(H-h)\leq H^3\epsilon_s.
\label{eq_upper_on_b}
\end{align}

We next lower-bound \(V_{b^{t,\mathrm{stat}},1}^{P^{\dagger,t},\pi^t}(s_1)\) for every executed iteration. We consider two cases. First, suppose the condition in Line~5 fails, so the algorithm executes the mixture policy \(\pi^t\). Since the algorithm has not terminated at iteration \(t<T\), the stopping condition gives
\[
V_{b,1}^{\hat P^t,\bar\pi^t}(s_1)>\tau.
\]
Inequality~\eqref{eq_upper_on_b} and the definition of \(\alpha^t\) yield
\begin{align}
V_{b^{t,\mathrm{stat}},1}^{P^{\dagger,t},\pi^t}(s_1)
&=\alpha^tV_{b^{t,\mathrm{stat}},1}^{P^{\dagger,t},\bar\pi^t}(s_1)+(1-\alpha^t)V_{b^{t,\mathrm{stat}},1}^{P^{\dagger,t},\pi^0}(s_1)
\geq\alpha^t\left(V_{b,1}^{\hat P^t,\bar\pi^t}(s_1)-H^3\epsilon_s\right)\\
&\geq\frac{\xi\left(V_{b,1}^{\hat P^t,\bar\pi^t}(s_1)-H^3\epsilon_s\right)}{\xi+V_{b,1}^{\hat P^t,\bar\pi^t}(s_1)}
\geq\frac{\xi(\tau-H^3\epsilon_s)}{\xi+\tau}.
\end{align}
Using \(\tau-H^3\epsilon_s=\xi\epsilon/(4H)\) and \(\tau\leq\xi/4\), we obtain
\begin{equation}
V_{b^{t,\mathrm{stat}},1}^{P^{\dagger,t},\pi^t}(s_1)\geq\frac{\xi\epsilon}{5H}.
\label{eq_lowerbound}
\end{equation}

Second, suppose the condition in Line~5 holds. Then
\[
V_{c,1}^{\hat P^t,\pi^0}(s_1)<\ell+\frac{\xi}{2},
\]
and the algorithm sets \(\pi^t=\pi^0\). Assumption~\ref{ass:slater} and Lemma~\ref{lemma:upper_bound_on_estimation_error} imply
\begin{align}
V_{b,1}^{\hat P^t,\pi^t}(s_1)
=V_{b,1}^{\hat P^t,\pi^0}(s_1)
\geq\left|V_{c,1}^{P^{\mathrm{real}},\pi^0}(s_1)-V_{c,1}^{\hat P^t,\pi^0}(s_1)\right|
>\frac{\xi}{2}.
\end{align}
Applying Inequality~\eqref{eq_upper_on_b} gives
\[
V_{b^{t,\mathrm{stat}},1}^{P^{\dagger,t},\pi^t}(s_1)\geq V_{b,1}^{\hat P^t,\pi^t}(s_1)-H^3\epsilon_s\geq\frac{\xi}{2}-H^3\epsilon_s\geq\frac{\xi\epsilon}{5H},
\]
where the final inequality uses \(\epsilon_s\leq\xi/(4H^3)\) and \(\epsilon\leq1\).

Thus, Inequality~\eqref{eq_lowerbound} holds for every executed iteration. Summing over these iterations yields
\begin{align}
    \sum_{t=1}^T V_{b^{t,\mathrm{stat}},1}^{P^{\dagger,t},\pi^t}(s_1)\geq\frac{\xi\epsilon T}{5H}.\label{eq_lower_sum}
\end{align}
Next, we derive an upper bound on \(V_{b^{t,\mathrm{stat}},1}^{P^{\dagger,t},\pi}(s_1)\). We first show that, for every \((h,s,a)\in[H]\times\mc S\times\mc A\) and every \(w:\mc S\to[0,H]\),
\begin{equation}\label{eq:one-step-transport}
\begin{gathered}
{\begin{aligned}
\sum_{s'\in\mc S} P_h^{\dagger,t}(s'\mid s,a)w(s')
\le{}&{2H^2}(\rho_h^t(s,a))^2\mathbf1_{\{(h,s,a)\in\hat{\mc B}^t\}}+\left(1+\frac1H\right)\sum_{s'\in\mc S}P_h^{\mathrm{real}}(s'\mid s,a)w(s').
\end{aligned}}
\end{gathered}
\end{equation}
This inequality follows by considering three cases. First, if \((h,s,a)\notin\hat{\mc B}^t\), then {\(P_h^{\dagger,t}(\cdot\mid s,a)=P_h^{\mathrm{real}}(\cdot\mid s,a)\)}, so the inequality holds immediately. Second, if \((h,s,a)\in\hat{\mc B}^t\) and \(\rho_h^t(s,a)=1\), then the left-hand side is at most {\(H\leq2H^2\)}. Third, suppose that \((h,s,a)\in\hat{\mc B}^t\) and \(\rho_h^t(s,a)<1\). Applying Lemma~\ref{lem:donsker-varadhan} with {\(q=P_h^{\dagger,t}(\cdot\mid s,a)=\hat P_h^{t,\mathrm{real}}(\cdot\mid s,a)\)}, \(p=P_h^{\mathrm{real}}(\cdot\mid s,a)\), and \(\lambda=1/H^2\) gives\looseness-1
\begin{align}
   &\sum_{s'\in\mc S}\hat P_h^{t,\mathrm{real}}(s'\mid s,a)w(s')\\
   \leq& H^2\KL\left(\hat P_h^{t,\mathrm{real}}(\cdot\mid s,a),P_h^{\mathrm{real}}(\cdot\mid s,a)\right)+H^2\log\left(\sum_{s'\in\mc S}P_h^{\mathrm{real}}(s'\mid s,a)e^{w(s')/H^2}\right)\\
   \le& 2H^2\bigl(\rho_h^t(s,a)\bigr)^2 +H^2\log\left(\sum_{s'\in\mc S}P_h^{\mathrm{real}}(s'\mid s,a)e^{w(s')/H^2}\right),\label{eq:dv-applied}
\end{align}
where the last inequality follows from event \(\mc E\). Meanwhile, the convexity of \(z\mapsto e^{\lambda z}\) on \([0,H]\) gives
\[
e^{\lambda z}\leq1+\frac{z}{H}\left(e^{\lambda H}-1\right),\,\forall z\in[0,H].
\]
Setting \(\lambda=1/H^2\) and \(z=w(s')\), we obtain
\[
\sum_{s'\in\mc S}P_h^{\mathrm{real}}(s'\mid s,a)e^{w(s')/H^2}\leq1+\frac{e^{1/H}-1}{H}\sum_{s'\in\mc S}P_h^{\mathrm{real}}(s'\mid s,a)w(s').
\]
Using \(\log(1+y)\leq y\) for \(y\geq0\), together with \(e^y\leq1+y+y^2\) for \(y\in[0,1]\) and \(y=1/H\), yields
\begin{align}
H^2\log\left(\sum_{s'\in\mc S}P_h^{\mathrm{real}}(s'\mid s,a)e^{w(s')/H^2}\right)&\leq H\left(e^{1/H}-1\right)\sum_{s'\in\mc S}P_h^{\mathrm{real}}(s'\mid s,a)w(s')\\
&\leq\left(1+\frac{1}{H}\right)\sum_{s'\in\mc S}P_h^{\mathrm{real}}(s'\mid s,a)w(s').
\end{align}
Substituting this bound into Inequality~\eqref{eq:dv-applied} proves Inequality~\eqref{eq:one-step-transport}.

Expanding \(V_{b^{t,\mathrm{stat}},1}^{P^{\dagger,t},\pi^t}(s_1)\) into its bonus terms, recursively applying Inequality~\eqref{eq:one-step-transport} to each term with $w(s) = V_{b^{t,\mathrm{stat}},h+1}^{P^{\dagger,t},\pi^t}(s)/H$, using \((1+1/H)^H\leq e\), and multiplying the resulting inequality through by $H$, we obtain

\begin{equation}\label{eq:linear-transport}
\begin{split}
V_{b^{t,\mathrm{stat}},1}^{P^{\dagger,t},\pi^t}(s_1)\le
\underbrace{e\sum_{h,s,a}d_h^{P^{\mathrm{real}},\pi^t}(s,a)\,b_h^{t,\mathrm{stat}}(s,a)}_{(i)^t}
+\underbrace{2eH\sum_{h,s,a}d_h^{P^{\mathrm{real}},\pi^t}(s,a)
\bigl(b_h^{t,\mathrm{stat}}(s,a)\bigr)^2}_{(ii)^t}.
\end{split}
\end{equation}
We first bound Term~$(i)^t$. For every \((h,s,a)\in\mc B\), Lemma~\ref{lem:cover} ensures that \((h,s,a)\in\hat{\mc B}^t\), and hence \(b_h^{t,\mathrm{stat}}(s,a)=H\rho_h^t(s,a)\). Therefore,
\begin{align}
\sum_{t=1}^T d_h^{P^{\mathrm{real}},\pi^t}(s,a)b_h^{t,\mathrm{stat}}(s,a)&=H\sum_{t=1}^T d_h^{P^{\mathrm{real}},\pi^t}(s,a)\rho_h^t(s,a)\\
&\overset{(1)}{\leq}H\sum_{t=1}^T d_h^{P^{\mathrm{real}},\pi^t}(s,a)\min\left\{1,\sqrt{\frac{4\beta(\bar n_h^t(s,a),\delta)}{\bar n_h^t(s,a)\vee1}}\right\}\\
&\overset{(2)}{\leq}4\sqrt{2}H\sqrt{\beta(\bar n_h^{T+1}(s,a),\delta)\bar n_h^{T+1}(s,a)}\\
&\overset{(3)}{\leq}4\sqrt{2}H\sqrt{\beta(T,\delta)\bar n_h^{T+1}(s,a)},
\end{align}
where Step~(1) follows from Lemma~\ref{lem:radius-pseudocount}, Step~(2) follows from Lemma~\ref{lem:radius-counting}, and Step~(3) uses the monotonicity of \(\beta(\cdot,\delta)\) and \(\bar n_h^{T+1}(s,a)\leq T\).

Summing over \((h,s,a)\in\mc B\) and applying the Cauchy--Schwarz inequality gives
\begin{align}
\sum_{(h,s,a)\in\mc B}\sum_{t=1}^T d_h^{P^{\mathrm{real}},\pi^t}(s,a)b_h^{t,\mathrm{stat}}(s,a)&\leq4\sqrt{2}H\sqrt{\beta(T,\delta)|\mc B|}\sqrt{\sum_{(h,s,a)\in\mc B}\bar n_h^{T+1}(s,a)}\\
&\leq4\sqrt{2}H\sqrt{\beta(T,\delta)|\mc B|HT},
\end{align}
where the last inequality uses
\[
\sum_{(h,s,a)\in\mc B}\bar n_h^{T+1}(s,a)\leq\sum_{t=1}^T\sum_{h,s,a}d_h^{P^{\mathrm{real}},\pi^t}(s,a)=HT.
\]

Next, fix \((h,s,a)\notin\mc B\) and define
\[
\mathcal T_{h,s,a}:=\{t\in[T]:(h,s,a)\in\hat{\mc B}^t\}.
\]
If \(\mathcal T_{h,s,a}=\emptyset\), it contributes 0 to $(i)^t$. Otherwise, let \(t^\star:=\max\mathcal T_{h,s,a}\). Since \(\mathcal T_{h,s,a}\) is a prefix of \([T]\), applying Lemmas~\ref{lem:radius-pseudocount} and~\ref{lem:radius-counting} up to \(t^\star\) yields
\begin{align}
\sum_{t=1}^T d_h^{P^{\mathrm{real}},\pi^t}(s,a)b_h^{t,\mathrm{stat}}(s,a)&=H\sum_{t\in\mathcal T_{h,s,a}}d_h^{P^{\mathrm{real}},\pi^t}(s,a)\rho_h^t(s,a)\\
&\leq4\sqrt{2}H\sqrt{\beta(\bar n_h^{t^\star+1}(s,a),\delta)\bar n_h^{t^\star+1}(s,a)}.
\end{align}
By Lemma~\ref{lem:identification-budget},
\[
\bar n_h^{t^\star+1}(s,a)=\sum_{t\in\mathcal T_{h,s,a}}d_h^{P^{\mathrm{real}},\pi^t}(s,a)\leq5n,
\]
where the last inequality uses \(\beta^{\cnt}(\delta)\leq n\) and \(n\geq1\). Consequently,
\[
\sum_{t=1}^T d_h^{P^{\mathrm{real}},\pi^t}(s,a)b_h^{t,\mathrm{stat}}(s,a)\leq4\sqrt{2}H\sqrt{5\beta(5n,\delta)n}.
\]
Summing this bound over all non-mismatch triples gives
\[
\sum_{(h,s,a)\notin\mc B}\sum_{t=1}^T d_h^{P^{\mathrm{real}},\pi^t}(s,a)b_h^{t,\mathrm{stat}}(s,a)\leq4\sqrt{2}H\bigl(H|\mc S||\mc A|-|\mc B|\bigr)\sqrt{5\beta(5n,\delta)n}.
\]

Combining the bounds for mismatch and non-mismatch triples, we obtain
\begin{align}
\sum_{t=1}^T(i)^t\leq4\sqrt{2}eH\left(\sqrt{\beta(T,\delta)|\mc B|HT}+\bigl(H|\mc S||\mc A|-|\mc B|\bigr)\sqrt{5\beta(5n,\delta)n}\right).\label{eq:radius-sum}
\end{align}

Bounding the second term \(\sum_{t=1}^T(ii)^t\) is similar as bounding the first term \(\sum_{t=1}^T(i)^t\), using the second bound of Lemma~\ref{lem:radius-counting} instead of the first, we have
\begin{align}
\sum_{t=1}^T(ii)^t&\leq16eH^3\bigl[|\mc B|\Phi(T)+(H|\mc S||\mc A|-|\mc B|)\Phi(5n)\bigr],\label{eq:squared-radius-sum}
\end{align}
where $\Phi(z)=\beta(z,\delta)\bigl[1+\log_+(z/4\beta(z,\delta))\bigr]$.

Combining Inequalities \eqref{eq_lower_sum}, \eqref{eq:radius-sum} and~\eqref{eq:squared-radius-sum} gives
\begin{equation}
    \frac{\xi\epsilon T}{5H}\le4\sqrt{2}eH\bigl(\sqrt{\beta(T,\delta)|\mc B|HT}+N_0\sqrt{5\beta(5n,\delta)n}\bigr)+16eH^3\bigl[|\mc B|\Phi(T)+N_0\Phi(5n)\bigr].
\end{equation}
where $M\coloneqq H|\mc S||\mc A|$ and $N_0\coloneqq M-|\mc B|$. For convenience, define
\begin{align}
\eta&\coloneqq\frac{\xi\epsilon}{5H},\qquad A_T\coloneqq4\sqrt{2}eH\sqrt{\beta(T,\delta)|\mc B|H},\\
R_T&\coloneqq4\sqrt{2}eHN_0\sqrt{5\beta(5n,\delta)n}+16eH^3\left[|\mc B|\Phi(T)+N_0\Phi(5n)\right].
\end{align}
The preceding inequality can then be written as
\begin{equation}\label{eq:implicit-T}
\eta T\leq A_T\sqrt{T}+R_T.
\end{equation}
By Young's inequality,
\[
A_T\sqrt{T}\leq\frac{\eta}{2}T+\frac{A_T^2}{2\eta}.
\]
Substituting this bound into Inequality~\eqref{eq:implicit-T} and rearranging gives
\[
T\leq\frac{A_T^2}{\eta^2}+\frac{2R_T}{\eta}.
\]
Since \(A_T^2=32e^2H^3|\mc B|\beta(T,\delta)\), we obtain
\begin{equation}\label{eq:finite-stopping-bound}
T\leq\frac{32e^2H^3|\mc B|\beta(T,\delta)}{\eta^2}+\frac{8\sqrt{2}eHN_0\sqrt{5\beta(5n,\delta)n}}{\eta}+\frac{32eH^3\left[|\mc B|\Phi(T)+N_0\Phi(5n)\right]}{\eta}.
\end{equation}

We also derive an alternative bound that does not rely on the removal of non-mismatch triples. Specifically, applying the same argument to Terms~\((i)^t\) and~\((ii)^t\) while treating all \(M=H|\mc S||\mc A|\) triples as mismatch triples yields
\begin{equation}\label{eq:finite-online-bound}
T\leq\frac{32e^2H^3M\beta(T,\delta)}{\eta^2}+\frac{32eH^3M\Phi(T)}{\eta}.
\end{equation}

It remains to resolve the dependence on \(T\) on the right-hand sides of these bounds. Since
\[
\beta(T,\delta)=\log\left(\frac{2|\mc S||\mc A|H}{\delta}\right)+|\mc S|\log(8e(T+1)),
\]
grows linearly in \(\log(e(T+1))\), whereas
\[
\Phi(T)=\beta(T,\delta)\left[1+\log_+\left(\frac{T}{4\beta(T,\delta)}\right)\right]
\]
grows at most quadratically in \(\log(e(T+1))\). Hence, the right-hand side of each of the bounds in~\eqref{eq:finite-online-bound} and~\eqref{eq:finite-stopping-bound} is at most \(C\log^2(e(T+1))\) for some \(C\geq1\) that depends on the problem parameters but not on \(T\). Lemma~\ref{lem:self-bounding} therefore implies
\[
T\leq64C\log^2(64eC).
\]
Thus, the algorithm terminates after finitely many iterations, and the remaining dependence on \(T\) contributes only logarithmic factors.In particular,
\[
\beta(T,\delta)=\widetilde{\mc O}(|\mc S|),\qquad \Phi(T)=\widetilde{\mc O}(|\mc S|).
\]

By Equation~\eqref{eq_n_dagger}, we obtain
\[
n=\widetilde{\mc O}\left(\frac{|\mc S|}{(\sigma_s-\epsilon_s)^2}\right),\qquad \sqrt{\beta(5n,\delta)n}=\widetilde{\mc O}\left(\frac{|\mc S|}{\sigma_s-\epsilon_s}\right),\qquad \Phi(5n)=\widetilde{\mc O}(|\mc S|).
\]
Moreover,
\(
\frac{1}{\eta}=\mc O\left(\frac{H}{\xi\epsilon}\right).
\)
Substituting these estimates into~\eqref{eq:finite-stopping-bound} gives
\begin{align}
T&\leq\widetilde{\mc O}\left(\frac{H^5|\mc S||\mc B|}{\xi^2\epsilon^2}+\frac{H^2|\mc S|N_0}{\xi\epsilon(\sigma_s-\epsilon_s)}+\frac{H^4|\mc S|M}{\xi\epsilon}\right) \\
&=\widetilde{\mc O}\left(\frac{H^4|\mc S||\mc B|}{\xi\epsilon}\left(\frac{H}{\xi\epsilon}+1\right)+\frac{H^2|\mc S|(H|\mc S||\mc A|-|\mc B|)}{\xi\epsilon}\left(\frac{1}{(\sigma_s-\epsilon_s)}+{H^2}\right)\right) \\
&= \widetilde{\mc O}\left(\frac{H^5|\mc S||\mc B|}{\xi^2\epsilon^2}+\frac{H^4|\mc S|(H|\mc S||\mc A|-|\mc B|)}{\xi\epsilon}\left(\frac{1}{(\sigma_s-\epsilon_s)H^2}+1\right)\right),
\end{align}
where first equality uses defintion of \(M\) and $N_0$ and second equality uses \(\epsilon\leq1\) and \(\xi\leq H\).

Since each iteration contains \(H\) real-world samples, the total sample complexity satisfies
\begin{equation}\label{eq:HT-bound}
HT\leq\widetilde{\mc O}\left(\frac{H^6|\mc S||\mc B|}{\xi^2\epsilon^2}+\frac{H^5|\mc S|(H|\mc S||\mc A|-|\mc B|)}{\xi\epsilon}\left(\frac{1}{(\sigma_s-\epsilon_s)H^2}+1\right)\right).
\end{equation}
Similarly, substituting these estimates into~\eqref{eq:finite-online-bound} gives
\begin{equation}\label{eq:HT-online}
HT\leq\widetilde{\mc O}\left(\frac{H^7|\mc S|^2|\mc A|}{\xi^2\epsilon^2}\right).
\end{equation}

Both bounds hold for the same algorithm, so we may take their minimum and conclude that the total sample complexity satisfies
\begin{align}
    HT &\leq \widetilde{\mc O}\!\biggl(\min\biggl\{\frac{H^7|\mc S|^2|\mc A|}{\xi^2\epsilon^2},\underbrace{\frac{H^6|\mc S||\mc B|}{\xi^2\epsilon^2}}_{(i)}+\underbrace{\frac{H^5|\mc S|\bigl(H|\mc S||\mc A|-|\mc B|\bigr)}{\xi\epsilon}\left(1+\frac{1}{(\sigma_s-\epsilon_s)H^2}\right)}_{(ii)}\biggr\}\biggr).
\end{align}
\paragraph{Safe exploration.}
If \(V_{c,1}^{\hat P^t,\pi^0}(s_1)<\ell+\xi/2\), we execute \(\pi^0\), which is safe by Assumption~\ref{ass:slater}. Otherwise, we execute the mixture policy \(\pi^t\), which satisfies
\begin{align*}
V_{c,1}^{P^{\mathrm{real}},\pi^t}
&=\alpha^t V_{c,1}^{P^{\mathrm{real}},\bar\pi^t}+(1-\alpha^t)V_{c,1}^{P^{\mathrm{real}},\pi^0}\\
&\overset{(i)}{\geq}\alpha^t\left(V_{c,1}^{\hat P^t,\bar\pi^t}-\min\left\{H,V_{b,1}^{\hat P^t,\bar\pi^t}(s_1)\right\}\right)+(1-\alpha^t)(\xi+\ell)\\
&=\xi+\ell-\alpha^t\left(\min\left\{H,V_{b,1}^{\hat P^t,\bar\pi^t}(s_1)\right\}+\ell-V_{c,1}^{\hat P^t,\bar\pi^t}(s_1)+\xi\right)\\
&\overset{(ii)}{\geq}\ell.
\end{align*}
Here, step~(i) follows from Assumption~\ref{ass:slater} and Lemma~\ref{lemma:upper_bound_on_estimation_error}, while step~(ii) follows from the definition of \(\alpha^t\). Therefore, safe exploration is guaranteed.
\paragraph{Planning accuracy.}
Algorithm~\ref{alg:hybrid} terminates when \(V_{c,1}^{\hat P^t,\pi^0}(s_1)\geq\ell+\xi/2\), and \(\epsilon\) is chosen such that \(\tau\leq\xi/4\). Therefore, by Lemma~\ref{lemma:Connection between estimation error and CMDPs}, the output policy \(\pi^{\mathrm{out}}\in\Pi_{\mathrm{feas}}^{P^{\mathrm{real}}}\) and satisfies
\begin{align}\label{eq:planning-integrated-bound}
V_{r,1}^{P^{\mathrm{real}},\pi^\star}(s_1)-V_{r,1}^{P^{\mathrm{real}},\pi^{\mathrm{out}}}(s_1)
\leq\left(2+\frac{2H}{\xi}\right)\tau
=\frac{\xi+H}{2H}\epsilon
+\left(2+\frac{2H}{\xi}\right)H^3\epsilon_s
\leq\epsilon+\frac{4H^4}{\xi}\epsilon_s,
\end{align}
where the final inequality uses \(\xi\leq H\).
\end{proof}

\section{Supporting lemmas}
We first state a deviation inequality for empirical distributions of i.i.d.\ categorical samples, which will be used to control the estimation error of empirical transition models. Let $(X_t)_{t\in\mathbb{N}}$ be i.i.d.\ samples from a distribution supported on $[m]$ with probability vector $p\in\Delta([m])$. We denote by $\hat p_n$ the empirical distribution, i.e.,
\[
 \hat p_{n,k} = \frac{1}{n} \sum_{\ell=1}^n \mathbf{1}_{\{X_\ell = k\}},\, k\in[m].
\]

\begin{lemma}[{\cite[Proposition~1]{jonsson2020planning}}]
\label{lemma:max_ineq_categorical}
For all $p\in\Delta([m])$ and all $\delta\in(0,1)$,
\begin{align*}
     \Pr\left(\exists n\in \mathbb{N},\,n\KL(\hat p_n, p)>\log(1/\delta)+ (m-1)\log\left(e\left(1+\frac{n}{m-1}\right)\right)\right)\leq \delta .
\end{align*}
\end{lemma}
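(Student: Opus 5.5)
The plan is a method-of-mixtures (Bayesian mixture) argument combined with Ville's maximal inequality for nonnegative supermartingales. The mixture uses a uniform Dirichlet prior, whose marginal likelihood has a closed form. The case \(m=1\) is trivial, since then \(\hat p_n=p\) and \(\KL(\hat p_n,p)=0\), so I assume \(m\ge 2\). Write \(N_k:=\sum_{\ell=1}^n\mathbf 1_{\{X_\ell=k\}}\) for the counts, so \(\hat p_{n,k}=N_k/n\). The target is a single nonnegative martingale \(M_n\) with \(M_0=1\) that satisfies, deterministically for every \(n\),
\[
\log M_n\;\ge\; n\KL(\hat p_n,p)-(m-1)\log\!\Bigl(e\Bigl(1+\tfrac{n}{m-1}\Bigr)\Bigr).
\]
Once this holds, Ville's inequality \(\Pr(\exists n: M_n\ge 1/\delta)\le\delta\) gives the claim directly. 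The event in the statement is contained in \(\{\exists n: M_n>1/\delta\}\).

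\emph{Step 1 (martingale).} Let \(\mu\) be the uniform prior \(\mathrm{Dir}(1,\ldots,1)\) on \(\Delta([m])\). Define
\[
M_n:=\frac{\int\prod_k q_k^{N_k}\,d\mu(q)}{\prod_k p_k^{N_k}}.
\]
For each fixed \(q\), the ratio \(\prod_k (q_k/p_k)^{N_k}\) is a product of i.i.d.\ factors with mean \(\sum_k p_k (q_k/p_k)=1\), restricted to the support of \(p\). So each such ratio is a nonnegative martingale with mean one. By Fubini, \(M_n\) is too, with \(M_0=1\).

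\emph{Step 2 (closed form and split).} The Dirichlet integral gives
\[
\int\prod_k q_k^{N_k}d\mu=\frac{(m-1)!\prod_k N_k!}{(n+m-1)!}=\binom{n+m-1}{m-1}^{-1}\binom{n}{N_1,\ldots,N_m}^{-1}.
\]
Multiply and divide by the maximized likelihood \(\prod_k\hat p_{n,k}^{N_k}\). Since
\[
\frac{\prod_k\hat p_{n,k}^{N_k}}{\prod_k p_k^{N_k}}=\exp\bigl(n\KL(\hat p_n,p)\bigr),
\]
this yields
\[
M_n=\exp\bigl(n\KL(\hat p_n,p)\bigr)\cdot\Bigl[\tbinom{n}{N_1,\ldots,N_m}\prod_k\hat p_{n,k}^{N_k}\Bigr]^{-1}\cdot\tbinom{n+m-1}{m-1}^{-1}.
\]

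\emph{Step 3 (combinatorial bounds).} This is the main technical step, though both bounds are short.

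\begin{itemize}
\item The bracketed term is the probability that a multinomial\((n,\hat p_n)\) draw equals \((N_1,\ldots,N_m)\). It is therefore at most \(1\), so its inverse is at least \(1\).
\item The standard bound \(\binom{a}{b}\le(ea/b)^b\), applied with \(a=n+m-1\) and \(b=m-1\), gives
\[
\binom{n+m-1}{m-1}\le\Bigl(e\Bigl(1+\tfrac{n}{m-1}\Bigr)\Bigr)^{m-1}.
\]
\end{itemize}

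Combining Steps 2 and 3 gives the deterministic lower bound on \(\log M_n\) stated above, and Ville's inequality from Step 1 completes the argument.

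The only care needed is with zero-probability categories. If \(p_k=0\), then almost surely \(N_k=0\), so the convention \(0^0=1\) makes every expression consistent.
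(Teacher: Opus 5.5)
Your proof is correct and is essentially the argument behind this lemma. The paper does not reprove it but takes it from \citet[Proposition~1]{jonsson2020planning}, whose proof follows the same route: a $\mathrm{Dir}(1,\ldots,1)$ mixture of the likelihood ratios $\prod_k (q_k/p_k)^{N_k}$, the closed-form Dirichlet integral, the bounds (multinomial probability) $\le 1$ and $\binom{n+m-1}{m-1}\le\bigl(e(1+\frac{n}{m-1})\bigr)^{m-1}$, and Ville's inequality. One imprecision: when $p$ has zero entries, the per-$q$ ratio has conditional mean $\sum_{k:p_k>0}q_k\le 1$, so $M_n$ is a nonnegative supermartingale rather than a mean-one martingale, which is still all Ville's inequality requires.
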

Next, we state a deviation inequality for adapted Bernoulli random variables, which will be used to control state-action visitation counts. Let $(\mathcal F_t)_{t\in\mathbb{N}}$ be a filtration and let $(X_t)_{t\in\mathbb{N}}$ be Bernoulli random variables such that $X_t$ is $\mathcal F_t$-measurable and
\[
\Pr(X_t=1\mid \mathcal F_{t-1})=P_t,
\]
where $P_t$ is $\mathcal F_{t-1}$-measurable.

\begin{lemma}[{\cite[Lemma~F.4]{dann2017unifying}}]
\label{lemma:bernoulli-deviation}
For all $\delta>0$,
\begin{align}
\Pr \left(\exists n\in\mathbb{N}:\,\sum_{t=1}^n X_t<\frac{1}{2}\sum_{t=1}^n P_t-\log\frac{1}{\delta}\right)\leq \delta .
\end{align}
\end{lemma}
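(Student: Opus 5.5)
The plan is a one-sided exponential supermartingale argument combined with Ville's maximal inequality. A maximal inequality is what gives a bound holding simultaneously for all $n\in\mathbb N$ without a union bound. The key quantity is the conditional Laplace transform of $-X_t$. Since $X_t\in\{0,1\}$ with $\Pr(X_t=1\mid\mathcal F_{t-1})=P_t$,
\[
\mathbb E\bigl[e^{-X_t}\mid\mathcal F_{t-1}\bigr]=1-P_t\bigl(1-e^{-1}\bigr)\le\exp\!\bigl(-(1-e^{-1})P_t\bigr),
\]
where the inequality is $1+y\le e^{y}$.

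Next I will define
\[
M_n\coloneqq\exp\Bigl(-\sum_{t=1}^n X_t+(1-e^{-1})\sum_{t=1}^n P_t\Bigr),\qquad M_0=1.
\]
Because $P_n$ is $\mathcal F_{n-1}$-measurable, the display above gives $\mathbb E[M_n\mid\mathcal F_{n-1}]\le M_{n-1}$. So $(M_n)$ is a nonnegative supermartingale with respect to $(\mathcal F_n)$. Ville's inequality then yields
\[
\Pr\Bigl(\sup_{n\in\mathbb N}M_n\ge1/\delta\Bigr)\le\delta.
\]
I will use Ville's inequality directly; it can also be obtained from Doob's maximal inequality for nonnegative supermartingales on $[0,N]$ followed by $N\to\infty$ via monotone convergence.

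It remains to show that the bad event is contained in $\{\sup_n M_n\ge 1/\delta\}$. Suppose some $n$ satisfies $\sum_{t\le n}X_t<\tfrac12\sum_{t\le n}P_t-\log(1/\delta)$. Since $P_t\ge0$ and $1-e^{-1}\approx0.632\ge\tfrac12$,
\[
-\sum_{t\le n}X_t+(1-e^{-1})\sum_{t\le n}P_t\;\ge\;-\sum_{t\le n}X_t+\tfrac12\sum_{t\le n}P_t\;>\;\log(1/\delta).
\]
Hence $M_n>1/\delta$, which gives the claim.

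I do not expect any step to be a real obstacle. The only points needing care are the measurability and filtration conventions, namely that $P_t$ is predictable and $X_t$ is adapted, which make $M_n$ a supermartingale. The other point is the numerical check that the constant $1-e^{-1}$ from the exponential moment dominates the factor $\tfrac12$ in the statement. Alternatively one could tilt by a general $\lambda>0$ and optimize, but $\lambda=1$ already suffices.
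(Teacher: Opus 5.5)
Your proof is correct. It is essentially the standard argument behind the cited result: the paper gives no proof of its own and defers to \citet{dann2017unifying}, whose proof builds the same exponential supermartingale with $\lambda=1$ and then makes the bound uniform in $n$ via optional stopping and Fatou's lemma, which amounts to Ville's inequality.

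The only difference is the moment bound. You use the exact Bernoulli Laplace transform, which gives the constant $1-e^{-1}\ge\tfrac12$. They instead use $\mathbb E[e^{-\lambda(X-\mathbb E X)}]\le e^{\lambda^2\mathbb E[X^2]/2}$ for nonnegative $X$, which gives $\tfrac12$ directly.
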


\begin{lemma}
\label{lem:log_threshold}
Let $c\ge 0$ and $b\ge 1$, and suppose $f:\mathbb N\to\mathbb R_+$ satisfies $f(m)\le c+b+b\log(m+1)$ for all $m\in\mathbb N$. Then, for any $\kappa>0$, define
\[
m^\dagger := 1+\frac{2c}{\kappa}+\frac{2b}{\kappa}\log\!\left(\max\left\{\frac{2b}{\kappa},e\right\}\right).
\]
Then $m\ge m^\dagger$ implies $m\ge f(m)/\kappa$.
\end{lemma}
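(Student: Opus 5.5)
The goal is to show $\kappa m\ge c+b+b\log(m+1)$ whenever $m\ge m^\dagger$; this suffices because $f(m)\le c+b+b\log(m+1)$. Set $a\coloneqq 2b/\kappa$ and $L\coloneqq\log(\max\{a,e\})\ge1$, so that $m^\dagger=1+2c/\kappa+aL$.

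The naive split ($\kappa m/2\ge c$ and $\kappa m/2\ge b(1+\log(m+1))$) loses a factor of two on the logarithmic term. For large $a$, $m=1+aL$ then fails to satisfy $m\ge a(1+\log(m+1))$. I will therefore peel off the $c$-part additively instead. Write $m=m'+\Delta$ with $\Delta\coloneqq 2c/\kappa$, so $m'\ge 1+aL\ge a$. The plan is to show two separate inequalities:
\[
\text{(A)}\quad \kappa\Delta\ge c+b\bigl(\log(m+1)-\log(m'+1)\bigr),\qquad
\text{(B)}\quad \kappa m'\ge b+b\log(m'+1).
\]
Adding them gives the claim.

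For (A), I use $\log(1+y)\le y$:
\[
\log\frac{m+1}{m'+1}\le\frac{\Delta}{m'+1}.
\]
Since $m'+1\ge a=2b/\kappa$, it follows that $b\Delta/(m'+1)\le\kappa\Delta/2$. The remaining $\kappa\Delta/2$ equals $c$ exactly by the choice of $\Delta$, so (A) holds.

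For (B), the condition is $m'\ge \tfrac a2\bigl(1+\log(m'+1)\bigr)$. The function $g(x)=x-\tfrac a2(1+\log(x+1))$ has $g'(x)=1-\tfrac{a}{2(x+1)}\ge0$ for $x+1\ge a/2$. This holds on $x\ge 1+aL$, so it suffices to check $g(1+aL)\ge0$. I will split into two cases.
\begin{itemize}
\item If $a\le e$, then $L=1$. The condition becomes $1+a\ge\tfrac a2(1+\log(2+a))$, and $\log(2+a)\le\log(2+e)<1.6$ makes this immediate.
\item If $a>e$, then $L=\log a$. The condition reduces to $\log(2+a\log a)\le 2\log a-1+2/a$, i.e.
\[
2+a\log a\le a^2e^{-1+2/a}.
\]
I would verify this at $a=e$ (numerically $4.72\le 5.67$) and then show the right-hand side minus the left-hand side is increasing in $a\ge e$, by comparing derivatives.
\end{itemize}

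The main obstacle is this last elementary inequality in the case $a>e$. The constants are tight enough near $a=e$ that a crude bound such as $\log(2+a\log a)\le2\log a$ does not suffice, so a careful monotonicity or derivative check is needed there. Everything else is routine.
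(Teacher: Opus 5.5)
Your argument is correct, and the derivative check you defer does close. It takes a longer route than the paper, though.

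The paper uses a single tangent-line bound for the concave logarithm, $\log z\le z/t+\log t-1$, with $z=m+1$ and $t=\max\{2b/\kappa,e\}$. Since $b/t\le\kappa/2$, this gives $c+b+b\log(m+1)\le c+\kappa(m+1)/2+b\log t$. Requiring $\kappa m$ to dominate the right-hand side rearranges exactly to $m\ge m^\dagger$. There is no splitting of $m$, no case analysis and no numerics.

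Your step (A) is this same trick in disguise, since $\log(1+y)\le y$ is the tangent line at $m'+1$. Using it once more in (B), now at $t=\max\{a,e\}$, gives $\tfrac a2\bigl(1+\log(m'+1)\bigr)\le\tfrac a2\bigl(\tfrac{m'+1}{t}+L\bigr)\le\tfrac{m'+1}{2}+\tfrac{aL}{2}$. So (B) holds as soon as $m'\ge 1+aL$, and the ``main obstacle'' disappears.

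If you keep your route, the check works as follows. Set $F(a)=a^2e^{-1+2/a}-2-a\log a$. Then $F'(a)=(2a-2)e^{-1+2/a}-\log a-1$, and using $e^{2/a}\ge 1+2/a$ gives $F'(a)\ge e^{-1}(2a+2-4/a)-\log a-1$. This lower bound equals $2/e-4/e^2>0$ at $a=e$, and its derivative $e^{-1}(2+4/a^2)-1/a$ is positive for $a\ge e$. Hence $F(a)\ge F(e)>0$ on $[e,\infty)$.

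Your decomposition makes explicit that the $c$-term costs exactly $2c/\kappa$ additively. The paper's one-line argument shows more economically that the only role of $t\ge 2b/\kappa$ is to keep the slope of the linearized logarithm at most $\kappa/2$, so the whole threshold falls out of a single rearrangement.
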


\begin{proof}
Let $t:=\max\{2b/\kappa,e\}$. By concavity of the logarithm, for all $z,t>0$,
\[
\log z \le \log t+\frac{z-t}{t} = \frac{z}{t}+\log t-1.
\]
Applying the above inequality with $z=m+1$ and multiplying by $b$ gives
\[
b\log(m+1)\le \frac{b(m+1)}{t}+b\log t-b.
\]
Since $t\ge 2b/\kappa$, we have $b/t\le \kappa/2$, and hence
\[
b\log(m+1)\le \frac{\kappa(m+1)}{2}+b\log t-b.
\]
By definition of $f(m)$, we obtain
\[
f(m)\le c+b+b\log(m+1)\le c+\frac{\kappa(m+1)}{2}+b\log t.
\]
Therefore, $\kappa m\ge f(m)$ holds whenever
\[
\kappa m\ge c+\frac{\kappa(m+1)}{2}+b\log t.
\]
The last condition is equivalent to
\[
m\ge 1+\frac{2c}{\kappa}+\frac{2b}{\kappa}\log t = 1+\frac{2c}{\kappa}+\frac{2b}{\kappa}\log\!\left(\max\left\{\frac{2b}{\kappa},e\right\}\right)=m^\dagger.
\]
Thus $m\ge m^\dagger$ implies $\kappa m\ge f(m)$, equivalently $m\ge f(m)/\kappa$.
\end{proof}

\begin{lemma}\label{lem:donsker-varadhan}
Let \(\mc X\) be a finite set, let \(p,q\in\Delta(\mc X)\) satisfy \(\KL(q,p)<\infty\), and let \(w:\mc X\to\mathbb R\). Then, for every \(\lambda>0\),
\[
\sum_{x\in\mc X}q(x)w(x)\leq\frac{1}{\lambda}\left(\KL(q,p)+\log\left(\sum_{x\in\mc X}p(x)e^{\lambda w(x)}\right)\right).
\]
\end{lemma}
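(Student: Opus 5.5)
This is the Donsker--Varadhan (Gibbs) variational inequality, and I will prove it through the nonnegativity of a Kullback--Leibler divergence against a tilted distribution. Fix \(\lambda>0\) and set \(Z\coloneqq\sum_{x\in\mc X}p(x)e^{\lambda w(x)}\). Since \(\KL(q,p)<\infty\), the support of \(q\) is contained in the support of \(p\), so \(Z>0\). Define the Gibbs distribution
\[
g(x)\coloneqq\frac{p(x)e^{\lambda w(x)}}{Z},\qquad x\in\mc X,
\]
which is a probability vector with the same support as \(p\).

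The key step is to expand \(\KL(q,g)\), summing only over \(x\) with \(q(x)>0\). On this set \(p(x)>0\) and hence \(g(x)>0\), so every term is finite. Writing \(\log g(x)=\log p(x)+\lambda w(x)-\log Z\), we get
\[
\KL(q,g)=\sum_{x:q(x)>0}q(x)\log\frac{q(x)}{p(x)}-\lambda\sum_{x}q(x)w(x)+\log Z=\KL(q,p)-\lambda\sum_{x}q(x)w(x)+\log Z.
\]
Gibbs' inequality, which follows from Jensen's inequality applied to the concave logarithm, gives \(\KL(q,g)\geq0\). Rearranging yields \(\lambda\sum_x q(x)w(x)\leq\KL(q,p)+\log Z\), and dividing by \(\lambda>0\) gives the claim.

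The only delicate point is the support bookkeeping. Finiteness of \(\KL(q,p)\) ensures that \(q\ll p\), which in turn guarantees \(q\ll g\) and \(Z>0\). Consequently no term of the form \(\log 0\) appears, and the expansion above is an exact identity rather than an inequality. Beyond this, the argument is routine algebra, so I do not expect a substantive obstacle.
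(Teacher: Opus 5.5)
Your proposal is correct and matches the paper's proof essentially step for step: the paper also tilts \(p\) to \(p_\lambda(x)=p(x)e^{\lambda w(x)}/Z\), uses \(\KL(q,p)<\infty\) to get that the support of \(q\) lies in that of \(p_\lambda\), and rearranges \(0\leq\KL(q,p_\lambda)=\KL(q,p)-\lambda\sum_x q(x)w(x)+\log Z\). One small remark: \(Z>0\) holds simply because \(p\) is a probability vector, so it does not need the support condition on \(q\).
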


\begin{proof}[Proof of Lemma~\ref{lem:donsker-varadhan}]
Let
\[
Z\coloneqq\sum_{x\in\mc X}p(x)e^{\lambda w(x)}
\qquad\text{and}\qquad
p_\lambda(x)\coloneqq\frac{p(x)e^{\lambda w(x)}}{Z}.
\]
Since \(\KL(q,p)<\infty\), the support of \(q\) is contained in that of \(p\), which coincides with the support of \(p_\lambda\). Therefore,
\[
0\leq\KL(q,p_\lambda)=\sum_{x\in\mc X}q(x)\log\left(\frac{q(x)Z}{p(x)e^{\lambda w(x)}}\right)=\KL(q,p)-\lambda\sum_{x\in\mc X}q(x)w(x)+\log Z.
\]
Rearranging the above inequality completes the proof.
\end{proof}


\begin{lemma}\label{lem:beta-monotone}
Let $\beta(z,\delta)=\log(2|\mc S||\mc A|H/\delta)+|\mc S|\log(8e(z+1))$ with
$\delta\in(0,1)$. Then
\begin{enumerate}
\item[(i)] $z\mapsto\beta(z,\delta)$ is nondecreasing on $[0,\infty)$ and
$\beta(z,\delta)\ge1$ for every $z\ge0$;
\item[(ii)] $z\mapsto\beta(z,\delta)/z$ is nonincreasing on $(0,\infty)$;
equivalently, $z\mapsto z/\beta(z,\delta)$ is nondecreasing on $[0,\infty)$.
\end{enumerate}
\end{lemma}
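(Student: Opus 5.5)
The statement is Lemma~\ref{lem:beta-monotone}, and both parts reduce to elementary calculus on the explicit formula. Write \(\beta(z,\delta)=c_0+|\mc S|\log(8e(z+1))\) with \(c_0\coloneqq\log(2|\mc S||\mc A|H/\delta)\).

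For (i), note that \(\delta\in(0,1)\) and \(|\mc S||\mc A|H\ge1\) give \(c_0>\log 2>0\). The map \(z\mapsto\log(8e(z+1))\) is increasing, so \(\beta\) is nondecreasing. At \(z=0\) the second term equals \(|\mc S|\log(8e)\ge\log(8e)>1\). Monotonicity then gives \(\beta(z,\delta)\ge1\) for all \(z\ge0\).

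For (ii), I would differentiate \(g(z)\coloneqq\beta(z,\delta)/z\) on \((0,\infty)\). We have \(g'(z)=\bigl(z\,\partial_z\beta-\beta\bigr)/z^2\) with \(z\,\partial_z\beta=|\mc S|\,z/(z+1)<|\mc S|\). Since \(\log(8e(z+1))\ge\log(8e)>1\), we get \(\beta\ge|\mc S|\log(8e)>|\mc S|\). Hence the numerator is negative and \(g\) is decreasing.

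The equivalent formulation follows because \(z/\beta(z,\delta)\) is the reciprocal of \(g\) on \((0,\infty)\), and \(\beta>0\) makes it well defined. It equals \(0\) at \(z=0\) and is positive afterward, so it is nondecreasing on \([0,\infty)\).

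There is no real obstacle here. The only point needing care is checking \(\beta\ge|\mc S|\) strictly, and this is what drives the sign of the derivative. It holds thanks to the \(8e\) inside the logarithm.
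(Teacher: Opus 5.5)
Your proposal is correct and follows the paper's proof essentially verbatim: for (i), monotonicity of the logarithm together with \(\beta(z,\delta)\ge\log 2+\log(8e)>1\); for (ii), the sign of the derivative of \(\beta(z,\delta)/z\), using \(|\mc S|z/(z+1)<|\mc S|\le\beta(z,\delta)\). Your explicit check that \(\beta(z,\delta)\ge|\mc S|\log(8e)>|\mc S|\) fills in a step the paper's derivative bound relies on without stating it.
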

\begin{proof}[Proof of Lemma~\ref{lem:beta-monotone}]
For part~(i), $\beta(z,\delta)$ is nondecreasing in \(z\) follows from the fact that \(\log(8e(z+1))\) is nondecreasing in \(z\).  Moreover, since \(|\mc S||\mc A|H\geq1\), \(\delta<1\), and \(|\mc S|\geq1\),
\[
\beta(z,\delta)\geq\log 2+\log(8e)>1.
\]

For part~(ii), define \(\varphi(z):=\beta(z,\delta)/z\) for \(z>0\). Its derivative is
\[
\varphi'(z)=\frac{z|\mc S|-(z+1)\beta(z,\delta)}{(z+1)z^2}\le \frac{|\mc S|-\beta(z,\delta)}{z^2}\le 0.
\]
Hence, \(\varphi(z)\) is nonincreasing on \((0,\infty)\). Equivalently, \(z\mapsto z/\beta(z,\delta)\) is nondecreasing
on \([0,\infty)\), with value \(0\) at \(z=0\), since \(\varphi(z)=\beta(z,\delta)/z\to+\infty\)
as \(z\downarrow0\)
\end{proof}

\begin{lemma}\label{lem:radius-counting}
For real $z\ge0$, define
\[
q(z)=\min\left\{1,\sqrt{\frac{4\beta(z,\delta)}{z\vee1}}\right\},
\qquad
\Phi(z)=\beta(z,\delta)\left[1+\log_+\frac z{4\beta(z,\delta)}\right],
\]
where $\beta(z,\delta)=\log(2|\mc S||\mc A|H/\delta)+|\mc S|\log(8e(z+1))$. Then
\begin{enumerate}
\item[(i)] the functions $z\mapsto\sqrt{\beta(z,\delta)z}$ and
$z\mapsto\Phi(z)$ are nondecreasing on $[0,\infty)$;
\item[(ii)] for every $m\in\mathbb N$ and all $v_1,\dots,v_m\in[0,1]$, setting
$U_i:=\sum_{j<i}v_j$ and $M:=\sum_{i=1}^mv_i$,
\[
\sum_{i=1}^mv_i\,q(U_i)\le4\sqrt2\sqrt{\beta(M,\delta)M},
\qquad
\sum_{i=1}^mv_i\,q(U_i)^2\le8\Phi(M).
\]
\end{enumerate}
\end{lemma}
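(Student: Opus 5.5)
Part (i) is elementary calculus. The function \(z\mapsto\beta(z,\delta)z\) is a product of two nonnegative nondecreasing functions (Lemma~\ref{lem:beta-monotone}(i)), so its square root is nondecreasing. For \(\Phi\), write \(g(z)\coloneqq z/(4\beta(z,\delta))\), which is nondecreasing by Lemma~\ref{lem:beta-monotone}(ii). Where \(g(z)\le1\) we have \(\Phi(z)=\beta(z,\delta)\), which is nondecreasing. Where \(g(z)>1\) we have \(\Phi(z)=\beta(z,\delta)\bigl(1+\log z-\log 4-\log\beta(z,\delta)\bigr)\). Its derivative is \(\beta'(z)\bigl(\log g(z)\bigr)+\beta(z,\delta)/z\), and both terms are nonnegative since \(\beta'\ge0\) and \(\log g\ge0\). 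The two pieces agree at \(g=1\), so \(\Phi\) is continuous and hence nondecreasing.

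For part (ii), I would use the standard pseudo-count integral comparison, split by whether \(U_i\) is small. Let \(i_0\) be the first index with \(U_{i}\ge 4\beta(U_i,\delta)\); by monotonicity of \(g\), every later index also satisfies this. For \(i<i_0\) I bound \(q(U_i)\le1\). Since \(U_{i_0-1}<4\beta(U_{i_0-1},\delta)\le4\beta(M,\delta)\) and \(v_{i_0-1}\le1\), the total weight of these indices is at most \(4\beta(M,\delta)+1\). For \(i\ge i_0\), I have \(U_i\ge4\beta\ge4\), so \(q(U_i)=2\sqrt{\beta(U_i,\delta)/U_i}\le2\sqrt{\beta(M,\delta)}/\sqrt{U_i}\). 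Because \(v_i\le1\le U_i\), we get \(U_{i+1}\le2U_i\), and therefore
\[
\frac{v_i}{\sqrt{U_i}}\le\sqrt2\,\frac{v_i}{\sqrt{U_{i+1}}}\le\sqrt2\int_{U_i}^{U_{i+1}}\frac{dz}{\sqrt z}.
\]
Summing gives at most \(2\sqrt{2}\sqrt{M}\), so this portion is bounded by \(4\sqrt2\sqrt{\beta(M,\delta)M}\). The early portion must then be absorbed. If \(M\le4\beta(M,\delta)+1\), the whole sum is at most \(M\le\sqrt{M(4\beta+1)}\le\sqrt{5\beta M}\). Otherwise \(4\beta+1\le\sqrt{(4\beta+1)M}\). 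In either case a cleaner bookkeeping is needed to land exactly on the constant \(4\sqrt2\). I would handle this by integrating from \(0\) using \(q(z)\le\min\{1,2\sqrt{\beta(M)/z}\}\) and \(U_{i+1}\le 2U_i\) (or \(U_i<1\)), which gives roughly \(\int_0^M\min\{1,2\sqrt{\beta/z}\}dz\le4\sqrt{\beta M}\), times \(\sqrt2\).

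The squared bound is handled the same way. Early indices contribute at most \(4\beta(M)+1\le 5\beta(M,\delta)\). Late indices have \(q(U_i)^2=4\beta(U_i)/U_i\le 4\beta(M)/U_i\), and \(v_i/U_i\le 2\int_{U_i}^{U_{i+1}}dz/z\). The late sum is therefore at most \(8\beta(M,\delta)\log\bigl(M/U_{i_0}\bigr)\le8\beta(M,\delta)\log_+\bigl(M/(4\beta(M,\delta))\bigr)\). Here I use \(U_{i_0}\ge4\beta(U_{i_0},\delta)\), and I need to compare \(\beta(U_{i_0})\) with \(\beta(M)\). This is the main delicate point, since \(\beta\) is evaluated at different arguments. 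I would resolve it by replacing \(i_0\) with the first index where \(U_i\ge 4\beta(M,\delta)\). This is a cleaner threshold: before it, \(q\le1\) still holds trivially; after it, \(q^2\le4\beta(M)/U_i\) holds directly. Adding the two parts gives \(\le8\Phi(M)\).

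Overall the main obstacle is the constant-tracking at the transition index. Using the fixed threshold \(4\beta(M,\delta)\), rather than a threshold depending on \(i\), is what I expect to make both bounds go through cleanly.
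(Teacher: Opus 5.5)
Your final route for part (ii) is the paper's proof, and it closes with the stated constants. You dominate $q$ by the fixed-threshold envelope $\min\{1,\sqrt{4\beta(M,\delta)/z}\}$ and compare the sum with its integral over $[0,M]$, losing a factor $\sqrt2$ (resp.\ $2$ for squares) per interval. The differences are cosmetic: you get the per-interval loss from $U_{i+1}\le 2U_i$ when $U_i\ge1$ rather than from $v_i\le1\le a/4$, you differentiate $\Phi$ in part (i) instead of writing it as a product of nonnegative nondecreasing factors, and your preliminary index-dependent threshold $i_0$ is unnecessary.
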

\begin{proof}[Proof of Lemma~\ref{lem:radius-counting}]
\emph{Part (i).} The map \(z\mapsto\sqrt{\beta(z,\delta)z}\) is the product of two nonnegative, nondecreasing functions and is therefore nondecreasing by (i) from Lemma~\ref{lem:beta-monotone}. 

To analyze \(\Phi\), define \(\psi(z):=z/(4\beta(z,\delta))\), which is nondecreasing by (ii) from Lemma~\ref{lem:beta-monotone}. On the region where \(\psi(z)>1\),
\(
\Phi(z)=\beta(z,\delta)\bigl(1+\log\psi(z)\bigr),
\)
which is also nondecreasing because both factors are nonnegative and nondecreasing. The two expressions agree when \(\psi(z)=1\), so \(\Phi\) is nondecreasing on \([0,\infty)\).

\emph{Part (ii).}
If \(M=0\), then \(v_i=0\) for every \(i\), and both bounds holds immediately. Assume \(M>0\), and define
\[
a:=4\beta(M,\delta)\geq4,\qquad f(z):=\min\left\{1,\sqrt{\frac{a}{z}}\right\}\ \text{for }z>0,\qquad f(0):=1.
\]
The function \(f\) is nonincreasing. We use it as a upper bound for \(q\).

\emph{Claim 1.} For every \(i\), \(q(U_i)\leq f(U_i)\).

If \(U_i=0\), both sides equal \(1\). If \(U_i>0\), then \(U_i\leq M\) and (i) from Lemma~\ref{lem:beta-monotone} give
\[
\frac{4\beta(U_i,\delta)}{U_i\vee1}\leq\frac{4\beta(M,\delta)}{U_i}=\frac{a}{U_i}.
\]
The claim follows from the definitions of \(q\) and \(f\).

\emph{Claim 2.} For every \(z\in[U_i,U_i+v_i]\),
\[
f(U_i)\leq\sqrt2\,f(z).
\]

If \(U_i<a\), then \(f(U_i)=1\). Since \(v_i\leq1\),
\[
z\leq U_i+v_i<a+1,
\]
and hence
\[
f(z)\geq\sqrt{\frac{a}{a+1}}\geq\sqrt{\frac45}\geq\frac{1}{\sqrt2},
\]
where we used \(a\geq4\). If \(U_i\geq a\), then \(z\geq a\), and
\[
\frac{f(U_i)}{f(z)}=\sqrt{\frac{z}{U_i}}\leq\sqrt{1+\frac{v_i}{U_i}}\leq\sqrt{1+\frac1a}\leq\sqrt{\frac54}\leq\sqrt2.
\]
Thus, the claim holds in both cases. Taking the square of both sides also gives \(f(U_i)^2\leq2f(z)^2\).

We now convert the sums into integrals. Since \(U_1=0\) and \(U_{i+1}=U_i+v_i\), the intervals \([U_i,U_i+v_i]\) cover \([0,M]\) consecutively. Claims 1 and 2 imply
\[
v_iq(U_i)\leq\int_{U_i}^{U_i+v_i}\sqrt2\,f(z)\,\mathrm dz,\qquad v_iq(U_i)^2\leq\int_{U_i}^{U_i+v_i}2f(z)^2\,\mathrm dz.
\]
Summing over \(i\) yields
\[
\sum_{i=1}^m v_iq(U_i)\leq\sqrt2\int_0^M f(z)\,\mathrm dz,\qquad \sum_{i=1}^m v_iq(U_i)^2\leq2\int_0^M f(z)^2\,\mathrm dz.
\]

For the first integral, if \(M\leq a\), then \(f(z)=1\) on \([0,M]\), so
\[
\int_0^M f(z)\,\mathrm dz=M\leq\sqrt{aM}.
\]
If \(M>a\), then
\[
\int_0^M f(z)\,\mathrm dz=a+\int_a^M\sqrt{\frac{a}{z}}\,\mathrm dz=2\sqrt{aM}-a\leq2\sqrt{aM}.
\]
Therefore,
\[
\sum_{i=1}^m v_iq(U_i)\leq2\sqrt{2aM}=4\sqrt2\sqrt{\beta(M,\delta)M}.
\]

For the second integral, if \(M\leq a\), then \(\int_0^M f(z)^2\,\mathrm dz=M\leq a\). If \(M>a\), then
\[
\int_0^M f(z)^2\,\mathrm dz=a+\int_a^M\frac{a}{z}\,\mathrm dz=a\left(1+\log\frac{M}{a}\right).
\]
Thus, in both cases,
\[
\int_0^M f(z)^2\,\mathrm dz\leq a\left(1+\log_+\frac{M}{a}\right).
\]
Using \(a=4\beta(M,\delta)\), we conclude that
\[
\sum_{i=1}^m v_iq(U_i)^2\leq2a\left(1+\log_+\frac{M}{a}\right)=8\beta(M,\delta)\left(1+\log_+\frac{M}{4\beta(M,\delta)}\right)=8\Phi(M).
\]
\end{proof}

\begin{lemma}\label{lem:self-bounding}
Let \(C\geq1\) and suppose that \(T\geq0\) satisfies
\(
T\leq C\log^2\bigl(e(T+1)\bigr).
\)
Then,
\[
T\leq64C\log^2(64eC).
\]
\end{lemma}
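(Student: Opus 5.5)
The plan is to turn the implicit inequality into an explicit one by bounding the logarithm, splitting into two cases according to whether \(T\) is small or large compared to \(C\). If \(T\le 64C\log^2(64eC)\) there is nothing to prove, so suppose the opposite, \(T>64C\log^2(64eC)\). In particular \(T>64C\geq 64\), so \(T+1\le 2T\) and \(\log(e(T+1))\le \log(2eT)\).

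The core step is a concavity-type bound on \(\log\), similar in spirit to the proof of Lemma~\ref{lem:log_threshold}. For any \(x>0\) and \(k>0\) we have \(\log x\le \log k + x/k - 1 \le \log k + x/k\). I will apply this to \(x=\sqrt{2eT}\), giving \(\tfrac12\log(2eT)\le \log k+\sqrt{2eT}/k\). The constant \(k\) is to be chosen proportional to \(\sqrt{C}\), say \(k=8\sqrt{eC}\) or similar. Then
\[
\sqrt{T}\le \sqrt{C}\,\log(e(T+1))\le 2\sqrt{C}\log k + \frac{2\sqrt{C}\sqrt{2eT}}{k},
\]
and with \(k\) large enough that \(2\sqrt{2eC}/k\le 1/2\), the last term is at most \(\sqrt T/2\). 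This gives \(\sqrt T\le 4\sqrt C\log k\), so \(T\le 16C\log^2 k\).

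Finally, I will check that \(16C\log^2 k\le 64C\log^2(64eC)\). This amounts to \(\log k\le 2\log(64eC)\), which holds since \(k=O(\sqrt{C})\) while \(C\ge1\). That contradicts the case assumption, or simply yields the bound directly.

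The main obstacle is the constant bookkeeping. One has to choose \(k\) so that the linear term is absorbed with factor \(1/2\), and also confirm that \(\log k\) stays below \(2\log(64eC)\). For instance \(k=4\sqrt{2eC}\) gives \(2\sqrt{2eC}/k=1/2\) and \(k^2=32eC\le(64eC)^2\). I would verify these numerics carefully but expect no conceptual difficulty.
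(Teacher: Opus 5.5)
Your proof is correct, and it takes a different route from the paper. The paper shows that \(g(z)=z/\log^2(e(z+1))\) is increasing, which needs a derivative computation and the minimization of \(1+\log(1+z)-2z/(1+z)\). It then rewrites the hypothesis as \(g(T)\le C\) and verifies \(g(T_0)\ge C\) at \(T_0=64C\log^2(64eC)\), using crude estimates such as \(L^2\le x\) and \(xL^2+e\le x^8\); monotonicity finishes the argument. You instead linearize the logarithm with the tangent-line bound \(\log x\le \log k+x/k-1\) at scale \(x=\sqrt{2eT}\). With \(k=4\sqrt{2eC}\), the hypothesis becomes \(\sqrt T\le 2\sqrt C\log k+\sqrt T/2\), so the \(\sqrt T\) term is absorbed directly, which is the same device used in Lemma~\ref{lem:log_threshold}. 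Your route needs only concavity of \(\log\), and the only case distinction it actually uses is \(T\ge 1\), to get \(T+1\le 2T\). It also gives the explicit bound \(T\le 4C\log^2(32eC)\), smaller than the stated \(64C\log^2(64eC)\) by more than a factor of \(16\). The paper's monotone-inversion template is more mechanical, and it would transfer to other implicit forms \(T\le C\,\phi(T)\) once \(T/\phi(T)\) is known to be increasing. Here, though, it costs a calculus step and yields looser constants.
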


\begin{proof}[Proof of Lemma~\ref{lem:self-bounding}]
Define
\[
g(z)\coloneqq\frac{z}{\log^2(e(z+1))},\qquad z\geq0.
\]
We first show that \(g\) is increasing. Indeed,
\[
g'(z)=\frac{1}{(1+\log(1+z))^3}\left(1+\log(1+z)-\frac{2z}{1+z}\right).
\]
Note that the expression in parentheses is minimized at \(z=1\), where it equals \(\log2>0\). Hence, \(g'(z)>0\) for all \(z\geq0\).

Now set
\[
x\coloneqq64eC,\qquad L\coloneqq\log x,\qquad T_0\coloneqq64CL^2.
\]
We next show that \(g(T_0)\geq C\). Since \(x\geq64e\), we have \(L^2\leq x\) and \(e\leq x\). Therefore,
\[
e(T_0+1)=xL^2+e\leq x^2+x\leq x^8.
\]
Taking logarithms gives
\[
\log\bigl(e(T_0+1)\bigr)\leq8\log x=8L.
\]
Consequently,
\[
g(T_0)=\frac{T_0}{\log^2(e(T_0+1))}\geq\frac{64CL^2}{64L^2}=C.
\]

The hypothesis $T\le C\log^2(e(T+1))$ gives $g(T)\le C$. If \(T>T_0\), the monotonicity of \(g\) would imply \(g(T)>g(T_0)\geq C\), which is a contradiction. Therefore,
\(
T\leq T_0=64C\log^2(64eC).
\)
\end{proof}

\end{document}